\newcommand*{\ldblbrace}{\{\mskip-5mu\{}
\newcommand*{\rdblbrace}{\}\mskip-5mu\}}
\newcommand{\ldbl}{\{\!\!\{}
\newcommand{\rdbl}{\}\!\!\}}
\newcommand{\Nb}{\mathbb{N}}
\newcommand{\wl}[1]{\mathsf{WL}}
\newcommand{\wlk}[1]{#1\text{-}\mathsf{WL}}
\newcommand{\MPNN}{\mathsf{MPNN}}
\newcommand{\MPNNs}{\mathsf{MPNN}\text{s}}
\newcommand{\GNN}{\mathsf{GNN}}
\newcommand{\GNNs}{\mathsf{GNN}\text{s}}
\newcommand{\GSN}{\mathsf{GSN}}
\newcommand{\GSNs}{\mathsf{GSN}\text{s}}
\newcommand{\homc}[1]{\mathsf{hom}(#1)}
\newtheorem{proposition}{Proposition}
\newtheorem{theorem}{Theorem}
\newtheorem{lemma}{Lemma}
\title{Graph Neural Networks with Local Graph Parameters}
\author[1]{Pablo Barcel\'o}
\author[2]{Floris Geerts}
\author[1]{Juan Reutter}
\author[2]{Maksimilian Ryschkov}
\affil[1]{PUC Chile and IMFD Chile}
\affil[2]{University of Antwerp}
\date{}
\begin{document}
\maketitle
	
\begin{abstract}
Various recent proposals increase the distinguishing power of Graph Neural Networks $(\GNNs)$ by propagating features between $k$-tuples of vertices. The distinguishing power of these ``higher-order'' $\GNNs$ is known to be bounded by the $k$-dimensional Weisfeiler-Leman ($\mathsf{WL}$) test, yet their $\mathcal O(n^k)$ memory requirements limit their applicability. Other proposals infuse $\GNNs$ with local higher-order graph structural information from the start, hereby inheriting the desirable $\mathcal O(n)$ memory requirement from $\GNNs$ at the cost of a one-time, possibly non-linear, preprocessing step. We propose local graph parameter enabled $\GNNs$ as a framework for studying the latter kind of approaches and precisely characterize their distinguishing power, in terms of a variant of the $\mathsf{WL}$ test, and in terms of the graph structural properties that they can take into account. Local graph parameters can be added to any $\GNN$ architecture, and are cheap to compute. 
In terms of expressive power, our proposal lies in the middle of $\GNNs$ and their higher-order counterparts. Further, we propose several techniques to aide in choosing the right local graph parameters. Our results connect $\GNNs$ with deep results in finite model theory and finite variable logics.
Our experimental evaluation shows that adding local graph parameters often has a positive effect for a variety of $\GNNs$, datasets and graph learning tasks.\looseness=-1
\end{abstract}	

\section{Introduction}\label{sec:intro}

\paragraph{Context.} 
Graph neural networks ($\GNNs$) \citep{MerkwirthL05,scarselli2008graph}, and its important class of Message Passing Neural Networks ($\MPNNs$) \citep{GilmerSRVD17}, are one of the most popular methods for graph learning tasks. Such $\MPNNs$ use an iterative message passing scheme, based on the adjacency structure of the underlying graph, to compute vertex (and graph) embeddings in some real Euclidean space.
The expressive (or discriminative) power of $\MPNNs$ is, however, rather limited \citep{xhlj19,grohewl}. Indeed, $\MPNNs$ will always identically embed two vertices (graphs) when these vertices (graphs) cannot be distinguished by the one-dimensional Weisfeiler-Leman ($\wl{}$) algorithm. Two graphs $G_1$ and $H_1$ and vertices $v$ and $w$ that cannot be distinguished by $\wl{}$ (and thus any $\MPNN$) are shown in Fig.~\ref{fig:wlequivalent}. The expressive power of $\wl{}$ is well-understood \citep{CaiFI92,ARVIND202042,DellGR18} and basically can only use \textit{tree-based} structural information in the graphs to distinguish vertices. As a consequence, no $\MPNN$ can detect that vertex $v$ in Fig.~\ref{fig:wlequivalent}
is part of a $3$-clique, whereas $w$ is not. Similarly, $\MPNNs$ cannot detect that $w$ is part of a $4$-cycle, whereas $v$ is not. Further limitations of $\wl{}$ in terms of graph properties can be found, e.g., in \citet{Furer17,ARVIND202042,chen2020graph,tahmasebi2020counting}.

To remedy the weak expressive power of $\MPNNs$, so-called \textit{higher-order} $\MPNNs$  
were proposed \citep{grohewl,maron2018invariant,Morris2020WeisfeilerAL}, whose expressive power is measured in terms of the $k$-dimensional $\wl{}$ procedures ($\wlk{k}$)  \citep{DBLP:conf/nips/MaronBSL19,chen2019powerful,azizian2020characterizing,geerts2020expressive,Sato2020ASO,damke2020novel}. In a nutshell, $\wlk{k}$ operates on $k$-tuples of vertices and allows to distinguish vertices (graphs) based on structural information related to \textit{graphs of treewidth $k$}  \citep{dvorak,DellGR18}. 
By definition, $\wl{} = \wlk{1}$. 
As an example, $\wlk{2}$ can detect that vertex $v$ in Fig.~\ref{fig:wlequivalent} belongs to a $3$-clique or a $4$-cycle since both have treewidth two. While 
more expressive than $\wl{}$, the $\GNNs$ based on $\wlk{k}$ require $\mathcal{O}(n^k)$ operations in \textit{each iteration}, where $n$ is the number of vertices, hereby hampering their applicability. 

\begin{figure}[t]
    \centering
\begin{tikzpicture}[main/.style = {draw, circle , minimum size=2.6mm,line width=1pt,node distance=0.6cm},main2/.style = {draw, circle , minimum size=4mm,line width=1pt,fill=gray!20,node distance=0.8cm}] 
\node[main,label=above:{$v$},label=right:{$(2)$}] (1) {}; 
\node[main,label=left:{$(2)$}] (2) [below left of=1] {}; 
\node[main,label=right:{$(2)$}] (3) [below right of=1]{};
\node[main,label=left:{$(2)$}] (4) [below of=2] {};
\node[main,label=right:{$(2)$}] (5) [below of=3] {};
\node[main,label=right:{$(2)$}] (6) [below right of=4] {};
\node (7) [below=0cm and 0.7cm of 6] {$G_1$};
\draw[line width=1pt] (1) -- (2);
\draw[line width=1pt] (1) -- (3);
\draw[line width=1pt] (2) -- (3);
\draw[line width=1pt] (3) -- (4);
\draw[line width=1pt] (4) -- (5);
\draw[line width=1pt] (4) -- (6);
\draw[line width=1pt] (5) -- (6);
\end{tikzpicture} \hspace{1cm}
\begin{tikzpicture}[main/.style = {draw, circle , minimum size=2.6mm,line width=1pt,node distance=0.6cm},main2/.style = {draw, circle , minimum size=4mm,line width=1pt,fill=gray!20,node distance=0.8cm}]
\node[main,label=above:{$w$},label=right:{$(0)$}] (1) {}; 
\node[main,label=left:{$(0)$}] (2) [below left of=1] {}; 
\node[main,label=right:{$(0)$}] (3) [below right of=1]{};
\node[main,label=left:{$(0)$}] (4) [below of=2] {};
\node[main,label=right:{$(0)$}] (5) [below of=3] {};
\node[main,label=right:{$(0)$}] (6) [below right of=4] {};
\node (7) [below=0cm and 0.7cm of 6] {$H_1$};
\draw[line width=1pt] (1) -- (2);
\draw[line width=1pt] (1) -- (3);
\draw[line width=1pt] (2) -- (4);
\draw[line width=1pt] (3) -- (5);
\draw[line width=1pt] (3) -- (4);
\draw[line width=1pt] (4) -- (6);
\draw[line width=1pt] (5) -- (6);
\end{tikzpicture} 
    \caption{Two graphs that are indistinguishable by the $\wl{}$-test. The numbers between round brackets indicate how many homomorphic images of the $3$-clique each vertex is involved in.}
    \label{fig:wlequivalent}
\end{figure}
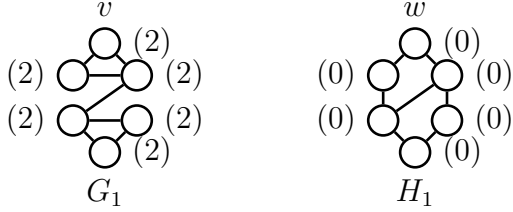

A more practical approach is to extend the expressive power of $\MPNNs$ \textit{whilst preserving their $\mathcal{O}(n)$ cost in each iteration}. Various such extensions  \citep{kipf-loose,chen2019powerful,li2019hierarchy,ishiguro2020weisfeilerlehman,bouritsas2020improving,geerts2020lets}  achieve this by infusing $\MPNNs$ with \textit{local graph structural information from the start}. That is, the iterative message passing scheme of $\MPNNs$ is run on vertex labels that contain quantitative information about local graph structures. It is easy to see that such architectures can go beyond the $\wl{}$ test: for example, 
adding triangle counts to $\MPNNs$ suffices to distinguish the vertices $v$ and $w$ and graphs $G_1$ and $H_1$ in Fig.~\ref{fig:wlequivalent}. Moreover, the cost is 
 \textit{a single preprocessing step} to count local graph parameters, thus maintaining the  $\mathcal{O}(n)$ cost in the iterations of the $\MPNN$. 
While there are some partial results showing that local graph parameters increase expressive power \citep{bouritsas2020improving,li2019hierarchy},  their precise expressive power and relationship to higher-order $\MPNNs$ was unknown, and  there is little guidance in terms of which local parameters do help $\MPNNs$ and which ones do not. 
The main contribution of this paper is a precise characterization of  the expressive power of $\MPNNs$ with local graph parameters and its relationship to the hierarchy of higher-order 
$\MPNNs$.

\paragraph{Our contributions.} 
In order to nicely formalize local graph parameters, we propose to extend vertex labels with \textit{homomorphism counts} of small graph patterns.\footnote{We recall that homomorphisms are edge-preserving mappings between the vertex sets.}  More precisely, given a graphs $P$ and $G$, and vertices $r$ in $P$ and $v$ in $G$, we add the number of homomorphisms from $P$ to $G$ that map $r$ to $v$, denoted by $\homc{P^r,G^v}$, to the initial features of $v$. 
Such counts satisfy conditions (i) and (ii). Indeed, homomorphism counts are known to \textit{measure the similarity} of vertices and graphs \citep{MR214529,Grohe20}, and  
serve as a \textit{basis for the efficient computation} of a number of other important graph parameters, e.g., subgraph and induced subgraphs counts \citep{Curticapean_2017,ZhangY0ZC20}. Furthermore, homomorphism counts underly \textit{characterisations of the expressive power} of $\MPNNs$ and higher-order $\MPNNs$. As an example, two vertices $v$ and $w$ in graphs $G$ and $H$, respectively, are indistinguishable by $\wl{}$, and hence by $\MPNNs$, precisely      when $\homc{T^r,G^v} = \homc{T^r,H^w}$ for every rooted tree $T^r$ \citep{dvorak,DellGR18}.

Concretely, we propose $\mathcal F$-$\MPNNs$ where $\mathcal F = \{P_1^r,\allowbreak \dots,\allowbreak P_\ell^r\}$ is a set of (graph) patterns,
by (i) first allowing a pre-processing step that labels each vertex $v$ of a graph $G$ with the vector $\bigl(\homc{P^r_1,G^v},\ldots,\homc{P^r_\ell,G^v}\bigr)$, and (ii) then run
an $\MPNN$ on this labelling. As such, we can turn \textit{any} $\MPNN$ into an $\mathcal F$-$\MPNN$
by simply augmenting the initial vertex embedding. Furthermore,
several recently proposed extensions
of $\MPNNs$ fit in this approach, including $\MPNNs$ extended with information about vertex degrees \citep{kipf-loose}, walk counts \citep{chen2019powerful}, tree-based counts \citep{ishiguro2020weisfeilerlehman} and subgraph counts \citep{bouritsas2020improving}. Hence, $\mathcal F$-$\MPNNs$ can also be regarded as a unifying theoretical formalism.

\smallskip
\noindent
Our main results can be summarised, as follows:

\begin{enumerate}
    \item We precisely characterise the expressive power of $\mathcal F$-$\MPNNs$ by means of an extension of $\wl{}$, denoted by $\mathcal F$-$\wl{}$. 
For doing so, we use {\em $\mathcal F$-pattern trees}, which are obtained from standard trees by joining an arbitrary number of copies of the patterns in $\mathcal F$ 
to each one of its vertices. 
Our result states 
that vertices $v$ and $w$ in graphs $G$ and $H$, respectively, are indistinguishable by $\mathcal F$-$\wl{}$, and hence by $\mathcal F$-$\MPNNs$, precisely
    when $\homc{T^r,G^v} = \homc{T^r,H^w}$ for every $\mathcal F$-pattern tree $T^r$.  This characterisation gracefully extends the characterisation for standard $\MPNNs$, mentioned earlier, by  setting $\mathcal F = \emptyset$.  Furthermore,
$\mathcal F$-$\MPNNs$ provide insights in the expressive power of existing $\MPNN$ extensions, most notably the Graph Structure Networks of \citet{bouritsas2020improving}.
	
    \item 	 We compare  $\mathcal F$-$\MPNNs$ to higher-order $\MPNNs$, which are characterized in terms of the $\wlk{k}$-test. 
    On the one hand, while $\mathcal F$-$\MPNNs$ strictly increase the expressive power of the $\wl{}$-test, for any finite set 
$\mathcal F$ of patterns, $\wlk{2}$ can distinguish graphs which $\mathcal F$-$\MPNNs$ cannot.
 On the other hand, for each $k \geq 1$ there are patterns $P$ 
such  that $\{P\}$-$\MPNNs$ can distinguish graphs which $\wlk{k}$ cannot.

    \item We deal with the technically challenging problem of pattern selection and comparing 
	$\mathcal F$-$\MPNNs$ based on the patterns included in  $\mathcal F$. We prove two partial results: one establishing when a pattern $P$ in $\mathcal F$ is redundant, based on whether or not $P$ is the join of other patterns in $\mathcal F$, and another result indicating when $P$ does add expressive power, based on the treewidth of $P$ compared to the treewidth of other patterns in $\mathcal F$.
	
	\item Our theoretical results are complemented by an experimental study in which we show that for various $\GNN$ architectures, datasets and graph learning tasks, all part of the recent benchmark by \citet{dwivedi2020benchmarkgnns}, the augmentation of initial features with homomorphism counts of graph patterns has often a positive effect, and the cost for computing these counts
incurs little to no overhead. 
\end{enumerate}
As such, we believe that $\mathcal F$-$\MPNNs$ not only provide an elegant theoretical framework
for understanding local graph parameter enabled $\MPNNs$, they are also a valuable alternative to higher-order $\MPNNs$ as way to increase the expressive power of $\MPNNs$.
In addition, and as will be explained in Section \ref{sec:lgp}, $\mathcal F$-$\MPNNs$ provide a unifying framework for understanding the expressive power of several other existing extensions of $\MPNNs$. Proofs of our results and further details on the relationship to existing approaches and experiments can be found in the appendix.

\paragraph{Related Work.}
Works related to the distinguishing power of the $\wl{}$-test, $\MPNNs$ and their higher-order variants
are cited throughout the paper. 
Beyond distinguishability, $\GNNs$ are analyzed in terms of universality and generalization properties \citep{DBLP:journals/corr/abs-1901-09342,Keriven,Chen2019,garg2020generalization}, local distributed
algorithms \citep{DBLP:conf/nips/SatoYK19,Loukas2020What}, randomness in features  \citep{sato2020random,abboud2020surprising} and using local context matrix features  \citep{vignac2020building}. 
Other extensions of $\GNNs$ are surveyed, e.g., in
\citet{Zonghan2019,Zhou2018} and \citet{chami2021machine}. 
Related are the Graph Homomorphism Convolutions by \citet{nt2020graph}  which apply $\mathsf{SVM}$s directly on the representation of vertices by homomorphism counts. 
Finally, our approach is reminiscent of the graph representations by means of graphlet kernels \citep{shervashidze09a},  but then on the level of vertices. 

\section{\boldmath Local Graph Parameter Enabled $\MPNNs$}\label{sec:lgp} 
In this section we introduce $\MPNNs$ with local graph parameters. We start by introducing preliminary concepts.

\paragraph{Graphs.}
We consider undirected vertex-labelled graphs $G=(V,E,\chi)$, with $V$ the set of vertices, $E$ 
the set of edges and $\chi$ a mapping assigning a label to each vertex in $V$. 
The set of neighbours of a vertex is denoted as $N_G(v) = \bigl\{u \in V \bigm\vert \{u,v\}\in E \bigr\}$.
A \textit{rooted graph} is a graph in which one its vertices is declared as its root.
We denote a rooted graph by $G^v$, where $v\in V$ is the root and depict them as graphs in which the root is a blackened vertex. 
Given graphs $G = (V_G,E_G,\chi_G)$ and $H = (H_H,E_H,\chi_H)$, an {\em homomorphism} $h$ is a mapping $h:V_G \to V_H$ such that (i) $\{h(u),h(v)\}\in E_H$ for every $\{u,v\}\in E_G$, and (ii) 
$\chi_G(u) = \chi_H(h(u))$ for every $u \in V_G$. 
For rooted graphs $G^v$ and $H^w$, an homomorphism must additionally map $v$ to $w$. 
We denote by 
$\homc{G,H}$ the number of homomorphisms from $G$ to $H$; similarly for rooted graphs. For simplicity of exposition we focus on vertex-labelled undirected graphs but all our results can be extended to edge-labelled directed graphs.

\paragraph{Message passing neural networks.}
The basic architecture for $\MPNNs$ \citep{GilmerSRVD17}, 
and the one used in recent studies on 
GNN expressibility \citep{grohewl,xhlj19,barcelo2019logical}, consists of a sequence of rounds that update the feature vector of every vertex in the graph by combining its current feature vector with the result of an aggregation over the feature vectors of its neighbours.  Formally, for a graph $G=(V,E,\chi)$, let $\mathbf{x}_{M,G,v}^{(d)}$ denote the feature vector computed for vertex $v\in V$ by an $\MPNN$ $M$ in round $d$. The initial feature vector $\mathbf{x}_{M,G,v}^{(0)}$ is a one-hot encoding of its label $\chi(v)$.  
This feature vector is iteratively updated in a number of rounds. In particular, in round $d$,
$$
\mathbf{x}_{M,G,v}^{(d)} \, := \, \textsc{Upd}^{(d)}\Big(\mathbf{x}_{M,G,v}^{(d-1)}, 
\textsc{Comb}^{(d)} \big( \ldblbrace \mathbf{x}_{M,G,u}^{(d-1)}
\, \mid \, u\in N_G(v) \rdblbrace \big)\Big),
$$
where $\textsc{Comb}^{(d)}$ and $\textsc{Upd}^{(d)}$ are an \textit{aggregating} and \textit{update} function, respectively. 
Thus, the feature vectors $\mathbf{x}_{M,G,u}^{(d-1)}$ of all neighbours $u$ of $v$ are combined by the aggregating function $\textsc{Comb}^{(d)}$ into a single vector, 
and then this vector is used together with $\mathbf{x}_{M,G,v}^{(d-1)}$ in order to produce $\mathbf{x}_{M,G,v}^{(d)}$ by applying the update function 
$\textsc{Upd}^{(d)}$.  

\paragraph{\boldmath $\MPNNs$ with local graph parameters.}
The $\GNNs$ studied in this paper leverage the power of $\MPNNs$ 
by enhancing initial features of vertices with \textit{local graph parameters} that are beyond their 
classification power. To illustrate the idea, consider the graphs in Fig.~\ref{fig:wlequivalent}. As mentioned, these graphs cannot be distinguished by the $\wl{}$-test,
 and therefore cannot be distinguished by the broad class of $\MPNNs$ (see e.g. \citep{xhlj19,grohewl}). 
If we allow a \textit{pre-processing stage}, however, in which the initial labelling of every vertex $v$ is extended with 
the number of (homomorphic images of) $3$-cliques in which $v$  participates (indicated by numbers between brackets in Fig.~\ref{fig:wlequivalent}), then clearly 
vertices $v$ and $w$ (and the graphs $G_1$ and $H_1$) can be distinguished based on this extra structural information. In fact, the initial labelling already suffices for this purpose. 

Let $\mathcal F = \{P_1^r, \dots,P_\ell^r\}$ be a set of (rooted) graphs, which we refer to as \emph{patterns}. Then, \textit{$\mathcal F$-enabled $\MPNNs$}, or just $\mathcal F$-$\MPNNs$, 
are defined in the same way as $\MPNNs$ with the crucial difference that now  
the initial feature vector of a vertex $v$ is a one-hot encoding of the label $\chi_G(v)$ of the vertex, and all the homomorphism counts from patterns in $\mathcal F$. Formally, 
in each round $d$ an $\mathcal F$-$\MPNN$ $M$ labels each vertex $v$ in graph $G$ with a feature vector $\mathbf{x}_{M,\mathcal F, G,v}^{(d)}$ which is inductively defined as follows: 
\begin{align*} \mathbf{x}_{M,\mathcal F, G,v}^{(0)} & := \big(\chi_G(v), \homc{P_1^r,G^v},\dots,\homc{P_\ell^r,G^v} \big)  \label{eq:extend} \\
\mathbf{x}_{M,\mathcal F, G,v}^{(d)}  & := \, \textsc{Upd}^{(d)}\Big(\mathbf{x}_{M,\mathcal F, G,v}^{(d-1)},  
\textsc{Comb}^{(d)} \big( \ldblbrace \mathbf{x}_{M,\mathcal F, G,v}^{(d-1)}
\, \mid \, u\in N_G(v) \rdblbrace \big)\Big). 
\end{align*}
We note that standard $\MPNNs$ are $\mathcal F$-$\MPNNs$ with $\mathcal{F}=\emptyset$.
As for $\MPNNs$, we can equip $\mathcal F$-$\MPNNs$ with a \textsc{Readout} function that aggregates all final feature vectors into a single feature vector in order to classify or distinguish 
graphs.

We emphasise that any $\MPNN$ architecture can be turned in an $\mathcal{F}$-$\MPNN$ by a simple
homomorphism counting preprocessing step. As such, we propose a \textit{generic plug-in for a large class of $\GNN$ architectures}. Better still, homomorphism counts of small graph patterns can be efficiently computed even on large datasets \citep{ZhangY0ZC20} and they form the basis for counting (induced) subgraphs and other notions of subgraphs \citep{Curticapean_2017}. Despite the simplicity of $\mathcal F$-$\MPNNs$, we will show that they can substantially increase the power of $\MPNNs$ by varying $\mathcal F$, only paying the one-time cost for preprocessing. 

\paragraph{\boldmath $\mathcal F$-$\MPNNs$ as unifying framework.}
An important aspect of $\mathcal F$-$\MPNNs$ is that they \textit{allow a principled analysis of the power of existing extensions of $\MPNNs$}.
For example, taking $\mathcal{F}=\{\raisebox{-1.1\dp\strutbox}{\includegraphics[height=2.7ex]{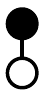}}\}$ suffices to capture
degree-aware $\MPNNs$ \citep{geerts2020lets}, such as the Graph Convolution Networks ($\mathsf{GCN}$s) \citep{kipf-loose}, which use the \textit{degree of vertices}; taking 
 $\mathcal F=\{L_1,L_2,\ldots,L_\ell\}$ for rooted paths $L_i$ of length $i$ suffices to model the \textit{walk counts} used in \citet{chen2019powerful}; and taking $\mathcal F$ as the set of labeled trees of depth one precisely corresponds to the use of the \textit{$\wl{}$-labelling obtained after one round} by  \citet{ishiguro2020weisfeilerlehman}. Furthermore, $\{C_\ell\}$-$\MPNNs$, where $C_\ell$ denotes the cycle of length $\ell$, correspond to the extension proposed in Section 4 in  \citet{li2019hierarchy}. 
 
In addition, $\mathcal F$-$\MPNNs$ can also capture the \textit{Graph Structure Networks} ($\GSNs$) by  \citet{bouritsas2020improving}, which use subgraph isomorphism counts of graph patterns. We recall that an isomorphism from $G$ to $H$ is a \textit{bijective} homomorphism $h$ from $G$ to $H$ which additionally satisfies (i) $\{h^{-1}(u),h^{-1}(v)\}\in E_G$ for every $\{u,v\}\in E_H$, and (ii) $\chi_G(h^{-1}(u)) = \chi_H(u)$ for every $u \in V_H$. When $G$ and $H$ are rooted graphs, isomorphisms should preserve the roots as well. 
Now, in a $\GSN$, the feature vector of each vertex $v$ is augmented with the the counts of every isomorphism from a rooted pattern $P^r$ to $G^v$, for rooted patterns $P$ in a set of patterns $\mathcal P$,\footnote{The use of orbits in \citet{bouritsas2020improving} is here ignored, but explained in the appendix.} and this is followed by the execution of an $\MPNN$, just as for our $\mathcal F$-$\MPNNs$.  It now remains  to observe that subgraph isomorphism counts can be computed in terms of homomorphism counts, and vice versa \citep{Curticapean_2017}. 
That is, $\GSNs$ can be viewed as $\mathcal F$-$\MPNNs$ and thus our results for  $\mathcal F$-$\MPNNs$ carry over to $\GSNs$. We adopt homomorphism counts instead of subgraph isomorphism counts because homomorphisms counts underly existing characterizations of the expressive power of $\MPNNs$ and homomorphism counts are in general more efficient to compute. Also, \citet{Curticapean_2017} indicate that all common graph counts are interchangeable in terms of expressive power.

\begin{figure}[t]
    \centering
\begin{tikzpicture}[main/.style = {draw, circle , minimum size=2.3mm,line width=1pt,node distance=0.8cm},main2/.style = {draw, circle , minimum size=4mm,line width=1pt,fill=gray!20,node distance=0.8cm}] 
\node[main] (1) {}; 
\node[main] (2) [below left of=1] {}; 
\node[main] (3) [below right of=1]{};
\node[main] (4) [below of=2] {};
\node[main] (6) [below of=3] {};
\node[main,label={[label distance=-1mm]above:{$v$}}] (5) [right=0.2cm of 4] {};
\node[main] (7) [below of=4] {};
\node[main] (8) [below of=6] {};
\node[main] (9) [below right of=7] {};
\node (10) [below=0cm of 9] {$G_2$};
\draw[line width=1pt] (1) -- (2);
\draw[line width=1pt] (1) -- (3);
\draw[line width=1pt] (2) -- (3);
\draw[line width=1pt] (2) -- (4);
\draw[line width=1pt] (3) -- (6);
\draw[line width=1pt] (4) -- (5);
\draw[line width=1pt] (4) -- (7);
\draw[line width=1pt] (5) -- (6);
\draw[line width=1pt] (6) -- (8);
\draw[line width=1pt] (7) -- (8);
\draw[line width=1pt] (7) -- (9);
\draw[line width=1pt] (8) -- (9);
\end{tikzpicture} \hspace{1cm}
\begin{tikzpicture}[main/.style = {draw, circle , minimum size=2.3mm,line width=1pt,node distance=0.8cm},main2/.style = {draw, circle , minimum size=4mm,line width=1pt,fill=gray!20,node distance=0.8cm}]
\node[main] (1) {}; 
\node[main] (2) [above left of=1] {}; 
\node[main] (3) [above right of=1]{};
\node[main] (4) [below left of=1] {};
\node[main,label={[label distance=-2mm]above left:{$w$}}] (5) [right=0.2cm of 4] {};
\node[main] (6) [below right of=1] {};
\node[main] (8) [below=0.2cm of 5] {};
\node[main] (7) [below left of=8] {};
\node[main] (9) [below right of=8] {};
\node (10) [below left=0cm and 0.04cm of 9] {$H_2$};
\draw[line width=1pt] (1) -- (2);
\draw[line width=1pt] (1) -- (3);
\draw[line width=1pt] (1) -- (5);
\draw[line width=1pt] (2) -- (3);
\draw[line width=1pt] (2) -- (4);
\draw[line width=1pt] (3) -- (6);
\draw[line width=1pt] (4) -- (7);
\draw[line width=1pt] (6) -- (9);
\draw[line width=1pt] (7) -- (9);
\draw[line width=1pt] (7) -- (8);
\draw[line width=1pt] (9) -- (8);
\draw[line width=1pt] (8) -- (5);
\end{tikzpicture} 
    \caption{Two graphs and vertices that cannot be distinguished by the $\wl{}$-test, and therefore by standard $\MPNNs$. When considering $\mathcal F$-$\MPNNs$, with $\mathcal{F}=\{\raisebox{-1.1\dp\strutbox}{\includegraphics[height=3ex]{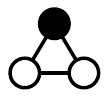}}\}$, one can show that both graphs cannot be distinguished just by focusing on the initial labelling, but they can be distinguished by an $\mathcal{F}$-$\MPNN$ with just one aggregation layer.  
    }	
    \label{fig:fwl_power_rounds}
\end{figure}

\section{\boldmath Expressive Power of $\mathcal{F}$-$\MPNNs$}\label{sec:exp-power}
Recall that the standard $\wl{}$-test \citep{Weisfeiler1968,grohe_2017} iteratively constructs a labelling of the vertices in a graph $G = (V,E,\chi)$ 
as follows. In round $d$, for each  vertex $v$ the algorithm first collects the label of $v$ and all of its neighbours after round $d-1$, 
and then it hashes this aggregated multiset 
of labels into a new label for $v$. The initial label of $v$ is  $\chi(v)$. 
As shown in \citet{xhlj19} and \citet{grohewl}, the $\wl{}$-test provides a bound on the classification power of $\MPNNs$: if two vertices or two graphs are indistinguishable by the 
$\wl{}$ test, then they will not be distinguished by any $\MPNN$. 

In turn, the expressive power of the $\wl{}$-test, and thus of $\MPNNs$, can be characterised in terms of homomorphism counts of trees \citep{dvorak,DellGR18}. 
This result can be seen as a characterisation of the expressiveness of the $\wl{}$-test in terms of a particular infinite-dimensional 
graph kernel: the one defined by 
the number of homomorphisms from every tree $T$ into the underlying graph $G$. 

In this section we show that both characterisations extend in an elegant way to the setting of $\mathcal{F}$-$\MPNNs$, confirming that 
$\mathcal{F}$-$\MPNNs$ are not just a useful, but also a well-behaved generalisation of standard $\MPNNs$.  
\subsection{\boldmath Characterisation in terms of $\wlk{\mathcal{F}}$}
We bound the expressive power of $\mathcal{F}$-$\MPNNs$ in terms of what we call the  {\em $\wlk{\mathcal{F}}$-test}. 
Formally, the $\wlk{\mathcal{F}}$-test extends $\wl{1}$ in the same way as $\mathcal{F}$-$\MPNNs$ extend standard $\MPNNs$: 
by including homomorphism counts of patterns in $\mathcal F$ in the initial labelling. 
That is, let $\mathcal{F}=\{P_1^r,\dots,P_\ell^r\}$. The $\wlk{\mathcal{F}}$-test is a vertex labelling algorithm 
that iteratively computes a label $\chi_{\mathcal{F},G,v}^{(d)}$ for each vertex $v$ of a graph $G$, defined as follows. 
\begin{align*} 
\chi_{\mathcal{F},G,v}^{(0)} \, & := \, \bigl(\chi_G(v),\homc{P_1^r,G^v},\dots,\homc{P_\ell^r,G^v}\bigr) \\
\chi_{\mathcal{F},G,v}^{(d)} & := \, \textsc{Hash}\bigl(\chi_{\mathcal{F},G,v}^{(d-1)},\ldbl \chi_{\mathcal{F},G,u}^{(d-1)}\mid u\in N_G(v) \rdbl\bigr).
\end{align*} 
The $\wlk{\mathcal{F}}$-test stops in round $d$ when no new pair of vertices are identified by means of $\chi_{\mathcal{F},G,v}^{(d)}$, that is, 
when for any two  vertices $v_1$ and $v_2$ from $G$,  $\chi_{\mathcal{F},G,v_1}^{(d-1)} = \chi_{\mathcal{F},G,v_2}^{(d-1)}$ implies 
$\chi_{\mathcal{F},G,v_1}^{(d)} = \chi_{\mathcal{F},G,v_2}^{(d)}$.
Notice that $\wl{1} = \wlk{\emptyset}$. 

We can use the $\wlk{\mathcal{F}}$-test to compare vertices of the same graphs, or different graphs. We say that the 
$\wlk{\mathcal{F}}$-test \emph{cannot distinguish vertices} 
if their final labels are the same, and 
that the $\wlk{\mathcal{F}}$-test \emph{cannot distinguish graphs} $G$ and $H$ if the multiset containing each label computed for $G$ is the same 
as that of $H$. 

Similarly as for $\MPNNs$ and the $\wl{}$-test \citep{xhlj19,grohewl}, we obtain that the  $\wlk{\mathcal{F}}$-test provides an upper bound 
for the expressive power of $\mathcal{F}$-$\MPNNs$.

\begin{proposition}\label{prop:wlupper}
If two vertices of a graph cannot be distinguished by the $\wlk{\mathcal{F}}$-test, then 
they cannot be distinguished by any $\mathcal F$-$\MPNN$ either. Moreover, 
if two graphs cannot be distinguished by the $\wlk{\mathcal{F}}$-test, then they cannot be distinguished by any  
$\mathcal F$-$\MPNN$ either.
\end{proposition}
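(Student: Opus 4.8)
The plan is to mirror the classical argument that the $\wl{}$-test upper-bounds $\MPNNs$ \citep{xhlj19,grohewl}, which adapts almost verbatim once we also track the extra coordinates coming from $\mathcal F$. The heart of the proof is a single refinement claim, established by induction on the round number $d$: for every $\mathcal F$-$\MPNN$ $M$, all graphs $G,H$, all vertices $v$ of $G$ and $w$ of $H$, and every $d\ge 0$, if $\chi_{\mathcal F,G,v}^{(d)} = \chi_{\mathcal F,H,w}^{(d)}$ then $\mathbf{x}_{M,\mathcal F,G,v}^{(d)} = \mathbf{x}_{M,\mathcal F,H,w}^{(d)}$. In words, the round-$d$ $\wlk{\mathcal F}$-labelling refines the round-$d$ feature vectors of any $\mathcal F$-$\MPNN$. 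Since \textsc{Hash}, $\textsc{Comb}^{(d)}$, $\textsc{Upd}^{(d)}$ (and the one-hot encoding of the vertex label) are fixed functions applied uniformly to every graph, comparing labels and features across two different graphs is meaningful, so a single claim covers both the same-graph and the two-graph settings.

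For the base case $d=0$, the tuple $\chi_{\mathcal F,G,v}^{(0)} = (\chi_G(v),\homc{P_1^r,G^v},\dots,\homc{P_\ell^r,G^v})$ agrees, after applying the deterministic one-hot encoding to its first coordinate, with $\mathbf{x}_{M,\mathcal F,G,v}^{(0)}$; hence equal initial $\wlk{\mathcal F}$-labels give equal initial features. For the inductive step, assume the claim for $d-1$ and suppose $\chi_{\mathcal F,G,v}^{(d)} = \chi_{\mathcal F,H,w}^{(d)}$. Using that \textsc{Hash} is injective (the standing assumption that makes the $\wlk{\mathcal F}$-test a genuine refinement procedure), we obtain $\chi_{\mathcal F,G,v}^{(d-1)} = \chi_{\mathcal F,H,w}^{(d-1)}$ together with $\ldbl \chi_{\mathcal F,G,u}^{(d-1)}\mid u\in N_G(v)\rdbl = \ldbl \chi_{\mathcal F,H,u'}^{(d-1)}\mid u'\in N_H(w)\rdbl$. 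The first equality and the induction hypothesis give $\mathbf{x}_{M,\mathcal F,G,v}^{(d-1)} = \mathbf{x}_{M,\mathcal F,H,w}^{(d-1)}$; the multiset equality yields a bijection $\pi\colon N_G(v)\to N_H(w)$ with $\chi_{\mathcal F,G,u}^{(d-1)} = \chi_{\mathcal F,H,\pi(u)}^{(d-1)}$ for all $u$, so the induction hypothesis applied to each pair $(u,\pi(u))$ gives $\ldbl \mathbf{x}_{M,\mathcal F,G,u}^{(d-1)}\mid u\in N_G(v)\rdbl = \ldbl \mathbf{x}_{M,\mathcal F,H,u'}^{(d-1)}\mid u'\in N_H(w)\rdbl$. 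Feeding these equal inputs through $\textsc{Comb}^{(d)}$ and then $\textsc{Upd}^{(d)}$, both fixed functions, yields $\mathbf{x}_{M,\mathcal F,G,v}^{(d)} = \mathbf{x}_{M,\mathcal F,H,w}^{(d)}$, which closes the induction.

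It then remains to deduce the two statements. The $\wlk{\mathcal F}$-test halts at a round $d^\star$ where the induced vertex partition is stable, and since these partitions are monotonically refining in $d$, two vertices sharing their final ($d^\star$-th) label also share their round-$d$ label for every $d\le d^\star$, and trivially for every $d\ge d^\star$. Thus, if $\wlk{\mathcal F}$ cannot distinguish vertices $v$ of $G$ and $w$ of $H$, the refinement claim gives $\mathbf{x}_{M,\mathcal F,G,v}^{(d)} = \mathbf{x}_{M,\mathcal F,H,w}^{(d)}$ for all $d$ and all $\mathcal F$-$\MPNNs$ $M$ — in particular after the number of rounds $M$ actually performs — so $v$ and $w$ are $\mathcal F$-$\MPNN$ indistinguishable. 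For graphs, equality of the multisets of final $\wlk{\mathcal F}$-labels of $G$ and $H$ forces $|V_G| = |V_H|$ and produces a bijection $V_G\to V_H$ preserving final (hence all-round) $\wlk{\mathcal F}$-labels; by the refinement claim this bijection also preserves the round-$d$ feature vectors, so the multisets of final $\mathcal F$-$\MPNN$ features of $G$ and $H$ coincide and any \textsc{Readout} (again a fixed function) returns the same graph embedding.

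I expect no serious obstacle here: the argument is pure bookkeeping. The only points deserving care are (i) reconciling the $\wlk{\mathcal F}$-test's stopping rule with an $\mathcal F$-$\MPNN$'s prescribed number of layers, which is exactly what the monotone-refinement observation above handles, and (ii) making explicit that \textsc{Hash}, $\textsc{Comb}^{(d)}$, $\textsc{Upd}^{(d)}$, \textsc{Readout}, and the one-hot encoding are graph-independent, so that all the cross-graph equalities are legitimate; with \textsc{Hash} additionally taken injective — implicit in the very notion of the $\wlk{\mathcal F}$-test — everything goes through.
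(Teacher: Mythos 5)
Your proposal is correct and follows essentially the same route as the paper's proof: a round-by-round induction showing that equal $\wlk{\mathcal F}$-labels at round $d$ force equal $\mathcal F$-$\MPNN$ feature vectors at round $d$ (using injectivity of \textsc{Hash} to unpack the round-$d$ label into the round-$(d{-}1)$ label plus the neighbour multiset, then the induction hypothesis and the fact that $\textsc{Comb}^{(d)}$, $\textsc{Upd}^{(d)}$ are fixed functions), followed by lifting the vertex-level claim to graphs via the bijection witnessing equality of label multisets. Your extra remarks on the stopping rule and \textsc{Readout} are just added bookkeeping on top of the same argument.
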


We can also construct  $\mathcal F$-$\MPNNs$ that mimic the $\wlk{\mathcal F}$-test: Simply adding local parameters from a set 
$\mathcal F$ of patterns to the $\mathsf{GIN}$ architecture of \citet{xhlj19} results in an $\mathcal F$-$\MPNN$ that classifies vertices and graphs 
as 
the $\wlk{\mathcal{F}}$-test.

\subsection{\boldmath Characterisation in terms of $\mathcal{F}$-pattern trees}
At the core of several results about the $\wl{}$-test lies a characterisation linking the test with homomorphism counts of (rooted) trees \citep{dvorak,DellGR18}. In view of the connection to $\MPNNs$, it tells that $\MPNNs$ only use \textit{quantitative tree-based structural information from the underlying graphs}.
We next extend this characterisation to $\wlk{\mathcal{F}}$ by using homomorphism counts of so-called $\mathcal{F}$-pattern trees. 
In view of the connection with $\mathcal F$-$\MPNNs$ (Proposition~\ref{prop:wlupper}), this reveals that $\mathcal F$-$\MPNNs$ can use
quantitative information of \textit{richer graph structures} than $\MPNNs$.

To define $\mathcal{F}$-pattern trees we need the {\em graph join operator} $\star$. Given two rooted graphs $G^v$ and $H^w$, the join graph $(G\star H)^v$ is obtained by taking the disjoint union of $G^v$ and $H^w$, followed by identifying $w$ with  $v$. The root of the join graph is $v$. For example,
the join of \raisebox{-1.1\dp\strutbox}{\includegraphics[height=3ex]{K3.pdf}} and \raisebox{-1.1\dp\strutbox}{\includegraphics[height=3ex]{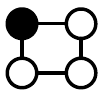}} is 
\raisebox{-1.4\dp\strutbox}{\includegraphics[height=4ex]{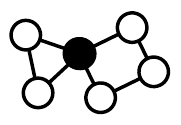}}. Further, if $G$ is a graph and $P^r$ is a rooted graph, then joining a vertex $v$ in $G$ with $P^r$ results 
in the disjoint union of $G$ and $P^r$, where $r$ is identified with $v$.

Let $\mathcal{F} = \{P_1^r,\ldots,P_\ell^r\}$. An \textit{$\mathcal{F}$-pattern tree} $T^r$ is obtained from a standard rooted tree $S^r=(V,E,\chi)$, called the \textit{backbone} of $T^r$, followed by joining every vertex $s\in V$ with any number of copies of patterns from $\mathcal{F}$. 

We define the \textit{depth} of an $\mathcal{F}$-pattern tree as the depth of its backbone.
Examples of $\mathcal{F}$-pattern trees, for $\mathcal{F}=\{\raisebox{-1.1\dp\strutbox}{\includegraphics[height=3ex]{K3.pdf}}\}$, are:
\begin{center}
\vspace{-1ex}
\includegraphics[height=1.7cm]{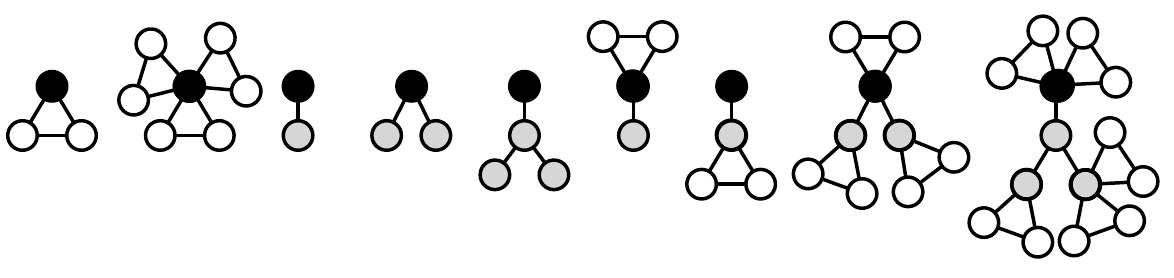}
\vspace{-2ex}
\end{center}
where grey vertices are part of the backbones of the $\mathcal{F}$-pattern trees.
Standard trees are also $\mathcal F$-pattern trees.

We next use $\mathcal{F}$-pattern trees to characterise the expressive power of $\wlk{\mathcal{F}}$ and thus, by Proposition~\ref{prop:wlupper}, of $\mathcal{F}$-$\MPNNs$.
\begin{theorem}\label{thm:char}
For any finite collection $\mathcal{F}$ of patterns, vertices $v$ and $w$ in a graph $G$  
 $\mathsf{hom}(T^r,G^v)=\mathsf{hom}(T^r,G^w)$ for every rooted $\mathcal{F}$-pattern tree $T^r$. 
Similarly, 
$G$ and $H$ are undistinguishable by the  $\wlk{\mathcal{F}}$-test if and only if 
$\mathsf{hom}(T,G)=\mathsf{hom}(T,H)$, for every (unrooted) $\mathcal{F}$-pattern tree.
\end{theorem}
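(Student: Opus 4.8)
The plan is to absorb the homomorphism counts of the patterns into the vertex labels, so that $\wlk{\mathcal F}$ becomes ordinary $\wl{}$ on a relabelled graph, and then to reduce $\mathcal F$-pattern-tree counts to ordinary labelled-tree counts on that graph, where the characterisation of $\wl{}$ due to \citet{dvorak,DellGR18} applies. Given $G$, let $G^{\mathcal F}$ be the graph on the same vertices and edges in which vertex $v$ carries the label $\lambda(v):=(\chi_G(v),\homc{P_1^r,G^v},\dots,\homc{P_\ell^r,G^v})$. Directly from the definitions, the initial labels and the update rule of $\wlk{\mathcal F}$ on $G$ coincide with those of $\wl{}$ on $G^{\mathcal F}$, so the two produce the same vertex partition; in particular $v,w$ are $\wlk{\mathcal F}$-indistinguishable in $G$ iff they are $\wl{}$-indistinguishable in $G^{\mathcal F}$, and analogously at the graph level. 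Since $G$ is finite, only a finite set $L_G$ of labels occurs in $G^{\mathcal F}$.

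Next I would decompose pattern-tree homomorphism counts along the backbone. If $T^r$ is an $\mathcal F$-pattern tree with backbone $S^r$ and with $m_{s,i}$ copies of $P_i$ joined at the backbone vertex $s$, then restricting a homomorphism $T^r\to G^v$ to the backbone — after which the attached copies map independently — gives
\[
\homc{T^r,G^v}\;=\;\sum_{\substack{h:S^r\to G\\ h(r)=v}}\ \prod_{s\in V(S)}\mu_s\bigl(\lambda(h(s))\bigr),\qquad \mu_s(\mathbf{c}):=\prod_{i=1}^{\ell}c_i^{m_{s,i}},
\]
the sum ranging over label-respecting graph homomorphisms of the backbone into $G$, and $\mu_s$ a monomial in the pattern-count coordinates $\mathbf{c}$ of the label (the coordinate $\chi$ is carried through identically on both sides and we suppress it). The same type of sum, with the monomial weights $\mu_s$ replaced by point-indicator weights $\mathbf{c}\mapsto\mathbf{1}[\mathbf{c}=\mathbf{c}_s]$, computes $\homc{U^r,(G^{\mathcal F})^v}$ for an ordinary rooted tree $U^r$ whose underlying tree is $S^r$ and whose vertices carry prescribed labels from $L_G$.

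The heart of the argument is then a linear-algebra observation. Because $L_G$ is \emph{finite}, multivariate interpolation shows that the monomials $\{\mathbf{c}\mapsto\prod_i c_i^{m_i}\}_{\mathbf{m}\in\mathbb{N}^\ell}$ and the point indicators span the same real vector space of functions on $L_G$, namely all of it; since the sum above is multilinear in the per-vertex weights, this gives
\[
\operatorname{span}\bigl\{v\mapsto\homc{T^r,G^v}\bigr\}\;=\;\operatorname{span}\bigl\{v\mapsto\homc{U^r,(G^{\mathcal F})^v}\bigr\}
\]
as subspaces of $\mathbb{R}^{V(G)}$, where the first span ranges over all $\mathcal F$-pattern trees $T^r$ and the second over all labelled rooted trees $U^r$. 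As two vertices are separated by a family of functions precisely when they are separated by its linear span, $v$ and $w$ agree on all $\mathcal F$-pattern-tree counts iff they agree on all labelled-tree counts into $G^{\mathcal F}$; by the labelled version of the result of \citet{dvorak,DellGR18} this is equivalent to $v,w$ being $\wl{}$-indistinguishable in $G^{\mathcal F}$, hence, by the first paragraph, to $\wlk{\mathcal F}$-indistinguishability in $G$. This proves the vertex statement. The graph statement follows by the identical argument with \emph{global} homomorphism counts in place of functions of the root; the only extra point is that, when comparing $G$ and $H$, the interpolation is carried out over $L_G\cup L_H$, which makes the coefficients converting between pattern-tree counts and labelled-tree counts common to both graphs (terms indexed by a label absent from a graph simply vanish there).

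The routine ingredients are the hom-count decomposition and the invocation of \citet{dvorak,DellGR18}. The main obstacle is the third step: one must show that nothing is lost in passing between the ``polynomial'' information carried by $\mathcal F$-pattern trees and the ``exact-label'' information carried by ordinary labelled trees over the profile alphabet. Finiteness of the graph is exactly what makes this work — over an infinite label set the monomials would fail to span all functions — and multilinearity of the tree-homomorphism sum is what propagates the per-backbone-vertex interpolation to the whole family of trees.
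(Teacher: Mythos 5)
Your proof is correct, but it takes a genuinely different route from the paper's. The paper proves the equivalence directly, by induction on the number of rounds: the easy direction uses the same backbone/join decomposition you use, while the hard direction (hom counts $\Rightarrow$ $\wlk{\mathcal F}$-indistinguishability) re-runs Grohe's machinery in the extended setting — grouping neighbours into equivalence classes determined by a \emph{finite} family of pattern trees, forming joined powers of those trees, and invoking a Vandermonde-invertibility argument (Lemmas 4.1--4.2 of \citet{Grohe20Lics}) to conclude that the neighbourhood multiplicities agree class by class. You instead (i) observe that $\wlk{\mathcal F}$ is literally $\wl{}$ on the relabelled graph $G^{\mathcal F}$, and (ii) show, via multivariate interpolation over the finite label set together with multilinearity of the backbone sum, that $\mathcal F$-pattern-tree counts and $L_G$-labelled tree counts have the same linear span, so you can quote the vertex-labelled tree characterisation of $\wl{}$ \citep{dvorak,DellGR18} as a black box; your handling of the two-graph case by interpolating over $L_G\cup L_H$ is the right fix for the coefficient-dependence issue. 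What each approach buys: yours is shorter and modular, pushing all the hard work into the cited coloured-$\wl{}$ characterisation (which the paper also states and uses for $\mathcal F=\emptyset$, but whose proof is essentially the Grohe-style argument the paper chooses to reproduce in-house); the paper's self-contained induction yields, as by-products it uses later, the explicit round-by-round refinement (depth-$d$ pattern trees correspond to $d$ rounds of $\wlk{\mathcal F}$) and the fact that finitely many pattern trees suffice per round. Note that your reduction preserves backbones and hence depths, so the round-indexed refinement also follows from your argument, provided you cite the coloured tree characterisation in its depth-to-rounds form; and for full rigor you should make explicit that the characterisation is being applied to labelled graphs over the enlarged (but finite) alphabet $L_G\cup L_H$, with trees carrying labels outside this set contributing zero counts to both graphs.
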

The proof of this theorem, which can be found in the appendix, 
requires extending techniques from \citet{Grohe20,Grohe20Lics} that were used to characterise the expressiveness of 
$\wl{1}$ in terms of homomorphism counts of trees.

In fact, we can make the above theorem more precise. When $\wlk{\mathcal F}$ is run  for $d$ rounds, then \textit{only $\mathcal F$-patterns trees of depth $d$ are required}.
This tells that increasing the number of rounds of $\wlk{\mathcal{F}}$ 
results in that more complicated structural information is taken into account 
For example, consider the two graphs $G_2$ and $H_2$ and  vertices $v\in V_{G_2}$ and $w\in V_{H_2}$, shown in Fig.~\ref{fig:fwl_power_rounds}. Let $\mathcal{F}=\{\raisebox{-1.1\dp\strutbox}{\includegraphics[height=3ex]{K3.pdf}}\}$.
By definition, $\wlk{\mathcal F}$ cannot distinguish $v$ from $w$ based on the initial labelling. If run for one round, Theorem~\ref{thm:char} implies that $\wlk{\mathcal F}$ cannot distinguish $v$ from $w$ if and only if   $\mathsf{hom}(T^r,G_2^v)=\mathsf{hom}(T^r,H_2^w)$ for any $\mathcal{F}$-pattern tree of depth at most $1$.
It is  readily verified that 
$$\mathsf{hom}(\raisebox{-1.8\dp\strutbox}{\includegraphics[height=4ex]{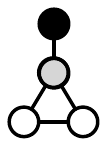}},G_2^v)=0\neq 4=\mathsf{hom}(\raisebox{-1.8\dp\strutbox}{\includegraphics[height=4ex]{Ftree1g.pdf}},H_2^w),$$
and thus $\wlk{\mathcal F}$ distinguishes $v$ from $w$ after one round. Similarly, $G_2$
and $H_2$ can be distinguished by $\wlk{\mathcal F}$ after one round.  We observe that
$G_2$ and $H_2$ are indistinguishable by $\wl{}$. Hence, $\mathcal F$-$\MPNNs$ are more expressive than $\MPNNs$.

Importantly, Theorem~\ref{thm:char} discloses the boundaries of $\mathcal F$-$\MPNNs$.  To illustrate this for some specific instances of $\mathcal F$-$\MPNNs$ mentioned earlier, the expressive power of degree-based $\MPNNs$ \citep{kipf-loose,geerts2020lets} is captured by $\{L_1\}$-pattern trees, and walk counts-$\MPNNs$ \citep{chen2019powerful} are captured by $\{L_1,\ldots, L_\ell\}$-pattern trees. These pattern trees are just trees, since joining paths to trees only results in bigger trees. Thus, 
Theorem~\ref{thm:char} tells that all these extensions are still bounded by $\wl{}$ (albeit needing less rounds). In contrast, beyond $\wl{}$,  $\{C_\ell\}$-pattern trees capture cycle count $\MPNNs$ \citep{li2019hierarchy}, and 
for $\GSNs$  \citep{bouritsas2020improving} which use subgraph isomorphism counts of pattern $P\in\mathcal P$, their expressive power is   captured by $\mathsf{spasm}(\mathcal P)$-pattern trees, where $\mathsf{spasm}(\mathcal P)$ consists of all surjective homomorphic images of  patterns in $\mathcal P$ \citep{Curticapean_2017}.

\section{\boldmath A Comparison with the $\wlk{k}$-test}\label{sec:choice}
We propose $\mathcal F$-$\MPNNs$ as an alternative and efficient way to extend the expressive power of
$\MPNNs$ (and thus the $\wl{}$-test) compared to the computationally intensive higher-order $\MPNNs$ based on the $\wlk{k}$-test \citep{grohewl,Morris2020WeisfeilerAL,DBLP:conf/nips/MaronBSL19}. In this section we 
situate $\wlk{\mathcal F}$ in the $\wlk{k}$ hierarchy.  The definition of $\wlk{k}$
is deferred to the appendix.

We have seen that $\wlk{\mathcal F}$ can distinguish graphs that $\wl{}$ cannot: it suffices to consider $\wlk{\{K_3\}}$ for the $3$-clique $K_3$.
In order to generalise this observation we need some notation. 
Let $\mathcal F$ and $\mathcal G$ be two sets of patterns and consider
an $\mathcal F$-$\MPNN$ $M$ and a $\mathcal G$-$\MPNN$ $N$. We say that $M$ is \textit{upper bounded in expressive power} by $N$ if for any graph $G$, 
if $N$ cannot distinguish vertices $v$ and $w$,\footnote{Just as for the $\wlk{\mathcal F}$-test, an $\mathcal F$-$\MPNN$ cannot distinguish two vertices if the label computed for 
both of them is the same} then neither can $M$. 
A similar notion is in place for pairs of graphs: if $N$ cannot distinguish graphs $G$ and $H$, then neither can $M$. 

More generally, let $\mathcal M$ be a class of  $\mathcal F$-$\MPNN$ and $\mathcal N$ be a class of  $\mathcal G$-$\MPNN$. We say that the class $\mathcal M$ is \textit{upper bounded in expressive power} by $\mathcal N$ if every $M\in\mathcal M$ is upper bounded in expressive power by an $N\in\mathcal N$ (which may depend on $M$). When $\mathcal M$ is upper bounded by $\mathcal N$ and vice versa, then $\mathcal M$ and $\mathcal N$ are said to have the \textit{same expressive power}. A class $\mathcal N$ is \textit{more expressive} than a class $\mathcal M$ when $\mathcal M$ is upper bounded in expressive power by $\mathcal N$, but there exist graphs that can be distinguished by $\MPNNs$ in $\mathcal N$ but not by any $\MPNN$ in $\mathcal M$.

Finally, we use the
notion of \textit{treewidth} of a graph, which measures the tree-likeness of a graph. For example,
trees have treewidth one, cycles have treewidth two, and the $k$-clique $K_k$ has treewidth $k-1$ (for $k>1$). We define this standard notion in the appendix and only note that we define the
treewidth of a pattern $P^r$ as the treewidth of its unrooted version $P$. 

Our first result is a consequence of the characterisation of $\wlk{k}$ in terms of homomorphism counts of graphs of treewidth $k$ \citep{dvorak,DellGR18}.
\begin{proposition}\label{prop:lgp-in-kwl}
For each finite set $\mathcal F$ of patterns, the expressive power of 
$\wlk{\mathcal{F}}$ is bounded by $\wlk{k}$, where $k$ is the largest
treewidth of a pattern in $\mathcal F$.
\end{proposition}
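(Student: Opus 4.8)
The plan is to combine Theorem~\ref{thm:char}, which characterises $\wlk{\mathcal{F}}$ in terms of homomorphism counts of $\mathcal{F}$-pattern trees, with the classical characterisation of $\wlk{k}$ in terms of homomorphism counts of graphs of treewidth at most $k$ \citep{dvorak,DellGR18}. The whole argument then reduces to a single combinatorial claim: \emph{every $\mathcal{F}$-pattern tree has treewidth at most $k$}, where $k$ is the largest treewidth of a pattern in $\mathcal{F}$ (we may assume $k\ge 1$, since otherwise every pattern is edgeless, so $\homc{P^r_i,G^v}$ is constant in $v$ and $\wlk{\mathcal{F}}$ coincides with $\wl{}=\wlk{1}$). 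Granting the claim, the proof closes at once: if $G$ and $H$ are indistinguishable by $\wlk{k}$, then $\homc{F,G}=\homc{F,H}$ for every graph $F$ of treewidth at most $k$, and hence for every (unrooted) $\mathcal{F}$-pattern tree $F$; by Theorem~\ref{thm:char} this means $G$ and $H$ are indistinguishable by $\wlk{\mathcal{F}}$. The vertex version runs identically using the rooted variants of both characterisations; here it is convenient to view a rooted graph $F^r$ as the graph $F$ with a fresh colour placed on $r$, which leaves its treewidth unchanged, so that the rooted treewidth characterisation of $\wlk{k}$ is just the unrooted one applied to vertex-coloured graphs.

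To prove the treewidth claim I would invoke the standard \emph{gluing lemma}: if a graph is the union of two subgraphs sharing exactly one vertex $v$ (so no edge joins their remaining vertices), then its treewidth is the maximum of the treewidths of the two subgraphs; the nontrivial inequality is witnessed by taking a tree decomposition of each piece, picking in each a bag containing $v$ (one always exists), and joining the two decomposition trees by an edge between those two bags. By definition, an $\mathcal{F}$-pattern tree $T^r$ is obtained from a backbone tree $S$, which has treewidth $1$, by a finite sequence of joins, each of which identifies the root of a fresh copy of some pattern $P_i^r\in\mathcal{F}$ with a single vertex of the structure constructed so far. Since $\mathrm{tw}(P_i)\le k$ for every $i$, an easy induction on the number of joins, applying the gluing lemma at each step, gives $\mathrm{tw}(T^r)\le\max(1,k)=k$, which is the claim.

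The only subtlety — and the main, if minor, obstacle — is that the definition of $\mathcal{F}$-pattern trees permits an arbitrary number of pattern copies to be attached at \emph{every} vertex of the backbone, and one must check that this abundance of attachments does not inflate the treewidth. The gluing lemma settles exactly this point, since each individual join still takes place at a single shared vertex no matter how many copies are hung at a given backbone vertex, so the induction proceeds without change. Combining the treewidth bound with the two homomorphism-count characterisations as described above then yields the proposition for both the vertex and the graph variants.
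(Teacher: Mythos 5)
Your proposal is correct and takes essentially the same route as the paper: reduce the statement via Theorem~\ref{thm:char} and the homomorphism-count characterisation of $\wlk{k}$ by graphs of treewidth at most $k$, and then establish the key lemma that every $\mathcal{F}$-pattern tree has treewidth at most $k$ by induction on the joins, merging tree decompositions at the shared vertex. Your formulation of the gluing step (attaching the two decompositions at bags that both contain the join vertex) is in fact stated slightly more carefully than in the paper, but it is the same argument.
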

For example, since the treewidth of  $K_3$ is $2$, we  have that $\wlk{\{K_3\}}$ is bounded by $\wlk{2}$. Similarly, $\wlk{\{K_{k+1}\}}$ is bounded in expressive power by $\wlk{k}$. 

Our second result tells how to increase the expressive power of $\wlk{\mathcal F}$ beyond $\wlk{k}$. A pattern $P^r$ is a core if any homomorphism from $P$ to itself is injective. For example, any clique $K_k$ and cycle of odd length is a core.

\begin{theorem}\label{theo:games}
	Let $\mathcal F$ be a finite set of patterns.
If $\mathcal F$ contains a pattern $P^r$ which is a core and has treewidth $k$, then there exist graphs that can be distinguished by $\wlk{\mathcal F}$ but not by $\wlk{(k-1)}$.
\end{theorem}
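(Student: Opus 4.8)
The plan is to split the claim into two halves — that $\wlk{(k-1)}$ \emph{cannot} distinguish the graphs, and that $\wlk{\mathcal F}$ \emph{can} — and to reduce both to statements about homomorphism counts. For the $\wlk{\mathcal F}$ side, observe first that $P$ is itself an $\mathcal F$-pattern tree: take the one-vertex rooted tree as backbone and join a single copy of $P^r$ to it. Hence, by Theorem~\ref{thm:char}, to make $\wlk{\mathcal F}$ distinguish two graphs $G$ and $H$ it suffices to arrange that $\homc{P,G}\neq\homc{P,H}$. For the $\wlk{(k-1)}$ side, I would use the characterisation of $\wlk{(k-1)}$ in terms of homomorphism counts of graphs of treewidth at most $k-1$ \citep{dvorak,DellGR18}: the pair $G,H$ is indistinguishable by $\wlk{(k-1)}$ exactly when $\homc{F,G}=\homc{F,H}$ for every graph $F$ of treewidth $\le k-1$. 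So the whole theorem reduces to exhibiting graphs $G$ and $H$ that agree on $\homc{F,\cdot}$ for all $F$ of treewidth $\le k-1$ yet disagree on $\homc{P,\cdot}$; this is precisely where the hypotheses that $P$ is a core of treewidth exactly $k$ enter.

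To produce such a pair I would run a Cai--F\"urer--Immerman-style gadget construction tailored to $P$. Starting from a tree decomposition of $P$ of width $k$, build a base graph $X$ of treewidth exactly $k$ (essentially $P$ itself, augmented if necessary with a little extra connectivity while keeping its treewidth and keeping $P$ mapping homomorphically onto it). Let $G$ and $H$ be the two CFI gadgets over $X$ that differ by one ``twist''. The standard CFI analysis — a winning strategy for Duplicator in the bijective $k$-pebble game, available because $X$ has treewidth at least $k$ — shows that $G$ and $H$ agree on all homomorphism counts of treewidth-$\le(k-1)$ graphs, so $\wlk{(k-1)}$ does not distinguish them. On the other hand, one argues $\homc{P,G}\neq\homc{P,H}$: since $P$ is a core, its homomorphic images into the gadgets do not collapse, they genuinely ``wrap around'' $X$, and because $X$ was built with treewidth matching that of $P$ the number of such images is sensitive to the global parity of the twist. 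Then by Theorem~\ref{thm:char}, applied to the (unrooted) $\mathcal F$-pattern tree $P$, the test $\wlk{\mathcal F}$ distinguishes $G$ from $H$, which completes the proof. A more algebraic alternative for this separation step is to combine the classical fact that the functions $\{\homc{F,\cdot}\}$ are linearly independent over non-isomorphic $F$ with the homomorphism-distinguishing-closedness of the class of treewidth-$\le(k-1)$ graphs, which directly places the core $P$ of treewidth $k$ outside the invariants those graphs can express.

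I expect the separation step to be the main obstacle. Linear independence of homomorphism counts by itself is not enough: it only rules out $\homc{P,\cdot}$ being a \emph{linear} combination of lower-treewidth counts, whereas some non-linear combination of them could a priori still determine it. The argument therefore has to use in an essential way that the treewidth-$\le(k-1)$ counts induce \emph{precisely} $\wlk{(k-1)}$-equivalence, and that the core-and-treewidth-$k$ structure of $P$ lets it ``see'' a CFI twist to which $\wlk{(k-1)}$ is blind — which also pinpoints why both hypotheses on $P$ matter (for a non-core $P$ the correct condition would involve the treewidth of the core of $P$, and even then one must choose the base graph $X$ carefully so that $P$ itself, rather than a proper homomorphic image of it, governs the twist-sensitive count).
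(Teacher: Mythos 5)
Your outline does follow the same route as the paper (a CFI-style parity construction over $P$ itself, indistinguishability via the bijective $k$-pebble game or, equivalently, homomorphism counts of treewidth $\le k-1$, and the core hypothesis for the separating count), but both load-bearing steps are left as assertions, and they are exactly where the work lies. First, the claim that ``the standard CFI analysis'' hands you a Duplicator win in the \emph{bijective} $k$-pebble game whenever the base graph has treewidth $\ge k$ is not something you can quote off the shelf for the gadgets you need: the paper uses the parity construction of \citet{AtseriasBD07} and \citet{BovaC19} (vertices $(v,f)$ with $f:E_v\to\{0,1\}$ and a single twisted vertex $v_1$), for which only $\mathsf{FO}^k$-indistinguishability was known, and the substance of the proof is precisely upgrading that strategy to the counting game: in every round one must produce a global bijection $h:V_G\to V_H$, consistent with the pebbles already placed, and this is done by rerouting a walk from $v_1$ into a set $\mathsf{avoid}(S)$ supplied by a bramble of order $>k$ (this is where treewidth $k$ enters), with consistency and partial isomorphism checked via Lemma 14 of \citet{BovaC19}. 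Without carrying out this construction (or citing an equivalent counting-game statement for your specific gadgets), the first half of your argument is a placeholder for the theorem's main technical content.

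Second, the separation $\homc{P,G}\neq\homc{P,H}$ is argued only by the heuristic that homomorphic images ``wrap around'' and are ``sensitive to the global parity of the twist''. The actual argument is short but uses the core hypothesis in a precise way: the untwisted graph $H$ admits the homomorphism $v\mapsto(v,\mathbf{0})$, so $\homc{P,H}\neq 0$, while any homomorphism from $P$ to the twisted graph $G$, composed with the projection $(v,f)\mapsto v$, is a homomorphism from $P$ to itself, hence (since $P$ is a core) an automorphism, and pulling the parity constraints back along it yields a sum that must be both $0$ and $1$ modulo $2$; thus $\homc{P,G}=0$. Your suggestion to build the gadgets over a base graph $X$ obtained from $P$ by adding ``a little extra connectivity'' is where this step would fail: once $X$ properly contains $P$, homomorphisms from $P$ into the gadget need not project onto automorphisms of $P$, the parity obstruction can be bypassed, and the twist-sensitivity of $\homc{P,\cdot}$ is lost — the construction has to be carried out over $P$ itself. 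Finally, your algebraic fallback (linear independence plus homomorphism-distinguishing closedness of the treewidth-$(k-1)$ class) cannot rescue the argument as stated: as you note, linear independence is insufficient, and the closedness property is itself a deep theorem, far beyond what may be assumed as a routine step here.
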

In other words, for such $\mathcal F$,  $\wlk{\mathcal F}$ is not bounded by $\wlk{(k-1)}$.
For example, since $K_3$ is a core, $\wlk{\{K_3\}}$ is not bounded in expressive power by $\wl{}=\wlk{1}$.
More generally, $\wlk{\{K_{k}\}}$ is not bounded by $\wlk{(k-1)}$. 
The proof of Theorem \ref{theo:games} is based on extending deep 
techniques developed in finite 
model theory, and that have been used to understand the expressive power of {\em finite variable logics} \citep{AtseriasBD07,BovaC19}.  This result is stronger than the one underlying the strictness of the $\wlk{k}$ hierarchy \citep{O2017}, which states that $\wlk{k}$ is strictly more expressive than $\wlk{(k-1)}$. Indeed, \citet{O2017} only shows the \textit{existence} of a pattern $P^r$ of treewidth $k$ such that 
$\wlk{(k-1)}$ is not bounded by $\wlk{\{P^r\}}$. In Theorem \ref{theo:games} we provide an \textit{explicit recipe} for finding such a pattern $P^r$, that is, $P^r$ can be taken a core of treewidth $k$.

In summary, we have shown that there is a set $\mathcal F$ of patterns such that (i) $\wlk{\mathcal F}$ can distinguish graphs which cannot be distinguished by $\wlk{(k-1)}$, yet (ii) $\wlk{\mathcal F}$ cannot
distinguish more graphs than $\wlk{k}$. This begs the question whether there is a finite set $\mathcal F$ such that $\wlk{\mathcal F}$ is equivalent in expressive power to $\wlk{k}$.  We answer this negatively.
\begin{proposition}\label{prop:long-cycles}
For any $k>1$, there does not exist a finite set $\mathcal F$ of patterns such that 
$\wlk{\mathcal F}$ is equivalent in expressive power to $\wlk{k}$.
\end{proposition}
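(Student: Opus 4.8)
\textit{Proof proposal.} The plan is to refute equivalence by exhibiting, for every finite set $\mathcal F$ and every $k>1$, a pair of graphs that $\wlk{k}$ distinguishes but $\wlk{\mathcal F}$ does not. I would take this pair to be $C_a\sqcup C_a$ and $C_{2a}$ — two graphs with the same number of vertices and edges — where $a$ is a large integer whose size is allowed to depend on $\mathcal F$. These are a classical example of $\wl{}$-indistinguishable graphs; the substance of the argument is that adding homomorphism counts from any fixed finite pattern family still cannot separate them, whereas $\wlk{2}$ (hence $\wlk{k}$) can.

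The technical heart is the following lemma: there is a constant $N_{\mathcal F}$ depending only on $\mathcal F$ such that $\homc{T,C_n}=c_T\cdot n$ for every $\mathcal F$-pattern tree $T$ and every $n\ge N_{\mathcal F}$, where $c_T$ does not depend on $n$. To prove it I would first observe that, for a pattern $P^r\in\mathcal F$, the number of homomorphisms from $P^r$ into $C_n$ rooted at a vertex $x$ is independent of $x$ (vertex-transitivity of $C_n$), and once $n>2|V(P)|$ it equals a fixed number $p_P$: a homomorphism from the connected graph $P$ cannot wrap around $C_n$, so it lifts uniquely to a homomorphism from $P^r$ into $\Zb$ rooted at $0$, and the number of the latter is $p_P$. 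Set $N_{\mathcal F}:=1+2\max_{P^r\in\mathcal F}|V(P)|$. Now let $T$ have backbone $S$ with $m_{v,P}$ copies of $P^r$ glued at each $v\in V(S)$. Every homomorphism $T\to C_n$ restricts to a homomorphism $S\to C_n$ together with, independently, one homomorphism from each glued copy of a pattern into $C_n$ rooted at the image of its attachment vertex; by the previous point each copy contributes the factor $p_P$ whatever that image is, so these factors pull out of the sum over homomorphisms of the backbone:
\[
\homc{T,C_n}=\Bigl(\textstyle\prod_{v\in V(S)}\prod_{P^r\in\mathcal F}p_P^{\,m_{v,P}}\Bigr)\cdot\homc{S,C_n}.
\]
Finally $\homc{S,C_n}=2^{|E(S)|}\cdot n$ for all $n\ge3$: $S$ is a tree, so rooting it anywhere and using vertex-transitivity, each non-root vertex has exactly two admissible images given the image of its parent and no further constraint. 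This yields the lemma with $c_T=2^{|E(S)|}\prod_{v,P}p_P^{\,m_{v,P}}$. (Should $\mathcal F$ contain disconnected patterns, one splits off the bounded-size components not containing the root and runs the same factorisation componentwise; the conclusion is unchanged.)

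Granting the lemma, fix $a\ge\max(3,N_{\mathcal F})$. Every $\mathcal F$-pattern tree $T$ is connected, so $\homc{T,C_a\sqcup C_a}=2\,\homc{T,C_a}=2c_T a=c_T(2a)=\homc{T,C_{2a}}$ by the lemma at $n=a$ and $n=2a$; by Theorem~\ref{thm:char} the $\wlk{\mathcal F}$-test cannot distinguish $C_a\sqcup C_a$ from $C_{2a}$. On the other hand a direct count of closed walks of length $a$ gives $\homc{C_a,C_{2a}}=2a\binom{a}{a/2}$ and $\homc{C_a,C_a\sqcup C_a}=2\,\homc{C_a,C_a}=2a\bigl(2+\binom{a}{a/2}\bigr)$ for even $a$, and $0$ respectively $4a$ for odd $a$ — differing by $4a$ in both cases. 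Since $C_a$ has treewidth two and $k>1$, the characterisation of $\wlk{k}$ by homomorphism counts of graphs of treewidth at most $k$ \citep{dvorak,DellGR18} shows that $\wlk{k}$ \emph{does} distinguish $C_a\sqcup C_a$ from $C_{2a}$. Therefore $\wlk{\mathcal F}$ and $\wlk{k}$ are not equivalent, for any finite $\mathcal F$.

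The point I expect to require the most care is the uniformity of $N_{\mathcal F}$ in the lemma: it must be chosen independently of the size and depth of $T$, which is precisely what lets one cycle of bounded length defeat the entire $\wlk{\mathcal F}$-test. This works because every constraint an $\mathcal F$-pattern tree places on a homomorphism into $C_n$ that is sensitive to the value of $n$ lives inside the bounded-size patterns of $\mathcal F$, while the (arbitrarily large) tree backbone maps freely into any cycle — and it is here that finiteness of $\mathcal F$ is essential.
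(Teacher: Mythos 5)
Your proof is correct, and it reaches the proposition by a route that differs from the paper's in its key lemma and in the choice of counterexample graphs. The paper takes $G$ to be $m+2$ disjoint copies of $C_{m+1}$ and $H$ to be $m+1$ copies of $C_{m+2}$ (with $m$ the maximum pattern size), notes that no pattern in $\mathcal F$ can wrap around either cycle so all rooted pattern counts coincide at every vertex, and then invokes a separate regularity lemma, proved by induction on the depth of $\mathcal F$-pattern trees: any two $t$-regular graphs whose vertices all carry identical pattern counts are $\wlk{\mathcal F}$-indistinguishable. You instead take $C_a\sqcup C_a$ versus $C_{2a}$ and prove an explicit growth formula $\homc{T,C_n}=c_T\,n$ for $n\ge N_{\mathcal F}$, by lifting pattern homomorphisms to the universal cover $\Zb$ (no wrapping, so $\homc{P^r,C_n^x}=p_P$ independently of $n$ and $x$) and counting backbone homomorphisms as $2^{|E(S)|}n$; graph-level equality then follows because two $a$-cycles and one $2a$-cycle yield the same total, and the $\wlk{k}$ side is settled, exactly as in the paper, by a cycle-homomorphism count combined with the treewidth-$k$ characterisation of $\wlk{k}$ from \citet{dvorak} and \citet{DellGR18}. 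Both arguments share the same engine --- Theorem~\ref{thm:char} plus the fact that bounded-size patterns cannot wind around long cycles --- but the paper's regularity lemma is more reusable (it also powers Proposition~\ref{prop:cycles} and yields vertex-level indistinguishability for arbitrary regular graphs), whereas your computation is more self-contained and produces exact counts on both sides. One caveat: as literally stated, your lemma fails when $\mathcal F$ contains disconnected patterns, since $\homc{T,C_n}$ then becomes a polynomial of higher degree in $n$; your parenthetical fix is the right one, but it should be recorded as ``each connected component $T_i$ of $T$ contributes a factor $c_i n$ for $n\ge N_{\mathcal F}$'', after which $\prod_i (2c_i a)=\prod_i c_i(2a)$ still gives the required equality of graph-level counts even though the displayed one-component formula no longer applies verbatim.
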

In view of the connection between $\mathcal F$-$\MPNNs$ and $\GSNs$ mentioned earlier, we thus show that no $\GSN$ can match the power of $\wlk{k}$, which was a question left open in  \citet{bouritsas2020improving}.
We remark that if we allow $\mathcal F$ to consist of all (\textit{infinitely many}) patterns of treewidth $k$, then $\wlk{\mathcal F}$ is equivalent in expressive power to $\wlk{k}$ \citep{dvorak,DellGR18}.

\section{When Do Patterns Extend Expressiveness?}\label{sec:patterns}
Patterns are not learned, but must be passed as an input to $\MPNNs$ together with the graph structure. Thus, knowing which patterns work well, and which do not, is of key importance for the power of the resulting $\mathcal{F}$-$\MPNNs$. 
This is a difficult question to answer
 since determining which patterns work well is clearly application-dependent. 
From a theoretical point of view, however, we can still look into interesting questions related to the problem of which patterns to choose. One such a question, and the one studied in this section, 
is 
when a pattern adds expressive power over the ones that we have already selected. More formally,  we study the following problem: 
Given a finite set $\mathcal F$ of patterns, 
when does adding a new pattern $P^r$ to $\mathcal F$ extends the expressive power of the $\wlk{\mathcal{F}}$-test?  

To answer this question in the positive, we need to find two graphs $G$ and $H$,  show that 
they are indistinguishable by the $\wlk{\mathcal{F}}$-test, but show that they can be distinguished by the 
$\wlk{\mathcal{F}\cup \{P^r\}}$-test.  As an example of this technique we show that 
 longer cycles always add expressive power. We use $C_k$ to represent the cycle of length $k$. 
\begin{proposition}\label{prop:cycles}
For any $k > 3$,  $\wlk{\{C^r_3,\dots,C^r_k\}}$ is more expressive than 
$\wlk{\{C^r_3,\dots,C^r_{k-1}\}}$.
\end{proposition}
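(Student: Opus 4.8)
The plan is to reduce, via Theorem~\ref{thm:char}, to a homomorphism-counting statement about two explicit graphs. Since every $\{C_3,\dots,C_{k-1}\}$-pattern tree is also a $\{C_3,\dots,C_k\}$-pattern tree, the upper-bound half of ``more expressive'' is immediate from Theorem~\ref{thm:char} (equivalently, the initial colouring of $\wlk{\{C_3^r,\dots,C_k^r\}}$ refines that of $\wlk{\{C_3^r,\dots,C_{k-1}^r\}}$, and $\wl{}$ preserves refinement). So it suffices to produce, for each $k>3$, graphs $G_k$ and $H_k$ with $\homc{T,G_k}=\homc{T,H_k}$ for every $\{C_3,\dots,C_{k-1}\}$-pattern tree $T$ but $\homc{C_k,G_k}\neq\homc{C_k,H_k}$; note that $C_k$ is itself a $\{C_3,\dots,C_k\}$-pattern tree (a one-vertex backbone with a single copy of $C_k^r$ joined to it), so the latter inequality makes $\wlk{\{C_3^r,\dots,C_k^r\}}$ distinguish $G_k$ from $H_k$.

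I would take $G_k$ to be the disjoint union of $k+1$ copies of the cycle $C_k$ and $H_k$ the disjoint union of $k$ copies of $C_{k+1}$; both are $2$-regular, vertex-transitive, and have $n:=k(k+1)$ vertices. Writing $A_G$ for the adjacency matrix of $G$, two elementary facts do the work. (i) Vertex-transitive graphs are \emph{walk-regular}: $(A_G^{\,j})_{vv}$ does not depend on $v$, hence $\homc{C_j^r,G^v}=(A_G^{\,j})_{vv}=\homc{C_j,G}/n$ for every vertex $v$. (ii) In a cycle $C_a$, a closed walk of length $j<a$ cannot wind around, so $\mathrm{tr}(A_{C_a}^{\,j})=a\binom{j}{j/2}$ for even $j$ and $0$ for odd $j$, regardless of $a$; only for $j=a$ do the two ``winding'' walks at each vertex appear. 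Since $k-1<k$ and $k-1<k+1$, fact (ii) gives $\homc{C_j,G_k}=\homc{C_j,H_k}$ (equal to $n\binom{j}{j/2}$ or $0$) for all $j\le k-1$, whereas for $j=k$ the winding walks are present in $G_k$ (components of length exactly $k$) but absent from $H_k$ (components of length $k+1>k$), yielding $\homc{C_k,G_k}=\homc{C_k,H_k}+2n\neq\homc{C_k,H_k}$.

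The remaining, and in my view main, step is to upgrade ``$G_k$ and $H_k$ agree on the cycle counts $C_3,\dots,C_{k-1}$'' to ``they agree on \emph{all} $\{C_3,\dots,C_{k-1}\}$-pattern tree counts''. I would prove by induction on the number of backbone vertices that for any rooted $\{C_3,\dots,C_{k-1}\}$-pattern tree $T^r$ and any $2$-regular walk-regular graph $G$ on $n$ vertices, the vector $\bigl(\homc{T^r,G^v}\bigr)_{v}$ equals $c_T\cdot\mathbf 1$, where $c_T=2^{\,e}\prod_j\bigl(\homc{C_j,G}/n\bigr)$, $e$ being the number of backbone edges and the product ranging over the copies of patterns joined to backbone vertices. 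In the inductive step one decomposes $T^r$ at its root: each joined pattern copy $C_j$ contributes a factor $\homc{C_j^r,G^v}=\homc{C_j,G}/n$ (constant in $v$, by walk-regularity), and each backbone child with subtree $T_i$ contributes the factor $(A_G\,\mathbf x_i)_v$, where $\mathbf x_i=c_{T_i}\mathbf 1$ by induction, so this factor equals $\deg(v)\,c_{T_i}=2c_{T_i}$ (constant, by $2$-regularity); the root combines all these by pointwise products, and a product of scalar multiples of $\mathbf 1$ is again a scalar multiple of $\mathbf 1$. Thus $c_T$ depends only on $n$ and on $\homc{C_3,G},\dots,\homc{C_{k-1},G}$, so $\homc{T,G_k}=c_T\,n=\homc{T,H_k}$ for every (unrooted) such $T$; combined with $\homc{C_k,G_k}\neq\homc{C_k,H_k}$ and the upper-bound direction, this finishes the proof. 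The one pitfall to avoid is that the backbone of a pattern tree can be arbitrarily long, so one cannot argue ``$a$ is large relative to $T$''; walk-regularity — precisely, $\mathbf 1$ being an eigenvector of $A_G$ — is exactly what makes the recursion close up uniformly in the size of $T$.
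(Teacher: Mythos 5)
Your proposal is correct and follows essentially the same route as the paper: the same counterexample pair (disjoint unions of $C_k$'s versus $C_{k+1}$'s on $k(k+1)$ vertices), distinguished at the initial labelling by $\homc{C_k,\cdot}$, and shown indistinguishable by $\wlk{\{C_3^r,\dots,C_{k-1}^r\}}$ via Theorem~\ref{thm:char}. Your walk-regularity induction showing that every $\{C_3,\dots,C_{k-1}\}$-pattern tree count is a constant determined by $n$, the degree, and the short-cycle counts is exactly the content of the paper's Lemma~\ref{lem:regularFWL} (specialised to $2$-regular graphs), with the agreement of the short-cycle counts argued via traces of adjacency powers instead of the paper's ``image is a path'' observation.
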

We also observe that, by Proposition~\ref{prop:lgp-in-kwl}, $\wlk{\{C^r_3,\dots,C^r_k\}}$ is bounded by $\wlk{2}$ for any $k\geq 3$ because
cycles have treewidth two. It is often the case, however, that finding such graphs and proving that they are indistinguishable, can be rather challenging. 
Instead, in this section we provide two techniques that can be used 
to partially answer the question posed above by only looking at properties of the sets of patterns. 
Our first result is for establishing when a pattern does not add expressive power to a given set $\mathcal F$ of patterns, and the second one
when it does. 

\subsection{Detecting when patterns are superfluous} 
Our first result is a  simple recipe for choosing local features: instead of choosing complex patterns that are the joins of 
smaller patterns, one should opt for the smaller patterns. 

\begin{proposition}\label{prop:simplify}
Let $P^r = P_1^r \star P_2^r$ be a pattern that is the join of two smaller patterns. 
Then for any set $\mathcal F$ of patterns, we have that 
$\mathcal F \cup \{P^r\}$ is upper bounded by  $\mathcal F \cup \{P_1^r,P_2^r\}$.
\end{proposition}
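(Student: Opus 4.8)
The plan is to reduce the claim to the homomorphism-count characterisation of Theorem~\ref{thm:char} together with one elementary geometric observation about the join operator: \emph{joining a vertex $s$ of a graph with a single copy of $P^r = P_1^r\star P_2^r$ produces exactly the same graph as joining $s$ with a copy of $P_1^r$ and, separately, a copy of $P_2^r$.} This is immediate from the definition of $\star$: the rooted graph $(P_1\star P_2)^r$ is the disjoint union of $P_1^r$ and $P_2^r$ in which the roots of $P_1$ and of $P_2$ are identified to a single vertex, which is the root $r$ of $P^r$; so gluing $P^r$ onto $s$ (i.e.\ identifying $r$ with $s$) identifies $s$ with the root of $P_1$ and with the root of $P_2$ simultaneously --- precisely the graph obtained by gluing $P_1^r$ onto $s$ and then $P_2^r$ onto $s$. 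The gluing operation is associative, so the order does not matter.

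First I would use this observation to prove the following purely combinatorial fact: \emph{every $(\mathcal F\cup\{P^r\})$-pattern tree is also an $(\mathcal F\cup\{P_1^r,P_2^r\})$-pattern tree}, and likewise in the unrooted case. Indeed, fix an $(\mathcal F\cup\{P^r\})$-pattern tree $T^r$ and let $S^r=(V,E,\chi)$ be its backbone; for each vertex $s\in V$, the tree $T^r$ is obtained by joining $s$ with some copies of patterns from $\mathcal F$ and with some number $m_s\ge 0$ of copies of $P^r$. Applying the observation at each $s$ --- replacing the $m_s$ copies of $P^r$ by $m_s$ copies of $P_1^r$ together with $m_s$ copies of $P_2^r$ --- does not change the underlying graph $T^r$ at all, but exhibits it as being built from the same backbone $S^r$ by joining each vertex with copies of patterns drawn from $\mathcal F\cup\{P_1^r,P_2^r\}$. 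Hence $T^r$ qualifies as an $(\mathcal F\cup\{P_1^r,P_2^r\})$-pattern tree. The same reasoning applies verbatim to unrooted $\mathcal F$-pattern trees.

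Then I would finish with Theorem~\ref{thm:char}. Suppose vertices $v,w$ of a graph $G$ are indistinguishable by the $\wlk{\mathcal F\cup\{P_1^r,P_2^r\}}$-test; by Theorem~\ref{thm:char}, $\homc{T^r,G^v}=\homc{T^r,G^w}$ for every $(\mathcal F\cup\{P_1^r,P_2^r\})$-pattern tree $T^r$. By the combinatorial fact above, this equality in particular holds for every $(\mathcal F\cup\{P^r\})$-pattern tree $T^r$, so the reverse direction of Theorem~\ref{thm:char} gives that $v$ and $w$ are indistinguishable by the $\wlk{\mathcal F\cup\{P^r\}}$-test. The argument for a pair of graphs $G,H$ is identical, using the unrooted part of Theorem~\ref{thm:char}. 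To phrase the conclusion at the level of $\MPNNs$: any $(\mathcal F\cup\{P^r\})$-$\MPNN$ is upper bounded in expressive power by the $\wlk{\mathcal F\cup\{P^r\}}$-test (Proposition~\ref{prop:wlupper}), which by what we just showed is upper bounded by the $\wlk{\mathcal F\cup\{P_1^r,P_2^r\}}$-test, which in turn is realised by an $(\mathcal F\cup\{P_1^r,P_2^r\})$-$\MPNN$ (the $\mathsf{GIN}$-style construction noted after Proposition~\ref{prop:wlupper}); composing these yields the desired bound between the two $\MPNN$ classes.

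I do not anticipate a real obstacle here: the only point that needs to be written carefully is the associativity of the gluing underlying the observation in the first paragraph, after which the proof is a short bookkeeping argument over pattern-tree constructions plus two appeals to Theorem~\ref{thm:char}. Alternatively, one can avoid pattern trees entirely and argue directly on the colourings, using that $\homc{(P_1\star P_2)^r,G^v}=\homc{P_1^r,G^v}\cdot\homc{P_2^r,G^v}$, so that the initial colouring of $\wlk{\mathcal F\cup\{P_1^r,P_2^r\}}$ determines that of $\wlk{\mathcal F\cup\{P^r\}}$, and then that this refinement is preserved round by round; but the pattern-tree route is cleaner given that Theorem~\ref{thm:char} is already available.
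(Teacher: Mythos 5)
Your proof is correct, but it takes a different route from the paper. The paper proves Proposition~\ref{prop:simplify} directly by induction on the rounds of the test: in the base case it uses exactly the multiplicativity $\homc{P^r,G^v}=\homc{P_1^r,G^v}\cdot\homc{P_2^r,G^v}$ to show that the initial $\wlk{\mathcal F\cup\{P_1^r,P_2^r\}}$-colouring refines the initial $\wlk{\mathcal F\cup\{P^r\}}$-colouring, and then propagates this round by round through the \textsc{Hash} updates --- i.e.\ precisely the ``alternative'' you sketch in your last paragraph. Your main argument instead reduces everything to Theorem~\ref{thm:char}: you observe (correctly) that joining a backbone vertex with a copy of $P^r=P_1^r\star P_2^r$ yields the same graph as joining it with one copy of $P_1^r$ and one of $P_2^r$, so every $(\mathcal F\cup\{P^r\})$-pattern tree is an $(\mathcal F\cup\{P_1^r,P_2^r\})$-pattern tree (with the same backbone, hence the same depth), and then you apply the forward direction of Theorem~\ref{thm:char} for $\mathcal F\cup\{P_1^r,P_2^r\}$ and the backward direction for $\mathcal F\cup\{P^r\}$; the final composition with Proposition~\ref{prop:wlupper} and the $\mathsf{GIN}$-style simulation correctly lifts the statement to the $\MPNN$ level. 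Both arguments are sound; the trade-off is that your route leans on the full strength of Theorem~\ref{thm:char} (in particular its harder, converse direction), whereas the paper's induction is elementary and self-contained, while your route has the advantage of making the containment of pattern-tree families explicit and of working uniformly per depth/round without any bookkeeping about colour refinement.
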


Stated differently, this means that adding to $\mathcal F$ any pattern which is the join of two patterns already in 
$\mathcal F$ does not add expressive power. 

Thus, instead of using, for example, the pattern \raisebox{-2\dp\strutbox}{\includegraphics[height=4.6ex]{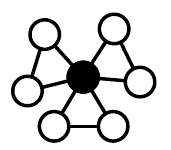}}, one should prefer to use instead the triangle \raisebox{-1.1\dp\strutbox}{\includegraphics[height=3ex]{K3.pdf}}. This result is in line with other advantages of smaller patterns: their homomorphism counts are easier to compute, and, since they are less specific, they should tend to produce less over-fitting. 

\subsection{Detecting when patterns add expressiveness} 
Joining patterns into new patterns does not give extra expressive power, but what about patterns which are not joins? 
We provide next a useful recipe for detecting when a pattern does add expressive power. 
We recall that the \textit{core of a graph} $P$ is its unique (up to isomorphism) induced subgraph which is a core.

\begin{theorem} \label{thm:increase}
Let $\mathcal{F}$ be a finite set of patterns and let
$Q^r$ be a pattern whose core has treewidth $k$. Then,
$\wlk{\mathcal F\cup \{Q^r\}}$ is more expressive than 
$\wlk{\mathcal F}$ if every pattern $P^r\in\mathcal F$
satisfies one of the following conditions: (i)~$P^r$  has treewidth $< k$; or (ii)~
$P^r$ does not map homomorphically to $Q^r$.
\end{theorem}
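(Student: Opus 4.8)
The plan is to mimic the strategy used for Proposition~\ref{prop:cycles} and Theorem~\ref{theo:games}: find two graphs $G$ and $H$ that are indistinguishable by $\wlk{\mathcal F}$ but are distinguished by $\wlk{\mathcal F\cup\{Q^r\}}$, and then invoke the homomorphism-count characterisation of Theorem~\ref{thm:char}. By that theorem, it suffices to produce $G$ and $H$ such that (a) $\homc{T^r,G^v}=\homc{T^r,H^w}$ for every $\mathcal F$-pattern tree $T^r$ (equivalently, in the graph-level statement, $\homc{T,G}=\homc{T,H}$ for every $\mathcal F$-pattern tree $T$), yet (b) $\homc{T'^{\,r},G^v}\neq\homc{T'^{\,r},H^w}$ for some $\mathcal (F\cup\{Q^r\})$-pattern tree $T'^{\,r}$ — and for this it is enough that $\homc{Q^r,G^v}\neq\homc{Q^r,H^w}$ itself, since $Q^r$ is an $(\mathcal F\cup\{Q^r\})$-pattern tree of depth $0$ (just the root joined with one copy of $Q$). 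So the real content is: build $G,H$ that agree on all $\mathcal F$-pattern-tree homomorphism counts but disagree on the homomorphism count of $Q$.

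To do this I would first reduce to the core. Since $\homc{Q,-}=\homc{\mathrm{core}(Q),-}$ (a graph and its core have the same homomorphism counts into any graph), and the core of $Q$ has treewidth $k$, it is harmless to replace $Q$ by its core and thus assume $Q^r$ itself is a core of treewidth $k$. Now I want graphs distinguished by $\homc{Q,-}$ but not by $\homc{J,-}$ for any $J$ in the relevant class. The natural source of such a pair is exactly the machinery behind Theorem~\ref{theo:games} / the strictness of the $\wlk{k}$ hierarchy: because $Q$ is a core of treewidth $k$, one obtains (via the finite-variable-logic / Atserias–Bova–Dalmau-type constructions cited) graphs $G,H$ with $\homc{Q,G}\neq\homc{Q,H}$ while $\homc{F,G}=\homc{F,H}$ for every graph $F$ of treewidth $<k$. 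The key point is then to argue that \emph{every} $\mathcal F$-pattern tree $T$ has treewidth $<k$, so that $G,H$ are automatically $\wlk{\mathcal F}$-indistinguishable by Theorem~\ref{thm:char}.

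The treewidth bookkeeping is where conditions (i) and (ii) enter, and this is the main obstacle. An $\mathcal F$-pattern tree $T$ is built from a tree backbone $S$ by attaching copies of patterns $P^r\in\mathcal F$ at vertices of $S$. Attaching a pattern of treewidth $t$ at a single vertex of a graph raises the treewidth to $\max(t,\text{current},1)$ — gluing at one vertex never forces the treewidth above the max of the parts — so an $\mathcal F$-pattern tree has treewidth $\max\bigl(1,\max_{P\in\mathcal F}\operatorname{tw}(P)\bigr)$, with the caveat that only those patterns that actually receive a homomorphism from the analysis contribute. For patterns $P^r$ satisfying (i), $\operatorname{tw}(P)<k$, so their contribution is fine. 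For a pattern $P^r$ satisfying (ii) but \emph{not} (i) — i.e.\ $\operatorname{tw}(P)\ge k$ but $P$ does not map homomorphically to $Q$ — I would argue that such a $P$ contributes \emph{nothing} to distinguishing the specific $G,H$: since $G$ and $H$ are taken from the construction as (essentially) blow-ups / $Q$-colourable-type graphs, any $\mathcal F$-pattern tree using a copy of such a $P$ has homomorphism count $0$ into both $G$ and $H$, because a copy of $P$ inside the pattern tree cannot be mapped into $G$ or $H$ at all (this is the place where I must choose $G,H$ so that $\hom(P,G)=\hom(P,H)=0$ for every $P\in\mathcal F$ failing condition (i); concretely, take $G,H$ to be homomorphic to $Q$ — e.g.\ products/blow-ups of $Q$ — so that $\hom(P,G)\neq 0$ would force $P\to G\to Q$, contradicting (ii)). Hence every $\mathcal F$-pattern tree either has treewidth $<k$ or evaluates to $0$ on both graphs, so $\homc{T,G}=\homc{T,H}$ throughout, while $\homc{Q,G}\neq\homc{Q,H}$. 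Finally, to get the pair with $\homc{Q,G}\ne\homc{Q,H}$ while keeping $G,H$ homomorphic to $Q$ and agreeing on all treewidth-$<k$ counts simultaneously, I would take $G,H$ of the form $Q\times A$ and $Q\times B$ (categorical product) for a suitable pair $A,B$ of treewidth-$<k$ graphs that are $\wlk{(k-1)}$-equivalent but have $\hom(Q,A)\ne\hom(Q,B)$ — existence of such $A,B$ being exactly the content extracted from the Theorem~\ref{theo:games}-type argument, using that $Q$ is a core of treewidth $k$ — and then check that $\hom(-,Q\times A)=\hom(-,Q)\cdot\hom(-,A)$ makes all the counts work out. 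Rooting everything at the appropriate vertex handles the vertex-level statement. The delicate step, and the one I expect to require the most care, is precisely this last construction: arranging a \emph{single} pair $G,H$ that simultaneously kills all patterns of condition (ii) (via the no-homomorphism-to-$Q$ property) and all patterns of condition (i) (via $\wlk{(k-1)}$-equivalence / treewidth-$<k$ homomorphism indistinguishability), while still separating on $Q$.
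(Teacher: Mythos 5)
Up to its final step your plan coincides with the paper's proof: replace $Q^r$ by its core $c(Q)^r$ (one caveat: a graph and its core have the same homomorphism counts only up to being zero or nonzero, not equal counts --- but zero versus nonzero is all you need here), take for $G$ and $H$ the graphs built from a treewidth-$k$ core in the proof of Theorem~\ref{theo:games}, kill every pattern $P^r\in\mathcal F$ violating condition (i) by observing that both $G$ and $H$ map homomorphically to $c(Q)$, so any $\mathcal F$-pattern tree containing a copy of such a $P$ has homomorphism count $0$ into both graphs (this is exactly the role of the paper's Lemma~\ref{lem-getrid-p}), and handle the remaining patterns, all of treewidth $<k$, through $G\equiv_{\mathsf{C}^k}H$ together with Proposition~\ref{prop:lgp-in-kwl}; Theorem~\ref{thm:char} then closes the argument as you say.

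The genuine gap is the construction you actually commit to at the end. You set $G=Q\times A$ and $H=Q\times B$ for graphs $A,B$ of treewidth $<k$ that are $\wlk{(k-1)}$-equivalent but satisfy $\homc{Q,A}\neq\homc{Q,B}$, and you claim the existence of such a pair is ``exactly the content'' of the Theorem~\ref{theo:games}-type argument. No such pair can exist in the very cases the theorem is about: for $Q=K_{k+1}$, the canonical core of treewidth $k$, every graph of treewidth $<k$ is $K_{k+1}$-free, so $\homc{Q,A}=0=\homc{Q,B}$ for all admissible $A,B$ (odd cycles for $k=2$ fail for the same reason, since they admit no homomorphism into a forest). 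Hence your route cannot even prove the clique case. Moreover, the graphs supplied by the Theorem~\ref{theo:games} construction are not of treewidth $<k$ --- the companion with $\homc{c(Q),H}\neq 0$ contains a homomorphic image of $c(Q)$, which already forces treewidth $\geq k$ when $Q$ is a clique --- so that argument does not ``extract'' the pair you posit. The repair is simply to drop the product detour: the CFI-style companions $G,H$ of $c(Q)$ from Theorem~\ref{theo:games} already enjoy all three required properties simultaneously, namely $\homc{c(Q),G}=0\neq\homc{c(Q),H}$, $G\equiv_{\mathsf{C}^k}H$, and homomorphisms $G\to c(Q)$ and $H\to c(Q)$ (a property of that construction recorded in the cited works of Atserias et al.\ and Bova--Chen). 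With those graphs your own earlier argument goes through verbatim, and that is precisely the paper's proof.
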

As an example, $\wlk{\{K_{3},\ldots, K_k\}}$ is more expressive than $\wlk{\{K_3,\ldots, K_{k-1}\}}$ for any $k>3$ because of the first condition. Similarly, $\wlk{\{ K_3,\ldots, K_k, C_\ell\}}$ is more expressive than $\wlk{\{K_3,\ldots,K_k\}}$ for odd cycles $C_\ell$. Indeed, such cycles are cores and no clique $K_k$ with $k>2$ maps homomorphically to $C_\ell$.

\section{Experiments \label{sec:experiments}}
We next showcase that $\GNN$
architectures benefit when  homomorphism counts of patterns are added
as additional vertex features. For patterns where homomorphism and subgraph
isomorphism counts differ (e.g., cycles) we compare with $\GSNs$ \citep{bouritsas2020improving}.
We use the benchmark for $\GNNs$ by \citet{dwivedi2020benchmarkgnns}, as it offers a broad choice of models, datasets and graph classification tasks.

\paragraph{\boldmath Selected $\GNNs$.}
We select the best architectures from  \citet{dwivedi2020benchmarkgnns}: Graph Attention Networks ($\mathsf{GAT}$) \citep{GAT}, Graph Convolutional Networks ($\mathsf{GCN}$) \citep{kipf-loose}, $\mathsf{GraphSage}$ \citep{hyl17}, Gaussian Mixture Models ($\mathsf{MoNet}$) \citep{monet} and $\mathsf{GatedGCN}$ \citep{gated}. We leave out various linear architectures such as $\mathsf{GIN}$ \citep{xhlj19} as they were shown to perform poorly on the benchmark. 

\paragraph{Learning tasks and datasets.} As in \citet{dwivedi2020benchmarkgnns} we consider (i)~graph regression and the ZINC dataset \citep{ZINCoriginal,dwivedi2020benchmarkgnns}; (ii)~vertex classification and the PATTERN and CLUSTER datasets \citep{dwivedi2020benchmarkgnns}; and (iii)~link prediction and the COLLAB dataset  \citep{hu2020open}. We omit graph classification: for this task, the graph datasets from \citet{dwivedi2020benchmarkgnns} originate from image data and hence vertex neighborhoods carry little information. 

\paragraph{Patterns.} We extend the initial features of vertices with 
homomorphism counts of cycles $C_\ell$ of length $\ell\leq 10$, when molecular data (ZINC) is concerned, and with homomorphism counts of $k$-cliques $K_k$ for $k\leq 5$, when social or collaboration data  (PATTERN, CLUSTER, COLLAB) is concerned. We use the $z$-score of the logarithms of homomorphism counts to make them standard-normally distributed and comparable to other features.
Section~\ref{sec:patterns} tells us that all these patterns will increase expressive power (Theorem~\ref{thm:increase} and Proposition~\ref{prop:cycles}) and are ``minimal''
in the sense that they are not the join of smaller patterns (Proposition~\ref{prop:simplify}). Similar pattern choices were used in
\citet{bouritsas2020improving}. We use DISC \citep{ZhangY0ZC20}\footnote{We thank the authors for providing us with an executable.},
 a tool specifically built to get homomorphism counts for large datasets. Each model is trained and tested independently using combinations of  patterns.\looseness=-1

\paragraph{\boldmath Higher-order $\GNNs$.} 
We do not compare to higher-order $\GNNs$ since this was already done by \citet{dwivedi2020benchmarkgnns}. They included ring-$\GNNs$ (which outperform $\mathsf{2WL}$-$\mathsf{GNNs}$) and $\mathsf{3WL}$-$\mathsf{GNNs}$ in their experiments, and these were outperformed by our selected ``linear'' architectures. Although the increased expressive 
power of higher-order $\GNNs$ may be beneficial for learning, scalability and learning issues (e.g., loss divergence) hamper their applicability \citep{dwivedi2020benchmarkgnns}.  Our approach thus certainly outperforms higher-order $\GNNs$ with respect to the benchmark. 

\paragraph{Methodology.}
Graphs were divided between training/test as instructed by \citet{dwivedi2020benchmarkgnns}, and all numbers reported correspond to the test set. The reported performance is the average over four runs with different random seeds for the respective combinations of patterns in $\mathcal{F}$, model and dataset. Training times were comparable to the baseline of training models without any augmented features.\footnote{Code to reproduce our experiments is available at \url{https://github.com/LGP-GNN-2021/LGP-GNN}} All models for ZINC, PATTERN and COLLAB were trained on a GeForce GTX 1080 Ti GPU, for CLUSTER a Tesla V100-SXM3-32GB GPU was used.

\smallskip
\noindent
Next we summarize our results for each learning task separately. 

\paragraph{Graph regression.}
The first task of the benchmark is the prediction of the solubility of molecules in the ZINC dataset \citep{ZINCoriginal,dwivedi2020benchmarkgnns}, a dataset of about $12\,000$ graphs of small size, each of them consisting of 
one particular molecule. 
The results in  Table \ref{ZINC_table} show that each of our models indeed improves by adding homomorphism counts of cycles and the best result
is obtained by considering all cycles.  $\GSNs$  were applied to the ZINC dataset as well  \citep{bouritsas2020improving}.
In Table \ref{ZINC_table} we also report
results by using subgraph isomorphism counts (as in $\GSNs$): homomorphism counts generally provide better results than subgraph isomorphisms counts. Our best result (framed in  Table \ref{ZINC_table}) is competitive to the value of $0.139$ reported in \citet{bouritsas2020improving}. By looking at the full results, we see that some cycles are much more important than others. Table \ref{GAT_table} shows which cycles have greatest impact for the worst-performing baseline, $\mathsf{GAT}$.
Remarkably, adding homomorphism counts makes the $\mathsf{GAT}$ model competitive with the best performers of the benchmark. 

\begin{table}[t]
\caption{Results for the ZINC dataset show that homomorphism (hom) counts of cycles improve every model. We compare the mean absolute error (MAE) of each model without any homomorphism count (baseline), against the model augmented with the  hom count,  and with subgraph isomorphism (iso) counts of $C_3$--$C_{10}$.}
	\label{ZINC_table}
	\vskip 0.15in
\centering
			\begin{sc}
				\begin{tabular}{p{2.5cm}p{2.75cm}p{2.7cm}p{2.7cm}}	
					\toprule
					Model  & MAE (base) & MAE (hom) & MAE (iso)\\
					\midrule
					$\mathsf{GAT}$   & 0.47$\pm$0.02 & \textbf{0.22}$\pm$\textbf{0.01} & 0.24$\pm$0.01\\
					$\mathsf{GCN}$    & 0.35$\pm$0.01 & \textbf{0.20}$\pm$\textbf{0.01} & 0.22$\pm$0.01\\
					$\mathsf{GraphSage}$  & 0.44$\pm$0.01 & \textbf{0.24}$\pm$\textbf{0.01} & 0.24$\pm$0.01\\
					$\mathsf{MoNet}$   & 0.25$\pm$0.01 & 0.19$\pm$0.01 & \textbf{0.16}$\pm$\textbf{0.01}\\
					$\mathsf{GatedGCN}$   & 0.34$\pm$0.05 & \!\!\!\fbox{\textbf{0.1353}$\pm$\textbf{0.01}} & 0.1357$\pm$0.01\\
					\bottomrule
				\end{tabular}
			\end{sc}
\end{table}
\begin{table}[t]
			\caption{The effect of different cycles for the $\mathsf{GAT}$ model over the ZINC dataset, using mean absolute error. }
		\label{GAT_table}
		\vskip 0.15in
		\centering
				\begin{sc}
					\begin{tabular}{p{2.5cm}c}
						\toprule
						Set $(\mathcal F)$ &  MAE \\
						\midrule
						None& 0.47$\pm$0.02  \\
						$\{C_3\}$ & 0.45$\pm$0.01 \\
						$\{C_4\}$ & 0.34$\pm$0.02 \\
						$\{C_6\}$ & 0.31$\pm$0.01 \\
		$\{C_5,C_6\}$ & 0.28$\pm$0.01 \\
						$\{C_3,\ldots,C_6\}$ & 0.23$\pm$0.01\\
						$\{C_3,\ldots,C_{10}\}$ & \textbf{0.22}$\pm$\textbf{0.01}\\
												\bottomrule
					\end{tabular}
				\end{sc}
\end{table}

\paragraph{Vertex classification.}
The next task in the benchmark corresponds to vertex classification. Here we analyze two datasets, PATTERN and CLUSTER \citep{dwivedi2020benchmarkgnns}, both containing over $12\,000$ artificially generated graphs 
 resembling social networks or communities. The task is to predict whether a vertex belongs to a particular cluster or pattern, and 
all results are measured using the accuracy of the classifier. 
Also here, our results show that homomorphism counts, this times of cliques, tend to improve the accuracy of our models. Indeed, for the PATTERN dataset we see an improvement in all models but $\mathsf{GatedGCN}$ 
(Table \ref{PATTERN_table}), and three models are improved in the CLUSTER dataset (reported in the appendix). Once again, the best performer in this task is a model that uses homomorphism counts. We remark that for cliques, homomorphism counts coincide with subgraph isomorphism counts (up to a constant factor) so our extensions behave like $\GSNs$.

\begin{table}[t]
	\caption{Results for the PATTERN dataset show that homomorphism counts improve all models except $\mathsf{GatedGCN}$. We compare weighted accuracy of each model without any homomorphism count (baseline) against the model augmented with the counts of the set $\mathcal F$ that showed best performance (best $\mathcal F$).}
	\label{PATTERN_table}
	\vskip 0.15in
	\centering
	\begin{sc}
				\begin{tabular}{p{5.2cm}p{4.9cm}p{4.9cm}}
					\toprule
					Model + best $\mathcal F$ &  Accuracy baseline & Accuracy best\\
					\midrule
					$\mathsf{GAT}$\!  $\{K_3,K_4,K_5\}$    & 78.83 $\pm$ 0.60  & 85.50 $\pm$ 0.23  \\
					$\mathsf{GCN}$\!  $\{K_3,K_4,K_5\}$    &  71.42 $\pm$ 1,38 &  82.49 $\pm$ 0.48\\
					$\mathsf{GraphSage}$ $\{K_3,K_4,K_5\}$ & 70.78 $\pm$ 0,19 & 85,85 $\pm$ 0.15 \\
					$\mathsf{MoNet}$  $\{K_3,K_4,K_5\}$   & 85.90 $\pm$ 0,03 &  \textbf{86.63} $\pm$ \textbf{0.03}\\
					$\mathsf{GatedGCN}$ $\{\emptyset\}$ & 86.15 $\pm$ 0.08  & 86.15 $\pm$ 0.08 \\
					\bottomrule
				\end{tabular}
			\end{sc}
\end{table}

\vspace{-2ex}
\paragraph{Link prediction}
In our final task we consider a single graph, COLLAB \citep{hu2020open}, with over $235\,000$ vertices, containing information about the collaborators in an academic network, and the task at hand is to predict future collaboration. The metric used in the benchmark is the Hits@50 evaluator \citep{hu2020open}. Here, positive collaborations are ranked among randomly sampled negative collaborations, and the metric is the ratio of positive edges that are ranked at place 50 or above.
Once again, homomorphism counts of cliques improve the performance of all models, see Table \ref{COLLAB_table}. An interesting observation is that this time the best set of features (cliques) does depend on the model, although the best model uses all cliques again.

\begin{table}[t]
	\caption{All models improve the Hits@50 metric over the COLLAB dataset. We compare each model without any homomorphism count (baseline) against the model augmented with the counts of the set of patterns that showed best performance (best $\mathcal F$).}
	\label{COLLAB_table}
	\vskip 0.15in
	\centering
			\begin{sc}
	\begin{tabular}{p{5.2cm}p{4.9cm}p{4.9cm}}
					\toprule
					Model + best $\mathcal F$&  Hits@50 baseline & Hits@50 best \\
					\midrule
					$\mathsf{GAT}$  \!  $\{K_3\}$   & 50.32$\pm$0.55 & 52.87$\pm$0.87 \\
					$\mathsf{GCN}$ \!  $\{K_3,K_4,K_5\}$    & 51.35$\pm$1.30 & \textbf{54.60}$\pm$\textbf{1.01} \\
					$\mathsf{GraphSage}$ \!  $\{K_5\}$    & 50.33$\pm$0.68 & 51.39$\pm$1.23 \\
					$\mathsf{MoNet}$ \!  $\{K_4\}$    & 49.81$\pm$1.56 & 51.76$\pm$1.38 \\
					$\mathsf{GatedGCN}$ \!  $\{K_3\}$  & 51.00$\pm$2.54 & 51.57$\pm$0.68 \\
					\bottomrule
				\end{tabular}
			\end{sc}
\end{table}

\paragraph{Remarks.}
The best performers in each task use homomorphism counts,
in accordance with our theoretical results, showing that such counts do extend the power of $\MPNNs$.
Homomorphism counts are also cheap to compute. For COLLAB, the largest graph in our experiments, the homomorphism counts of all patterns we used, for all vertices, could be computed by  DISC \citep{ZhangY0ZC20} in less than 3 minutes. 
One important remark is that \textit{selecting} the best set of features is still a challenging endeavor. Our theoretical results help us streamline this search, but for now it is still an exploratory task. In our experiments we first looked at adding each pattern individually, and then tried with combinations of those that showed the best improvements. This feature selection strategy incurs considerable cost, both computational and environmental, and needs further investigation.

\section{Conclusion}
We propose local graph parameter enabled $\MPNNs$ as an efficient way to increase the expressive
power of $\MPNNs$. 
 The take-away message is that enriching features with
homomorphism counts of small patterns is a promising add-on to any $\GNN$ architecture, with little to no overhead.
Regarding future work, the problem of which parameters to choose deserves further study. 
In particular, we plan to provide a complete characterisation of when adding a new pattern to $\mathcal F$ adds expressive power to the $\wlk{\mathcal F}$-test.

\appendix

\section*{Appendix}
\section{Proofs of Section~\ref{sec:exp-power}}
We use the following notions. Let $G$ and $H$ be graphs, $v\in V_G$, $w\in V_H$, and $d\geq 0$. The vertices $v$ and $w$ are said to be indistinguishably by $\wlk{\mathcal F}$ in round $d$, denoted by $(G,v)\equiv_{\wlk{\mathcal{F}}}^{(d)} (H,w)$, iff
$\chi_{\mathcal F,G,v}^{(d)}=\chi_{\mathcal F,H,w}^{(d)}$. Similarly, 
$G$ and $H$ are said to be indistinguishable by $\wlk{\mathcal F}$ in round $d$, denoted by $G\equiv_{\wlk{\mathcal{F}}}^{(d)} H$, iff
$\ldbl \chi_{\mathcal F,G,v}^{(d)}\mid v\in V_G\rdbl=\ldbl\chi_{\mathcal F,H,w}^{(d)}\mid w\in V_H\rdbl$. Along the same lines, $v$ and $w$ are indistinguishable by an $\mathcal F$-$\MPNN$ $M$, denoted by  $(G,v)\equiv_{M,\mathcal F}^{(d)} (H,w)$, iff
$\mathbf{x}_{M,\mathcal F,G,v}^{(d)}=\mathbf{x}_{M, \mathcal F,H,w}^{(d)}$. Similarly, 
$G$ and $H$ are said to be indistinguishable by $M$ in round $d$, denoted by $G\equiv_{M,\mathcal F}^{(d)} H$, iff
$\ldbl \mathbf{x}_{M,\mathcal F,G,v}^{(d)}\mid v\in V_G\rdbl=\ldbl\mathbf{x}_{M,\mathcal F,H,w}^{(d)}\mid w\in V_H\rdbl$.

\subsection{Proof of Proposition \ref{prop:wlupper}}
We show that the class of $\mathcal F$-$\MPNNs$ is upper bounded in expressive power by $\wlk{\mathcal F}$. The proof is analogous to the proof in \citet{grohewl} showing that $\MPNNs$ are bounded by $\wl{}$.

We show a stronger result by upper bounding $\mathcal F$-$\MPNNs$ by $\wlk{\mathcal F}$-test, layer by layer.
More precisely, we show that for every $\mathcal F$-$\MPNN$ $M$, graphs $G$ and $H$,
vertices $v\in V_G$, $w\in V_H$, and $d\geq 0$,
\begin{itemize}\setlength{\itemsep}{0pt}
  \setlength{\parskip}{0pt}
\item[\textbf{(1)}] $(G,v)\equiv_{\wlk{\mathcal{F}}}^{(d)} (H,w) \ \ \Longrightarrow \ \ (G,v)\equiv_{M,\mathcal{F}}^{(d)} (H,w)$; and
\item[\textbf{(2)}] $G\equiv_{\wlk{\mathcal{F}}}^{(d)}H \ \ \Longrightarrow \ \ G\equiv_{M,\mathcal{F}}^{(d)} H$.
\end{itemize}
Clearly, these imply that $\mathcal F$-$\MPNNs$ are bounded in expressive power by $\wlk{\mathcal F}$, both when vertex and graph distinguishability are concerned.

\smallskip
\noindent
\textbf{Proof of implication (1).} We show this implication by induction on the number of rounds.

\noindent
\underline{Base case.}
We first assume $(G, v) \equiv_{\wlk{\mathcal F}}^{(0)} (H,w)$. In other words,
$\chi_{\mathcal F,G,v}^{(0)}=\chi_{\mathcal F,H,w}^{(0)}$ and thus,
$\chi_{G}(v)=\chi_H(w)$ and for every $P^r\in\mathcal F$ we have 
$
\homc{P^r,G^v}=\homc{P^r,H^w}
$.
By definition, $\mathbf{x}^{(0)}_{M,\mathcal F,G,v}$ is a hot-one encoding of $\chi_G(v)$ combined with $\homc{P^r,G^v}$ for $P^r\in\mathcal F$, for every $\MPNN$ $M$, graph $G$ and vertex $v \in V_G$. Since these agree with the labelling and homomorphism counts for vertex $w\in V_H$ in graph $H$, we also have that 
$\mathbf{x}^{(0)}_{M,\mathcal F,G,v}=\mathbf{x}^{(0)}_{M,\mathcal F,H,w}$, as desired.

\noindent
\underline{Inductive step.}
We next assume $(G, v) \equiv_{\wlk{\mathcal F}}^{(d)} (H,w)$. By the definition of $\wlk{\mathcal F}$ this is equivalent to 
$(G, v) \equiv_{\wlk{\mathcal F}}^{(d-1)}  (H,w)$ and 
$\ldbl \chi_{\mathcal F, G,v'}^{(d-1)}\mid v'\in N_G(v)\rdbl=\ldbl \chi_{\mathcal F, H,w'}^{(d-1)}\mid w'\in N_H(w)\rdbl$. By the induction hypothesis, this implies 
$(G, v) \equiv_{M,\mathcal F}^{(d-1)}  (H,w)$ and there exists a bijection $\beta:N_G(v)\to N_H(w)$
such that $(G, v') \equiv_{M,\mathcal F}^{(d-1)}  (H,\beta(v'))$ for every $v'\in N_G(v)$, and every $\mathcal F$-$\MPNN$ $M$. In other words,
$\mathbf{x}^{(d-1)}_{M,\mathcal F,G,v}=\mathbf{x}^{(d-1)}_{M,\mathcal F,H,w}$ and 
$\mathbf{x}^{(d-1)}_{M,\mathcal F,G,v'}=\mathbf{x}^{(d-1)}_{M,\mathcal F,H,\beta(v')}$ for every $v'\in N_G(v)$. By the definition of $\mathcal F$-$\MPNNs$ this implies that 
$\textsc{Comb}^{(d)}\bigl(\ldbl \mathbf{x}^{(d-1)}_{M,\mathcal F,G,v'} \mid v'\in N_G(v)\rdbl\bigr)$ is equal to 
$\textsc{Comb}^{(d)}\bigl(\ldbl \mathbf{x}^{(d-1)}_{M,\mathcal F,H,w'} \mid w'\in N_G(w)\rdbl\bigr)$ and hence also, after applying $\textsc{Upd}^{(d)}$,
$\mathbf{x}^{(d)}_{M,\mathcal F,G,v}=\mathbf{x}^{(d)}_{M,\mathcal F,H,w}$. That is,
$(G, u) \equiv_{M,\mathcal F}^{(d)}  (H,w)$, as desired.

\smallskip
\noindent
\textbf{Proof of implication (2).}
The implication $G \equiv_{\wlk{\mathcal{F}}}^{(d)} H \ \ \Longrightarrow \ \ G \equiv_{M,\mathcal{F}}^{(d)} H$ now easily follows. Indeed, $G \equiv_{\wlk{\mathcal{F}}}^{(d)} H$ is equivalent to $\ldbl \chi_{\mathcal F,G,v}^{(d)}\mid v\in V_G\rdbl=\ldbl \chi_{\mathcal F,H,w}^{(d)}\mid w\in V_H\rdbl$. In other words, there exists a bijection $\alpha:V_G\to V_H$ such that 
$\chi_{\mathcal F,G,v}^{(d)}=\chi_{\mathcal F,H,\alpha(v)}^{(d)}$ for every $v\in V_G$. 
 We have just shown that this implies  $\mathbf{x}^{(d)}_{M,\mathcal F,G,v}=\mathbf{x}^{(d)}_{M,\mathcal F,H,\alpha(v)}$ for every $v\in V_G$ and
for every $\mathcal F$-$\MPNN$ $M$. Hence, $\ldbl \mathbf{x}^{(d)}_{M,\mathcal F,G,v}\mid v\in V_G\rdbl=\ldbl \mathbf{x}^{(d)}_{M,\mathcal F,H,w}\mid w\in V_H\rdbl$, or $G\equiv_{M,\mathcal F}^{(d)} H$, as desired. \qed

\subsection{Proof of Theorem~\ref{thm:char}}
We show that for any finite collection $\mathcal{F}$ of patterns, graphs $G$ and $H$, vertices $v\in V_G$ and $w\in V_H$, and $d\geq 0$:
\begin{equation}
(G,v)\equiv_{\wlk{\mathcal{F}}}^{(d)} (H,w) \ \Longleftrightarrow   \  \mathsf{hom}(T^r,G^v)=\mathsf{hom}(T^r,H^w), \label{implication-vertex}
\end{equation}
for every $\mathcal{F}$-pattern tree $T^r$ of depth at most $d$. Similarly, 
\begin{equation}
G\equiv_{\wlk{\mathcal{F}}}^{(d)} H \ \Longleftrightarrow   \  \mathsf{hom}(T,G)=\mathsf{hom}(T,H), \label{implication-graph}
\end{equation}
for every (unrooted) $\mathcal{F}$-pattern tree of depth at most $d$.

For a given set $\mathcal F=\{P_1^r,\ldots, P_\ell^r\}$ of patterns and $\mathbf{s}=(s_1,\ldots,s_\ell)\in\Nb^{\ell}$, we denote by 
$\mathcal F^{\mathbf{s}}$ the graph pattern of the form 
$
(P_1^{s_1}\star\cdots\star P_\ell^{s_\ell})^r
$, that is, we join $s_1$ copies of $P_1$, $s_2$ copies of $P_2$ and so on.

\smallskip
\noindent
\textbf{Proof of equivalence~(\ref{implication-vertex}).}
The proof is by induction on the number of rounds $d$.

\fbox{$\Longrightarrow$}
We first consider the implication 
$(G,v)\equiv_{\wlk{\mathcal F}}^{(d)} (H,w)
\Longrightarrow \homc{T^r,G^v}=\homc{T^r,H^w}$
for every $\mathcal F$-pattern tree $T^r$ of depth at most $d$.

\noindent
\underline{Base case.}
Let us first consider the base case, that is, $d=0$. In other words, we consider $\mathcal F$-pattern trees $T^r$ consisting of a single root $r$ adorned with a pattern $\mathcal F^{\mathsf{s}}$ for some $\mathbf{s}=(s_1,\ldots,s_\ell)\in\Nb^\ell$.
We note that due to the properties of the graph join operator:
\begin{equation}
\homc{T^r,G^v}=\prod_{i=1}^\ell \bigl(\homc{P_i^{r},G^v} \bigr)^{s_i}. \label{eq:base}
\end{equation}
Since,  $(G,v)\equiv_{\wlk{\mathcal F}}^{(0)} (H,w)$, we know that 
$\chi_G(v)=\chi_H(w)=a$ for some $a\in\Sigma$ and 
$\homc{P_i^r,G^v}=\homc{P_i^r,H^w}$ for all
$P_i^r\in\mathcal F$. This implies that the product in~(\ref{eq:base}) is equal to 
$$
\prod_{i=1}^\ell \bigl(\homc{P_i^{r},H^w}\bigr)^{s_i}=
\homc{T^r,H^w},
$$
as desired.

\noindent
\underline{Inductive step.}
Suppose next that we know that the implication holds for $d-1$. We assume now $(G,v)\equiv_{\wlk{\mathcal F}}^{(d)} (H,w)$ and consider an $\mathcal F$-pattern tree $T^r$ of depth at most $d$. Assume that in the backbone of $T^r$, the root $r$ has $m$ children $c_1,\ldots,c_m$, and denote by $T_1^{c_1},\ldots, T_m^{c_\ell}$ the 
$\mathcal F$-pattern trees in $T^r$ rooted at $c_i$.
Furthermore, we denote by $T_i^{(r,c_i)}$ the $\mathcal F$-pattern tree obtained from $T_i^{c_i}$ by attaching $r$ to $c_i$; $T_i^{(r,c_i)}$ has root $r$.
Let $\mathcal F^{\mathbf{s}}$ be the pattern in $T^r$ associated with $r$. The following equalities are readily verified:
\begin{align}
\homc{T^r,G^v}&=\homc{\mathcal F^{\mathbf{s}},G^v}\prod_{i=1}^m \homc{T_i^{(r,c_i)},G^v}
=\homc{\mathcal F^{\mathbf{s}},G^v}\prod_{i=1}^m \Bigl(\sum_{v'\in N_G(v)}
\homc{T_i^{c_i},G^{v'}}\Bigr).\label{eq:induct}
\end{align}
Recall now that we assume  $(G,v)\equiv_{\wlk{\mathcal F}}^{(d)} (H,w)$ and thus, in particular,
 $(G,v)\equiv_{\wlk{\mathcal F}}^{(0)} (H,w)$.
 Hence, by induction, $\homc{S^r,G^v}=\homc{S^r,H^w}$ for every 
 $\mathcal F$-pattern tree $S^r$ of depth $0$. In particular, this holds for $S^r=\mathcal F^{\mathbf{s}}$ and hence
 $$
 \homc{\mathcal F^{\mathbf{s}},G^v}=
  \homc{\mathcal F^{\mathbf{s}},H^w}.
 $$
 Furthermore, $(G,v)\equiv_{\wlk{\mathcal F}}^{(d)} (H,w)$ implies that 
  there exists a  bijection $\beta:N_G(v)\to N_H(w)$ 
  such that $(G,v')\equiv_{\wlk{\mathcal F}}^{(d-1)} (H,\beta(v'))$ for every $v'\in N_G(v)$. By induction, for every $v'\in N_G(v)$ there thus exists a unique $w'\in N_H(w)$ such that 
 $\homc{S^r,G^{v'}}=\homc{S^r,H^{w'}}$
 for every $\mathcal F$-pattern tree $S^r$ of depth at most $d-1$. In particular, 
 for every $v'\in N_G(v)$ there exists a $w'\in N_H(w)$ such that 
$$\homc{T_i^{c_i},G^{v'}}=\homc{T_i^{c_i},H^{w'}},$$
for each of the sub-trees $T_i^{c_i}$ in $T^r$. Hence,~(\ref{eq:induct}) is equal to 
$$\homc{\mathcal F^{\mathbf{s}},H^w}\prod_{i=1}^m \Bigl(\sum_{w'\in N_H(w)}\homc{T_i^{c_i},H^{w'}}\Bigr),
$$
which in turn is equal to $\homc{T^r,H^w}$, as desired.

\fbox{$\Longleftarrow$}
We next consider the other direction, that is, we show that when $\homc{T^r,G^v}
=\homc{S^r,H^w}$ holds for every $\mathcal F$-pattern tree $T^r$ of depth at most $d$, then $(G,v)\equiv_{\wlk{\mathcal F}}^{(d)} (H,w)$ holds. This is again verified by induction on $d$. This direction is more complicated and is similar to techniques used in \citet{Grohe20Lics}. In our induction hypothesis we further include that a \textit{finite} number of $\mathcal F$-pattern trees suffices to infer $(G,v)\equiv_{\wlk{\mathcal F}}^{(d)} (H,w)$ for graphs $G$ and $H$ and vertices $v\in V_G$ and $w\in V_H$.

\smallskip
\underline{Base case.}
Let us consider the base case $d=0$ first.
We need to show that $\chi_G(v)=\chi_H(w)$ and $\homc{P_i^r,G^v}=\homc{P_i^r,H^w}$ for
every $P_i^r\in\mathcal F$, since this implies $(G,v)\equiv_{\wlk{\mathcal F}}^{(0)} (H,w)$.

 We first observe that $\homc{T^r,G^v}=\homc{T^r,H^w}$ for every $\mathcal F$-pattern tree $T^r$ of depth $0$, implies that $v$ and $w$ must be assigned the same label, say $a$, by $\chi_G$ and $\chi_H$, respectively.

Indeed, if we take $T^r$ to consist of a single root $r$ labeled with $a$ (and thus
$r$ is associated with the pattern $\mathcal F^{\mathbf{0}}$),
then 
$\homc{T^r,G^v}=\homc{T^r,H^w}$ will be one if $\chi_G(v)=\chi_H(w)=a$ and zero otherwise.
This implies that $\chi_G(v)=\chi_H(w)=a$.

Next, we show that $\homc{P_i^r,G^v}=\homc{P_i^r,H^w}$ for every $P_i^r\in\mathcal F$.
It suffices to consider the $\mathcal F$-pattern tree $T^r_i$ consisting of a root $r$ joined with a single copy of $P_i^r$.

We observe that we only need a finite number of $\mathcal F$-pattern trees to infer  $(G,v)\equiv_{\wlk{\mathcal F}}^{(0)} (H,w)$. Indeed, suppose that $\chi_G$ and $\chi_H$ assign labels $a_1,\ldots,a_L$,
then we need $L$ single vertex trees with no patterns attached and root labeled with one of these labels. In addition, we need one $\mathcal F$-pattern tree for each pattern $P_i^r\in \mathcal F$ and each label $a_1,\ldots,a_L$. That is, we need $L(\ell+1)$ $\mathcal F$-pattern trees of depth $0$.

\smallskip
\underline{Inductive step.}
We now assume that the implication holds for $d-1$ and consider $d$. That is, we assume that if $\homc{T^r,G^v}=\homc{T^r,H^w}$ holds for every $\mathcal F$-pattern tree $T^r$ of depth at most $d-1$, then $(G,v)\equiv_{\wlk{\mathcal F}}^{(d-1)} (H,w)$ holds. Furthermore,
we assume that only a finite number $K$ of $\mathcal F$-pattern trees $S_1^r,\ldots,S_K^r$ of depth at most $d-1$ suffice to infer $(G,v)\equiv_{\wlk{\mathcal F}}^{(d-1)} (H,w)$.

So, for $d$, let us assume that
$\homc{T^r,G^v}=\homc{T^r,H^w}$ holds for every $\mathcal F$-pattern tree of depth at most $d$. We need to show  $(G,v)\equiv_{\wlk{\mathcal F}}^{(d)}(H,w)$ and that we can again assume that a finite number of $\mathcal F$-pattern trees of depth at most $d$ suffice to infer $(G,v)\equiv_{\wlk{\mathcal F}}^{(d)}(H,w)$.

By definition of $(G,v)\equiv_{\wlk{\mathcal F}}^{(d)}(H,w)$, we can, equivalently, show that
$(G,v)\equiv_{\wlk{\mathcal F}}^{(d-1)}(H,w)$
and that there exists  a bijection $\beta:N_G(v)\to N_H(w)$ such that 
$(G,v')\equiv_{\wlk{\mathcal F}}^{(d-1)}(H,\beta(v'))$
for every $v'\in N_G(v)$. 
That $(G,v)\equiv_{\wlk{\mathcal F}}^{(d-1)}(H,w)$ holds, is by induction, since $\homc{T^r,G^v}=\homc{T^r,H^w}$ for every $\mathcal F$-pattern tree of depth at most $d$ and thus also for every $\mathcal F$-pattern tree of depth at most $d-1$. We may thus focus on showing the existence of the bijection $\beta$.

Let $X,Y\in\{G,H\}$, $x\in V_X$ and $y\in V_Y$. We know, by induction and
the proof of the previous implication, 
 that $(X,x)\equiv_{\wlk{\mathcal F}}^{(d-1)} (Y,y)$ if and only if $\homc{S_i^r,X^x
}=\homc{S_i^r,Y^y}$ for each $i\in K$. 
Denote by $R_1,\ldots,R_e$  the equivalence class on
$V_X\cup V_Y$ induced by $\equiv_{\wlk{\mathcal F}}^{(d-1)}$.
Furthermore, define $N_{j,X}(x):=N_X(x)\cap R_j$ and let  $n_j=|N_{j,G}(v)|$ and $m_j=|N_{j,H}(w)|$ for $v\in V_G$ and $w\in V_H$, for  each $j\in [e]$. If we can show that $n_j=m_j$ for each $j\in[e]$, then this implies the existence of the desired bijection. 

Let $T_i^{r=a}$ be the $\mathcal F$-pattern tree of depth at most $d$ obtained by attaching $S_i^r$ to a new root vertex $r$ labeled with $a$.  We may assume that $v$ and $w$ both have label $a$, since their homomorphism counts for the single root trees with labels from $\Sigma$.
The root vertex $r$ is not joined with any $\mathcal F^{\mathbf{s}}$ (or alternatively it is joined with $\mathcal F^{\mathbf{0}}$). It will be convenient to denote the root of $S_i^r$ by $r_i$ instead of $r$. Then for each $i\in [K]$:
\allowdisplaybreaks
\begin{align*}
\homc{T_i^{r=a},G^v}&=\sum_{v'\in N_G(v)}\homc{S_i^{r_i},G^{v'}}=\sum_{j\in[e]} n_j\homc{S_i^{r_i},G^{v'_j}}\\
&=\sum_{j\in[e]} m_j\homc{S_i^{r_i},H^{w_j'}}=
\sum_{w'\in N_H(w)}\homc{S_i^{r_i},H^{w'}}=\homc{T_i^{r=a},H^w},
\end{align*}
where $v_j'$ and $w_j'$ denote arbitrary vertices in $N_{j,G}(v)$ and $N_{j,H}(w)$, respectively. Let us denote $\homc{S_i^{r_i},G^{v_j'}}$ by $a_{ij}$ and observe that this is equal to $\homc{S_i^{r_i},H^{w_j'}}$. Hence, we
know that for each $i\in[K]$: 
$$
\sum_{j\in[e]} a_{ij}n_j=\sum_{j\in[e]} a_{ij}m_j.
$$

Let us call a set $I\subseteq[K]$ compatible if all roots in $S_i^{r_i}$, for  $i\in I$, have the same label. Consider a vector $\mathbf{s}=(s_1,\ldots,s_K)\in\Nb^{K}$ and define its support as $\mathsf{supp}(\mathbf{s}):=\{ i\in [K]\mid s_i\neq 0\}$. We say that $\mathbf{s}$ is compatible if its support is. For such a compatible $\mathbf{s}$ we now define 
$T^{r=a,\mathbf{s}}$ to be the $\mathcal F$-pattern tree with root $r$ labeled with $a$, with one child $c$ which is joined with (and inheriting the label from) the following $\mathcal F$-pattern tree of depth $d-1$:
$$
\bigstar_{i\in\mathsf{supp}(\mathbf{s})} S_i^{s_i}.
$$
In other words, we simply join together powers of the $S_i^{r_i}$'s that have roots with the same label. Then for every compatible $\mathbf{s}\in\Nb^K$:
\allowdisplaybreaks
\begin{align*}
\homc{T^{r=a,\mathbf{s}},G^v}&=\sum_{v'\in N_G(v)} \prod_{i\in[K]}\bigl(\homc{S_i^{r_i},G^{v'}}\bigr)^{s_i}
=\sum_{j\in[e]} n_j \prod_{i\in[K]}\bigl(\homc{S_i^{r_i},G^{v_j'}}\bigr)^{s_i}\\
&=\sum_{j\in[e]} m_j \prod_{i\in[K]}\bigl(\homc{S_i^{r_i},H^{w_j'}}\bigr)^{s_i}
=\sum_{w'\in N_H(w)} \prod_{i\in[K]}\bigl(\homc{S_i^{r_i},H^{w'}}\bigr)^{s_i}\\
&=\homc{T_i^{r=a,\mathbf{s}},H^w},
\end{align*}
where, as before, $v_j'$ and $w_j'$ denote arbitrary vertices in $N_{j,G}(v)$ and $N_{j,H}(w)$, respectively. Hence, for any compatible  $\mathbf{s}\in\Nb^{K}$:
$$
\sum_{j\in[e]} n_j \prod_{i\in[K]} a_{ij}^{s_i}
=
\sum_{j\in[e]} m_j \prod_{i\in[K]} a_{ij}^{s_i}.
$$
We now continue in the same way as in the proof of Lemma 4.2 in \citet{Grohe20Lics}.
We repeat the argument here for completeness. Define $\mathbf{a}_j^{\mathbf{s}}:=\prod_{i=1}^K a_{ij}^{s_i}$ for each $j\in[e]$.
We assume, for the sake of contradiction, that there exists a $j\in[e]$ such that $n_j\neq m_j$. We choose such a $j_0\in[e]$ for which  $S=\mathsf{supp}(\mathbf{a}_{j_0})$ is inclusion-wise maximal.

We first rule out that $S=\emptyset$. Indeed, suppose that $S=\emptyset$. This implies that $\mathbf{a}_{j_0}=\mathbf{0}$. Now observe that 
$\mathbf{a}_j$ and $\mathbf{a}_{j'}$ are mutually distinct for all $j,j'\in[e]$, $j\neq j'$. Indeed, if they were equal then this would imply that $R_j=R_{j'}$.
Hence, $\mathsf{supp}(\mathbf{a}_j)\neq \emptyset$ for any $j\neq j_0$. We note that $n_j=m_j$ for all $j\neq j_0$ by the maximality of $S$. Hence, $n_{j_0}=n-\sum_{j\neq j_{0}}n_j=n-\sum_{j\neq {j_0}}m_j=m_{j_0}$, contradicting our assumption. Hence, $S\neq \emptyset$.

Consider $J:=\{ j\in[e]\mid \mathsf{supp}(\mathbf{a}_j)=S\}$. For each $j\in J$, consider the truncated vector $\hat{\mathbf{a}}_j:=(a_{ij}\mid i\in S)$.
We note that $\hat{\mathbf{a}}_j$, for $j\in J$, all have positive entries and are mutually distinct. Lemma 4.1 in \citet{Grohe20Lics} implies that we can find a vector (with non-zero entries) $\hat{\mathbf{s}}=(\hat{s}_i\mid i\in S)$ such that the numbers $\hat{\mathbf{a}}_j^{\hat{\mathbf{s}}}$
for $j\in J$ are mutually distinct as well. We next consider $\mathbf{s}=(s_1,\ldots,s_K)$ with $s_i=\hat{s}_i$ if $i\in S$ and $s_i=0$ otherwise. Then by definition of $\hat{\mathbf{s}}$, also $\mathbf{a}_j^{\mathbf{s}}$ for $j\in J$ are mutually distinct.

We next note that for every $p\in\Nb$,
$\mathbf{a}_j^{p\mathbf{s}}=(\mathbf{a}_j^{\mathbf{s}})^p$ and if we define $\mathbf{A}$ to be the $|J|\times |J|$-matrix such that 
$A_{jj'}:=\mathbf{a}_j^{j'\mathbf{s}}$ then this will be an invertible matrix (Vandermonde). We use this invertibility to show that $n_{j_0}=m_{j_0}$.

Let $\mathbf{n}_J:=(n_j\mid j\in J)$ and $\mathbf{m}_J=(m_j\mid j\in J)$. If we inspect the $j'$th entry of $\mathbf{n}_J\cdot \mathbf{A}$, then this is equal to
\begin{align*}
\sum_{j\in J} n_j\mathbf{a}_j^{j'\mathbf{s}}=\sum_{j\in[e]} n_j\mathbf{a}_j^{j'\mathbf{s}}-\sum_{\substack{j\in[e]\\S\not\subseteq \mathsf{supp}(\mathbf{a}_j)}} n_j\mathbf{a}_j^{j'\mathbf{s}}
-\sum_{\substack{j\in[e]\\S\subset \mathsf{supp}(\mathbf{a}_j)}} n_j\mathbf{a}_j^{j'\mathbf{s}}.
\end{align*}
We want to reduce the above expression to
$$
\sum_{j\in J} n_j\mathbf{a}_j^{j'\mathbf{s}}=\sum_{j\in[e]} n_j\mathbf{a}_j^{j'\mathbf{s}}-\sum_{\substack{j\in[e]\\S\subset \mathsf{supp}(\mathbf{a}_j)}} n_j\mathbf{a}_j^{j'\mathbf{s}}.
$$
To see that this holds, we verify that 
when $S\not\subseteq\mathsf{supp}(\mathbf{a}_j)$ then $\mathbf{a}_j^{j'\mathbf{s}}=0$. Indeed,
take an $\ell\in S$ such that $\ell\not\in\mathsf{supp}(\mathbf{a}_j)$.
Then, $\mathbf{a}_j^{j'\mathbf{s}}$ contains the factor $a_{\ell j}^{j's_{\ell}}=0^{s_{\ell}}$ with $s_{\ell}=\hat{s}_\ell\neq 0$. Hence, $\mathbf{a}_j^{j'\mathbf{s}}=0$.

Now, by the maximality of $S$, for all $j$ with $S\subset \mathsf{supp}(\mathbf{a}_j)$ we have $n_j=m_j$ and thus 
$$
\sum_{\substack{j\in[e]\\S\subset \mathsf{supp}(\mathbf{a}_j)}} n_j\mathbf{a}_j^{j'\mathbf{s}}=
\sum_{\substack{j\in[e]\\S\subset \mathsf{supp}(\mathbf{a}_j)}} m_j\mathbf{a}_j^{j'\mathbf{s}}.
$$
Since $\sum_{j\in[e]} n_j\mathbf{a}_j^{j'\mathbf{s}}=\sum_{j\in[e]} m_j\mathbf{a}_j^{j'\mathbf{s}}$, we thus also have that 
$$
\sum_{j\in J} n_j\mathbf{a}_j^{j'\mathbf{s}}
=
\sum_{j\in J} m_j\mathbf{a}_j^{j'\mathbf{s}}.
$$
Since this holds for all $j'\in J$, we have 
$\mathbf{n}_J\cdot \mathbf{A}=\mathbf{m}_J\cdot \mathbf{A}$ and by the invertibility of $\mathbf{A}$,
$\mathbf{n}_J=\mathbf{m}_J$. In particular, since $j_0\in J$, $n_{j_0}=m_{j_0}$ contradicting our assumption.

As a consequence, $n_j=m_j$ for all $j\in [e]$ and thus we have our desired bijection.

It remains to verify that we only need a finite number of $\mathcal F$-pattern trees to conclude that $n_j=m_j$ for all $j\in [e]$. In fact, the above proof indicates that we just need to check test for each root label $a$, we need to check identities for the finite number of pattern trees used to define the matrix $\mathbf{A}$.

\noindent
\textbf{Proof of equivalence~\ref{implication-graph}}
This equivalence is shown just like proof of Theorem 4.4. in \citet{Grohe20} with the additional techniques from Lemma 4.2
in \citet{Grohe20Lics}.

\fbox{$\Longrightarrow$}
We first show that $G\equiv_{\wlk{\mathcal F}}^{(d)} H$ implies
$\homc{T,G}=\homc{T,H}$ for unrooted
$\mathcal F$-pattern trees $T$ of depth at most $d$.

Assume that $V_X\cap V_Y=\emptyset$ for $X,Y\in\{G,H\}$. For $x\in V_X$ and $y\in V_Y$, define $x\sim_d y$ if and only if $\homc{T^r,X^x}=\homc{T^r,Y^y}$ for
all $\mathcal F$-pattern trees $T^r$ of depth at most $d$.
Let $R_1,\ldots,R_e$ be the $\sim_d$-equivalence classes
and for each $j\in[e]$, let  $p_j:=|R_j\cap V_G|$ and $q_j:=|R_j\cap V_H|$. Suppose that 
$G\equiv_{\wlk{\mathcal F}}^{(d)} H$. This implies that 
$p_j=q_j$ for every $j\in[e]$. 

Let $T$ be an unrooted $\mathcal F$-pattern tree of depth at most $d$, let $r$ be any vertex on the backbone of $T$, and let $T^r$ be the rooted $\mathcal F$-pattern tree obtained from $T$ by declaring $r$ as its root.
By definition,
for $X\in\{G,H\}$, any $x\in V_X\cap R_j$, $\homc{T^r,X^x}$ are all the same number, only dependent on $j\in[e]$. Hence,
\allowdisplaybreaks
\begin{align*}
\homc{T,G}&=\sum_{v\in V(G)}\homc{T^r,G^v}=\sum_{j\in[e]} p_j \homc{T^r,G^{v_j}}\\
&=\sum_{j\in[e]} q_j \homc{T^r,H^{w_j}}=\sum_{w\in V(H)} \homc{T^r,H^w}=\homc{T,H},
\end{align*}
where $v_j$ and $w_j$ are arbitrary vertices in $R_j\cap V_G$ and $R_j\cap V_H$, respectively, and where we used that $\homc{T^r,G^{v_j}}=\homc{T^r,H^{w_j}}$ and $p_j=q_j$.
Since this holds for any unrooted $\mathcal F$-pattern tree $T$ of depth at most $d$, we have show the desired implication.

\fbox{$\Longleftarrow$} We next check the other direction. That is, we assume that
$\homc{T,G}=\homc{T,H}$ holds for any unrooted $\mathcal F$-pattern tree $T$ of depth at most $d$ and verify that $G\equiv_{\wlk{\mathcal F}}^{(d)} H$.

For $x\sim_d y$ to hold for $x\in V_X$, $y\in V_Y$ and $X,Y\in\{G,H\}$, we earlier showed that this corresponds to checking whether 
$\homc{T_i^{r_i},X^x}=\homc{T_i^{r_i},Y^y}$ for a \textit{finite} number $K$  rooted $\mathcal F$-pattern trees $T_i^{r_i}$. By definition of the $R_j$'s, 
$a_{ij}:=\homc{T_i^{r_i},X^x}$ for $x\in R_j$
is well-defined (independent of the choice of $X\in\{G,H\}$
$x\in V_X$). For the rooted $T_i^{r_i}$'s we denote by $T_i$ its unrooted version. Similarly as before,
$$
\homc{T_i,G}=\sum_{j\in[e]} a_{ij}p_j=
\sum_{j\in[e]} a_{ij}q_j=\homc{T_i,H}.
$$
We next show that $p_j=q_j$ for $j\in [e]$. In fact, this is shown in precisely the same way as in our previous characterisation and based on Lemma 4.2 in \citet{Grohe20Lics}. That is, we again consider 
trees obtained by joining copies of the $T_i$'s, to obtain, for compatible $\mathbf{s}\in\Nb^K$,
$$\sum_{j\in[e]} a_{ij}^{s_i}p_j=
\sum_{j\in[e]} a_{ij}^{s_i}q_j.
$$
It now suffices to repeat the same argument as before (details omitted). \qed

\section{Proofs of Section~\ref{sec:choice}}

\subsection{Additional details of standard concepts}

\paragraph{Core and treewidth.}
A graph $G$ is a {\em core} if all homomorphisms from $G$ to itself are injective. The {\em treewidth} of a graph $G = (V,E,\chi)$ is a measure of how much $G$
resembles a tree. This is defined in terms of the {\em tree decompositions} of $G$, which are pairs $(T,\lambda)$, for  a tree $T=(V_T,E_T)$ and $\lambda$ a mapping that
associates each vertex $t$ of $V_T$ with a set $\lambda(t) \subseteq V$, satisfying the following:
\begin{itemize}\setlength{\itemsep}{0pt}
  \setlength{\parskip}{0pt}
\item The union of $\lambda(t)$, for $t\in V_T$, is equal to $V$;
\item The set $\{t \in V_T \mid v \in \lambda(t)\}$ is connected, for all $v \in V$; and
\item For each $\{u,v\} \in E$ there is $t \in V_T$ with $\{u,v\} \in \lambda(t)$.
\end{itemize}
The {\em width} of $(T,\lambda)$ is $\min_{t\in T}(|\lambda(t)|) - 1$.
The treewidth of $G$ is the minimum width of its tree decompositions.
For instance, trees have treewidth one, cycles have clique two, and the $k$-clique $K_k$ has treewidth $k-1$ (for $k > 1$).

If $P^r$ is a pattern, then its treewidth is defined as the treewidth of the graph $P$.
Similarly, $P^r$ is a core if $P$ is.
\newcommand{\itp}{\mathsf{isotp}}

\paragraph{\boldmath $\wlk{k}$.}
A \emph{partial isomorphism} from a graph $G$ to a graph $H$ is
a set $\pi\subseteq
V_G\times V_H$ such that all $(v,w),(v',w')\in\pi$ satisfy the equivalences
$v=v'\Leftrightarrow w=w'$, $\{v,v'\}\in E_G\Leftrightarrow \{w,w'\}\in E_H$,
$\chi_G(v)=\chi_H(w)$ and $\chi_G(v')=\chi_H(w')$.
 We may view $\pi$ as a
 bijective mapping from a subset $X\subseteq V_G$ to a subset of
$Y\subseteq V_H$ that is an isomorphism from the induced subgraph $G[X]$
to the induced subgraph $H[Y]$.
The \emph{isomorphism type} $\itp(G,\bar v)$ of a $k$-tuple
$\bar v=(v_1,\ldots,v_k)$ is a label in some alphabet $\Sigma$
such that 
$\itp(G,\bar v)=\itp(H,\bar w)$ if and only if
 $\pi=\{(v_1,w_1),\ldots,(v_k,w_k)\}$ is a partial isomorphism from $G$
 to $H$.

Let $k\ge 1$ and $G=(V,E,\chi)$.
The $k$-dimensional Weisfeiler-Leman algorithm ($\wlk{k}$) computes a
sequence of labellings $\chi_{k,G}^{(d)}$ from $V^k\to\Sigma$. We denote
by $\chi_{k,G,\bar v}^{(d)}$ the label assigned to the $k$-tuple $\bar v\in V^k$ in round $d$.
The initial labelling $\chi_{k,G}^{(0)}$ assigns to each $k$-tuple
$\bar v$ is isomorphism type $\itp(G,\bar v)$. Then, for round $d$,
$$ \chi_{k,G,\bar v}^{(d)}:=\big(\chi_{k,G,\bar v}^{(d-1)},M^{(d-1)}_{\bar v}\big),
$$
where $M^{(d-1)}_{\bar v}$ is the multiset
$$
\Big\{\!\!\!\Big\{\big(\itp(v_1,\ldots,v_k,w),\chi_{k,G,(v_1,\ldots,v_{k-1},w)}^{(d-1)},\chi_{k,G,(v_1,\ldots,v_{k-2},w,v_k)}^{(d-1)},\ldots,\chi_{k,G,(w,v_2,\ldots,v_k)}^{(d-1)}\big)\:\Big|\;w\in V
 \Big\}\!\!\!\Big\}.
$$
As observed in \citet{DellGR18}, if $k\ge 2$ holds, then we can omit the entry
 $\itp(v_1,\ldots,v_k,w)$ from the tuples in~$M_{\bar v}$, because all
the information it contains is also contained in the entries $\chi_{k,G,\ldots}^{(d-1)}$ of these tuples. Also, $\wl{}=\wlk{1}$ in the sense that $\chi_{G,v}^{(d)}=\chi_{G,v'}^{(d)}$
if and only if $\chi_{1,G,v}^{(d)}=\chi_{1,G,v'}^{(d)}$ for all $v,v'\in V$.
The $\wlk{k}$ algorithm is run until the labelings stabilises, i.e., 
 if for all $\bar v,\bar w\in V^k$, $\chi_{k,G,\bar v}^{(d)}=\chi_{k,G,\bar w}^{(d)}$
 if and only if  $\chi_{k,G,\bar v}^{(d+1)}=\chi_{k,G,\bar w}^{(d+1)}$.  We say that \emph{$\wlk{k}$ distinguishes two graphs~$G$ and~$H$} if the multisets of labels
 for all $k$-tuples of vertices in $G$ and $H$, respectively, coincides. Similar notions
 as are place for distinguishing $k$-tuples, and for distinguishing graphs (or vertices)
 based on labels computed by a given number of rounds.

 We remark that $\wlk{k}$ algorithm given here is sometimes referred to as the ``folklore'' version of the $k$-dimensional Weisfeiler-Leman algorithm. It is known that indistinguishability of graphs by $\wlk{k}$ is equivalent to indistinguishability by sentences in the the $k+1$-variable fragment of first order logic with counting \citep{CaiFI92}, and to $\homc{P,G}=\homc{P,H}$ for every graph of treewidth $k$ \citep{dvorak,DellGR18}.

\subsection{Proof of Proposition~\ref{prop:lgp-in-kwl}}
We show that for each finite set $\mathcal F$ of patterns, the expressive power of
$\wlk{\mathcal F}$  is bounded by $\wlk{k}$, where $k$ is the largest treewidth of a pattern in $\mathcal F$.

We first recall the following characterisation of $\wlk{k}$-equivalence \citep{dvorak,DellGR18}. For any two graphs $G$ and $H$,
$$
G\equiv_{\wlk{k}} H \Longleftrightarrow \homc{P,G}=\homc{P,H}
$$
for every graph $P$ of treewidth at most $k$. On the other hand, we know from Theorem~\ref{thm:char} that 
$$
G\equiv_{\wlk{\mathcal F}} H \Longleftrightarrow  \homc{T,G}=\homc{T,H}
$$
for every $\mathcal F$-pattern tree $T$. Hence, we may conclude that 
$$G\equiv_{\wlk{k}} H \Longrightarrow G\equiv_{\wlk{\mathcal F}} H$$
if we can show that any $\mathcal F$-pattern tree has treewidth at most $k$.

Suppose that $k$ is the maximal treewidth of a pattern in $\mathcal F$. To conclude the proof, we verify that the treewidth of any $\mathcal F$-pattern tree is bounded by $k$.

\begin{lemma}
	If $k$ is the maximal treewidth of a pattern in $\mathcal F$, then the treewidth of any $\mathcal F$-pattern tree $T$ is bounded by $k$.
\end{lemma}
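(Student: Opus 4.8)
The plan is to build an explicit tree decomposition of an arbitrary $\mathcal{F}$-pattern tree $T$ by gluing a width-$1$ tree decomposition of its backbone together with tree decompositions of the attached pattern copies, each of width at most $k$. Recall that $T$ is obtained from a rooted backbone tree $S=(V_S,E_S,\chi)$ by, for each $s\in V_S$, taking the disjoint union with some number of copies of patterns from $\mathcal{F}$ and identifying the root of each such copy with $s$.

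First I would fix a convenient width-$1$ tree decomposition $(D_0,\lambda_0)$ of $S$: root $S$ at an arbitrary vertex $\rho$, let $D_0$ have node set $V_S$ with an edge between $s$ and its parent $p(s)$ for each $s\neq\rho$, and set $\lambda_0(s)=\{s,p(s)\}$ for $s\neq\rho$ and $\lambda_0(\rho)=\{\rho\}$. This is easily checked to be a tree decomposition of $S$ of width $1$, with the useful extra property that each backbone vertex $s$ lies in the bag of the designated node $s$ itself. Next, for each backbone vertex $s$ and each copy $P_i^r$ of a pattern from $\mathcal{F}$ joined at $s$, I would pick a tree decomposition $(D_i,\lambda_i)$ of $P_i$ of width at most $k$ (it exists since $\operatorname{tw}(P_i)\le k$) and a node $t_i^{\ast}$ of $D_i$ with $r\in\lambda_i(t_i^{\ast})$ (such a node exists because every vertex of $P_i$ occurs in some bag). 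I would then form $(D,\lambda)$ for $T$ by taking the disjoint union of $D_0$ and all the $D_i$, adding for each pair $(s,i)$ an edge between $t_i^{\ast}$ and the node $s$ of $D_0$, and inheriting all bags, with the vertex $r$ of $P_i$ identified with $s$ exactly as in $T$.

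Then I would verify the three tree-decomposition axioms. The underlying graph $D$ is a tree, being a family of trees glued along single nodes. Vertex- and edge-coverage are immediate once one notes that every edge of $T$ is either a backbone edge (covered inside $D_0$) or an edge internal to some copy $P_i$ (covered inside $D_i$), since the join operation creates no edges other than those of the disjoint summands. The step that needs care — and the main obstacle — is the connectivity (running-intersection) axiom. For a vertex private to a single copy $P_i$, it appears only in bags of $D_i$, where connectivity holds since $(D_i,\lambda_i)$ is a tree decomposition. For a backbone vertex $s$, which may simultaneously be the identified root of several copies $P_1,\dots,P_{m_s}$, the nodes whose bag contains $s$ are exactly: the connected subtree of $D_0$ of nodes containing $s$ (which includes node $s$), together with, for each $i$, the connected subtree of $D_i$ of nodes containing $r$ (which includes $t_i^{\ast}$); the added edges $\{t_i^{\ast},s\}$ splice these into one connected subtree of $D$.

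Finally, bags coming from $D_0$ have size at most $2$ and bags coming from the $D_i$ have size at most $k+1$, so the width of $(D,\lambda)$ is at most $\max(1,k)$; since the patterns in $\mathcal{F}$ contain at least one edge we have $k\ge 1$, and hence $\operatorname{tw}(T)\le k$, as claimed. I expect the only genuinely delicate point to be the bookkeeping in the running-intersection check when many copies (possibly of the same pattern) are attached at the same backbone vertex; everything else is routine.
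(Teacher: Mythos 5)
Your proof is correct and follows essentially the same strategy as the paper's: glue tree decompositions of the attached pattern copies onto a tree decomposition of the backbone at nodes whose bags contain the join vertex, and check the three axioms. The only differences are cosmetic — you build the whole decomposition in one step from an explicit width-$1$ backbone decomposition rather than inducting on the number of joined patterns, and you are slightly more careful than the paper in insisting that each attachment node $t_i^{\ast}$ has the pattern's root in its bag, which is exactly what makes the running-intersection check go through.
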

\begin{proof}
The proof is by induction on the number of patterns joined at any leaf of $T$. Clearly, if no patterns are joined, then $T$ is simply a tree and its treewidth is $1$.
Otherwise, consider a $\mathcal F$-pattern tree $T = (V,E,\chi)$ whose treewidth is at most $k$, and a pattern $P^r$ of treewidth $k$ that is to be joined at vertex $t$ of $T$. By the induction hypothesis, there is a decomposition $(H,\lambda)$ for $T$ witnessing its bounded treewidth, that is, 
\begin{enumerate}\setlength{\itemsep}{0pt}
  \setlength{\parskip}{0pt}
\item The union of all $\lambda(h)$, for $h\in V_H$, is equal to $V$;
\item The set $\{h \in V_H \mid t \in \lambda(h)\}$ is connected, for all $t \in V$;
\item For each $\{u,v\} \in E$ there is $h \in V_H$ with $\{u,v\} \in \lambda(h)$; and
\item The size of each set $\lambda(h)$ is at most $k+1$.
\end{enumerate}
Likewise, by assumption, for pattern $P^r$ we have such a tree decomposition, say $(H^P, \lambda^P)$.

Now consider any vertex $h$ of the decomposition of $T$ such that $\lambda(h)$ contains vertex $t$ in $T$ to which $P^r$ is to be joined at its root. 
We can create a joint tree decomposition for the join of $P^r$ and $T$ (at node $t$) by merging $H$ and $H^P$ with an edge from vertex $h$ in $H$ to the root of $H^P$ (recall $H^P$ is a tree by definition). It is readily verified that this decomposition maintains all necessary properties. Indeed, condition 1 is clearly satisfied since $\lambda$ and $\lambda^p$ combined cover all vertices of the join of $T$ with $P^r$. Furthermore,
since the only node shared by $T$ and $P^r$ is the join node, and we merge $H$ and $H^P$ by 
putting an edge from node $h$ in $H$ to the root of $H^P$, connectivity of is guaranteed and condition 2 is satisfied.
Moreover, since the operation of joining $T$ and $P^r$ does not create any extra edges, condition 2 is 
immediately verified, and so is 3, because we do not create any new vertices, neither in $H$ nor in $H^P$, and we already know that $\lambda$ and $\lambda^P$ are bounded by $k+1$.
\end{proof}

\subsection{Proof of Theorem~\ref{theo:games}}
We show that if $\mathcal F$ contains a pattern $P^r$ which is a core and has treewidth $k$, then $\wlk{\mathcal F}$ is not bounded by  $\wlk{(k-1)}$. In other words, we construct two graphs $G$
and $H$ that can be distinguished by $\wlk{\mathcal F}$ but not by $\wlk{(k-1)}$. It suffices to find such graphs that can be distinguished by $\wlk{\{P^r\}}$ but not by $\wlk{(k-1)}$. The proof relies on the characterisation of $\wlk{(k-1)}$ indistinguishability in terms of the $k$-variable fragment $\mathsf{C}^{k}$ of first logic with counting and of $k$-pebble bijective games in particular \citep{CaiFI92,Hella96}. More precisely, $G\equiv_{\wlk{(k-1)}} H$ if and only if no sentence in $\mathsf{C}^k$ can distinguish $G$ from $H$. In other words, for any sentence $\varphi$ in $\mathsf{C}^k$, $G\models\varphi$ if and only if $H\models\varphi$. We denote indistinguishability by $\mathsf{C}^k$ by $G\equiv_{\mathsf{C}^k} H$. We heavily rely on the constructions used in \citet{AtseriasBD07} and \citet{BovaC19}.
In fact, we show that the graphs $G$ and $H$ constructed in those works, suffice for our purpose, by extending their strategy for the $k$-pebble game to $k$-pebble bijective games.

\paragraph{\boldmath Construction of the graphs $G$ and $H$.}
Let $P^r$ be a pattern in $\mathcal F$ which is a core
and has treewidth $k$. 
For a vertex $v\in V_P$, we gather all its edges in  $E_v:=\bigl\{\{v,v'\}\mid \{v,v'\}\in E_P\bigr\}$.
Let $v_1$ be one of the vertices in $V_P$.

For $G$, as vertex set $V_G$ we take vertices of the form $(v,f)$ with $v\in V_P$ and 
$f:E_v\to\{0,1\}$. We require that 
$$    \sum_{e\in E_{v_1}} f(e) \!\!\mod 2=1 \text{ and }
    \sum_{e\in E_v} f(e) \!\! \mod 2=0
\text{ for $v\neq v_1$, $v\in V_P$}.$$

For $H$, as vertex set $V_H$ we take vertices of the form
$(v,f)$ with $v\in V_P$ and $f:E_v\to\{0,1\}$.
We require that 
$
    \sum_{e\in E_v} f(e) \mod 2=0
$, for all $v\in V_P$.
We observe that $G$ and $H$ have the same number of vertices.

The edge sets $E_G$ and $E_H$ of $G$ and $H$, respectively, are defined as follows:
$(v,f)$ and $(v',f')$ are adjacent if and only if $v\neq v'$ and furthermore,
$$
f(\{v,v'\}=f'(\{v,v'\}).
$$

It is known that $\homc{P,G}=0$ (here it is used that $P$ is a core), $\homc{P,H}\neq 0$ and
$G$ and $H$ are indistinguishably by means of sentences in 
the $k$-variable fragment $\mathsf{FO}^k$ of first order logic 
\citep{AtseriasBD07,BovaC19}. To show our theorem, we thus
need to verify that $G\equiv_{\mathsf{C}^k} H$ as well. Indeed, for if this holds,
then $G\equiv_{\wlk{(k-1)}} H$ yet $G\not\equiv_{\wlk{\{P\}}}^{(0)} H$.
Indeed, Theorem~\ref{thm:char} implies that for $G\not\equiv_{\wlk{\{P\}}}^{(0)} H$ to hold, $\homc{P,G}=\homc{P,H}$, which we know not to be true. Hence,
$G\not\equiv_{\wlk{\{P\}}} H$, as desired.

\paragraph{\boldmath Showing  $\mathsf{C}^{k}$-indistinguishability of $G$ and $H$.}
We next show that the graphs $G$ and $H$ are indistinguishable by sentences in $\mathsf{C}^{k}$. This will be shown by verifying that the Duplicator has a winning strategy for the $k$-pebble bijective game on $G$ and $H$ \citep{Hella96}.

\smallskip
\underline{The $k$-pebble bijective game.}\ 
We recall that the $k$-pebble bijective game is played between two players, the Spoiler and the Duplicator, each placing at most $k$ pebbles on the vertices of $G$ and $H$, respectively. The game is played in a number of rounds.
The pebbles placed after round $r$ are typically represented by a partial function
$p^{(r)}:\{1,\ldots,k\}\to V_G\times V_H$.
When $p^{(r)}(i)$ is defined, say, $p^{(r)}(i)=(v,w)$, this means that the Spoiler places the $i$th pebble on vertex $v$ and the Duplicator places the $i$th pebble on $w$.
Initially, no pebbles are placed on $G$ and $H$ and hence $p^{(0)}$ is undefined everywhere.

Then in round $r>0$, the game proceeds as follows:
\begin{enumerate}\setlength{\itemsep}{0pt}
  \setlength{\parskip}{0pt}
    \item The Spoiler selects a pebble $i$ in $[k]$. All other already placed pebbles are kept on the same vertices. We define $p^{(r)}(j)=p^{(r-1)}(j)$ for all $j\in[k]$, $j\neq i$.
    \item The Duplicator responds by choosing a bijection $h:V_G\to V_H$. This bijection should be \textit{consistent} with the pebbles in the restriction of  $p^{(r-1)}$ to $[k]\setminus \{i\}$. That is, for every $j\in[k]$, $j\neq i$,    if $p^{(r-1)}(j)=(v,w)$ then $w=h(v)$.
    \item Next, the Spoiler selects an element $v\in V_G$.
    \item The Duplicator defines $p^{(r)}(i)=(v,h(v))$. Hence, after this round, the $i$th pebble is placed on $v$ by the Spoiler and on $h(v)$ by the Duplicator.
\end{enumerate}
Let $\mathsf{dom}(p^{(r)})$ be the elements in $[k]$ for which $p^{(r)}$ is defined. For $i\in\mathsf{dom}(p^{(r)})$ denote by $(v_i,w_i)\in V_G\times V_H$ the pair of vertices on which the $i$th pebble is placed.
The Duplicator \textit{wins} round $r$ if the mapping $v_i\mapsto w_i$ is partial isomorphism between $G$ and $H$. More precisely, it should hold that for all edges $\{v_i,v_j\}\in E_G$ if and only if $(w_i,w_j)\in E_H$. In this case, the game continues to the next round. Infinite games are won by the Duplicator. A winning strategy consists of defining a bijection in step 2 in each round, allowing the game to continue, irregardless of which vertex $v$ the Spoiler places a pebble in Step 3.

\smallskip
\underline{Winning strategy.}\ 
We will now provide a winning strategy for the $k$-bijective game on our constructed graphs $G$ and $H$. We recall that $V_G$ and $V_H$ have the same number of vertices, so a bijection between $V_G$ and $V_H$ exists. We show how the Duplicator can select a ``good'' bijection in Step 2 of the game, by induction on the number of rounds.

To state our induction hypothesis, we first recall some notions and properties from \citet{AtseriasBD07} and \citet{BovaC19}.

Let $W$ be a walk in $P$ and let $e$ be an edge in $E_P$. Then, $\mathsf{occ}_W(e)$ denotes the number of occurrences of the edge $e$ in the walk. More precisely, if $W=(a_1,\ldots,a_\ell)$ is a walk in $P$ of length $\ell$, then
$$
\mathsf{occ}_W(e):=|\{ i\in[\ell-1]\mid e=\{a_i,a_{i+1}\}\}|.
$$
Furthermore, for a subset $S\subseteq V_P$, we define  
$$
\mathsf{avoid}(S):=\bigcup_{\{M\in\mathcal{M},M\cap S=\emptyset\}} M,
$$
where $\mathcal{M}$ is an arbitrary bramble of $P$ of order $>k$. A bramble $\mathcal{M}$ is a set of \textit{connected} subsets of $V_P$ such that for any two elements $M_1$ and $M_2$ in $\mathcal{M}$, either $M_1\cap M_2\neq\emptyset$, or there exists a vertex $a\in M_1$ and $b\in M_2$ such that $\{a,b\}\in E_P$. The order of a bramble is the minimum size of a hitting set for $\mathcal{M}$. It is known that $P$ has treewidth $\geq k$ if and only if it has a bramble of order $>k$.  In what follows, we let $\mathcal M$ be any such bramble.

\begin{lemma}[Lemma 14 in \citet{BovaC19}]\label{lem:partiso}\normalfont 
For any $1\leq \ell\leq k$, let $(a_1,f_1),\ldots,(a_\ell,f_\ell)$ be vertices in $V_G$.
Let $W$ be a walk in $P$ from $v_1$ to $\mathsf{avoid}(\{a_1,\ldots,a_\ell\})$. For all $i\in[\ell]$,
let $f_i':E_{a_i}\to\{0,1\}$ be defined by 
$$
f'_i(e)=f_i(e) + \mathsf{occ}_{W}(e) \mod 2
$$
for all $e\in E_{a_i}$.
Then, the mapping $(a_i,f_i)\mapsto (a_i,f_i')$, for all $i\in[\ell]$, is a partial isomorphism from $G$ to $H$.\qed
\end{lemma}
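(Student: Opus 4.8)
The plan is to verify that the map $\phi\colon(a_i,f_i)\mapsto(a_i,f_i')$ is a partial isomorphism from $G$ to $H$. Since $G$ and $H$ carry no non-trivial vertex labels and $\phi$ fixes the first coordinate, this splits into three checks: (a)~every image $(a_i,f_i')$ genuinely lies in $V_H$, i.e.\ $\sum_{e\in E_{a_i}}f_i'(e)\equiv 0\pmod 2$; (b)~$\phi$ preserves and reflects adjacency; and (c)~$\phi$ is injective and reflects equality. I would dispatch (b) and (c) first, as they are essentially free. For (b): $(a_i,f_i)$ and $(a_j,f_j)$ are $G$-adjacent iff $\{a_i,a_j\}\in E_P$ and $f_i(\{a_i,a_j\})=f_j(\{a_i,a_j\})$, and the identical description defines $H$-adjacency with $f_i',f_j'$; but on the shared edge $e=\{a_i,a_j\}$ the two functions are shifted by one and the same value $\mathsf{occ}_W(e)$, so $f_i(e)=f_j(e)\iff f_i'(e)=f_j'(e)$, and the two adjacency relations agree under $\phi$. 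For (c): if $a_i=a_j$ then $f_i$ and $f_j$ are both defined on $E_{a_i}$ and differ from $f_i',f_j'$ by the same function $e\mapsto\mathsf{occ}_W(e)$, so $f_i=f_j\iff f_i'=f_j'$; hence $\phi$ is injective.

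The heart of the matter is (a), and the plan is a walk-handshake parity count. For any vertex $v$ of $P$ and any walk $W$, the sum $\sum_{e\in E_v}\mathsf{occ}_W(e)$ equals the number of edge-steps of $W$ incident to $v$; each passage of $W$ \emph{through} $v$ uses two such steps, so modulo $2$ the sum equals the number of times $v$ occurs as a \emph{terminal} endpoint of $W$. Now $W$ is a single walk from $v_1$ to a vertex $t\in\mathsf{avoid}(\{a_1,\dots,a_\ell\})$, so its only terminal endpoints are $v_1$ and $t$; and since $\mathsf{avoid}(\{a_1,\dots,a_\ell\})$ is a union of bramble elements each disjoint from $\{a_1,\dots,a_\ell\}$, we have $t\notin\{a_1,\dots,a_\ell\}$. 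Hence for every $i$ the vertex $a_i$ is a terminal endpoint of $W$ precisely when $a_i=v_1$, and then exactly once, so $\sum_{e\in E_{a_i}}\mathsf{occ}_W(e)\equiv\mathbf{1}[a_i=v_1]\pmod 2$. Summing $f_i'(e)=f_i(e)+\mathsf{occ}_W(e)$ over $e\in E_{a_i}$ and using the defining parity constraint of $V_G$ — the sum of $f_i$ over $E_{a_i}$ is $\equiv 1$ if $a_i=v_1$ and $\equiv 0$ otherwise — the two $\mathbf{1}[a_i=v_1]$ terms cancel modulo $2$, so $\sum_{e\in E_{a_i}}f_i'(e)\equiv 0\pmod 2$. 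This is exactly the membership condition for $V_H$, which establishes (a) and hence the lemma.

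For the lemma in isolation I do not anticipate any real obstacle beyond pinning down the handshake count and the exact role of the $\mathsf{avoid}$ hypothesis; the genuine difficulty lies downstream, in how the lemma is wielded. In the Duplicator's strategy for the $k$-pebble bijective game one must, at each round, actually \emph{produce} such a walk: given the currently pebbled $G$-vertices with first coordinates $a_1,\dots,a_\ell$ and $\ell\le k$, one uses that the chosen bramble $\mathcal M$ has order $>k$ — so $\le k$ vertices cannot form a hitting set and $\mathsf{avoid}(\{a_1,\dots,a_\ell\})\neq\emptyset$ — together with connectivity of $P$ to get a walk from $v_1$ into that set, and then Lemma~\ref{lem:partiso} converts it into the required bijection. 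The subtle part, which I expect to be where the real work goes, is to maintain across rounds the invariant that the pebbled configuration in $G$ is the $\phi$-image of the one in $H$, re-routing the walk each time the Spoiler lifts and re-places a pebble so that it still ends in $\mathsf{avoid}$ of the \emph{new} pebbled set. Once that invariant is secured one gets $G\equiv_{\countinglogic{k}}H$, and since $\homc{P,G}=0\neq\homc{P,H}$, Theorem~\ref{thm:char} gives $G\not\equiv_{\wlk{\{P\}}}H$, so $\wlk{\mathcal F}$ is not bounded by $\wlk{(k-1)}$, as Theorem~\ref{theo:games} asserts.
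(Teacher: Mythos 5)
Your proof is correct: the handshake-parity computation showing $\sum_{e\in E_{a_i}}\mathsf{occ}_W(e)\equiv\mathbf{1}[a_i=v_1]\pmod 2$ (using that the walk's terminal vertex lies in $\mathsf{avoid}(\{a_1,\dots,a_\ell\})$ and hence differs from every $a_i$), together with the observation that adjacency and injectivity are invariant under the common shift by $\mathsf{occ}_W$, is exactly what is needed. The paper itself does not prove this lemma but imports it as Lemma~14 of \citet{BovaC19}, and your argument is the standard one given there, so there is nothing to flag.
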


We use this lemma to show that the bijection (to be defined shortly) selected by the Duplicator induces a partial isomorphism between $G$ and $H$ on the pebbled vertices.

We can now state our induction hypothesis: In each round
$r$, there exists a bijection $h:V_G\to V_H$ which is 
\begin{itemize}\setlength{\itemsep}{0pt}
  \setlength{\parskip}{0pt}
    \item[(a)]  consistent with the pebbles in the restriction of $p^{(r-1)}$ to $[k]\setminus \{i\}$ (Recall, Pebble $i$ is selected by the Spoiler in Step 1.)
    \item[(b)] If $p^{(r)}(j)=(a_j,f_j,h(a_j,f_j))$ for $j\in\mathsf{dom}(p^{(r)})$, then there exists a walk $W^{(r)}$ in $P$, from $v$ to $\mathsf{avoid}(\{a_j\mid j\in \mathsf{dom}(p^{(r)})\}$, such that
    $$
    h(a_j,f_j)=(a_j,f_j'),
    $$
    where $f_j'(e)=f_j(e) + \mathsf{occ}_{W^{(r)}}(e)\mod 2$ for every $e\in E_{a_j}$. In other words, on the vertices in $V_G$ pebbled by $p^{(r)}$, the bijection $h$ is, by the previous Lemma, a partial isomorphism from $G$ to $H$.
\end{itemize}
If this holds, then the strategy for the Duplicator is selecting that bijection $h$ in each round.

\smallskip
\underline{Verification of the induction hypothesis.}\ 
We assume that the special vertex $v_1$ in $P$ has at least two neighbours. Such a vertex exists since otherwise $P$ consists of a single edge while we assume $P$ to be of treewidth at least two.

\smallskip
\textit{Base case.}\ 
For the base case ($r=0)$ we define two walks:
$W_1=v_1,v_2$ and $W_2=v_1,t$ with $v_2\neq t$ and $v_2,t$ are neighhbours of $v_1$. We define
$h(a_i,f)=(a_i,f')$ with $f'(e)=f(e) +
\mathsf{occ}_{W_1}(e) \mod 2$ if $a_i\neq t$,
and 
$h(t,f)=(t,f')$ with $f'(e)=f(e)+\mathsf{occ}_{W_2}(e)\mod 2$.

The mapping $h$ is a bijection from $V_G$ to $V_H$. We note that it suffices to show that $h$ is injective since $V_G$ and $V_H$ contain the same number of vertices.
Since $h(a_i,f_i)\neq h(a_j,f_j)$ whenever $a_i\neq a_j$, we can focus on comparing $h(a_i,f)$ and $h(a_i,g)$ with $f\neq g$. This implies that $f(e)\neq g(e)$ for at least one edge $e\in N_{a_i}$. Clearly, this implies that
$f'(e)=f(e)+\mathsf{occ}_W(e)\mod 2\neq 
g'(e)=g(e)+\mathsf{occ}_W(e)\mod 2
$. In fact this, holds for any walk $W$ and thus in particular for $W_1$ and $W_2$.
We further observe that $h$ is consistent simply because no pebbles have been placed yet.
For the same reason we can take the walk $W^{(0)}$ to be either $W_1$ or $W_2$.

\smallskip
\textit{Inductive case.}\ 
Assume that the induction hypothesis holds for round $r$ and consider round $r+1$. Let $p^{(r)}=(a_j,f_j,a_j,f_j')$ for $j\in \mathsf{dom}(p^{(r)})$. By induction, there exists a bijection
$h':V_G\to V_H$ such that $h(a_j,f_j)=(a_j,f_j')$ and furthermore, 
$f_j'(e)=f_j(e)+ \mathsf{occ}_{W^{(r)}}(e)\mod 2$ for every $e\in N_{a_j}$,
for some walk $W^{(r)}$ from $v_1$ to $t\in\mathsf{avoid}(\{a_j\mid j\in\mathsf{dom}(p^{(r)})\})$.

Assume that the Spoiler selects $i\in[k]$ in Step 1 in round $r+1$. 
We define the Duplicator's bijection $h:V_G\to V_H$ for round $r+1$, as follows. Recall that $t\in V_P$
is the vertex in which the walk $W^{(r)}$ ends.
\begin{itemize}
    \item For all $(a,f)\in V_G$ such that $a\neq t$, we define $h(a,f)=(a,f')$ where for each
    $e\in E_a$:
    $$f'(e)=f(e)+\mathsf{occ}_{W^{(r)}}(e)\mod 2.$$
    \item For all $(t,f)\in V_G$, we will extend $W^{(r)}$ with a walk $W'$ so that it ends in a vertex $t'$ different from $t$. Suppose that $M\in \mathcal M$ such that $t\in M$. We want to find an $M'\in\mathcal M$ such that $M'\cap (\{a_j\mid j\in\mathsf{dom}(p^{(r)}),j\neq i\}\cup \{t\})=\emptyset$. We can then take $t'$ to be a vertex in $M'$ and since $M$ and $M'$ are both connected, and either have a vertex in common or an edge between them, we can let $W'$ be a walk from $t$ to $t'$ entirely in 
	$M$ and $M'$. Now, such an $M'$ exists since otherwise $\{a_j\mid j\in\mathsf{dom}(p^{(r)}),j\neq i\}\cup \{t\}$ would be a hitting set for $\mathcal M$ of size at most $k$. We know, however, that any hitting set $\mathcal M$ must be of size $k+1$ because of the treewidth $k$ assumption for $P$.
	We now define the bijection as
    $h(t,f)=(t,f')$ where for each $e\in E_t$:
    $$f'(e)=f(e)+\mathsf{occ}_{W^{(r)},W'}(e)\mod 2.$$
\end{itemize}
This concludes the definition of $h:V_G\to V_H$. We need to verify a couple of things: (i)~$h$ is bijection; (ii)~$h$ is consistent with all pebbles in $p^{(r)}$ except for the ``unpebbled'' one $p^{(r)}(i)$; and (iii) it induces a partial isomorphism on pebbled vertices.

\begin{itemize}
\item[(i)] $h$ is a bijection.
Since $V_G$ and $V_H$ are of the same size,
it suffices to show that $h$ is an injection. Clearly, $h(a_1,f_1)\neq h(a_2,f_2)$ whenever $a_1\neq a_2$. We can thus focus on $h(a,f_1)$ and $h(a,f_2)$ with $f_1\neq f_2$.
Then, $f_1$ and $f_2$ differ in at least one edge
$e\in E_a$ and for this edge:
$$
f_1'(e)=f_1(e)+\mathsf{occ}_W(e) \mod 2\neq f_2(e)+\mathsf{occ}_W(e) \mod 2=f_2'(e).
$$
for any walk $W$. In particular, this holds for both walks used in the definition of $h$: $W^{(r)}$, used when $a\neq t$, and
$W^{(r)},W'$ used when $a=t$.  Hence, $h$ is indeed a bijection.

\item[(ii)] $h$ is consistent.
For each $j\in\mathsf{dom}(p^{(r+1)})$ with $j\neq i$, let $p^{(r+1)}=(a_j,f_j,a_j,f_j')$. Now, by induction, $W^{(r)}$ ended in a vertex $t$ distinct from any of these $a_j$'s
and thus none of these $a_j$'s are equal to $t$. This implies that $h(a_j,f_j)=(a_j,f_j'')$ with 
$f_j''(e)=f_j(e) + \mathsf{occ}_{W^{(r)}}(e)\mod 2$. But this is precisely how $p^{(r)}$ placed its pebbles, by induction. Hence, $f_j''(e)=f_j'(e)$ and thus $h$ is consistent.

\item[(iii)] $p^{(r+1)}$ induces a partial isomorphism.
After the Spoiler picked an element $(a_i,f_i)\in V_G$,
we now know that $p^{(r+1)}(j)=(a_j,f_j,h(a_j,f_j))$ for
all $j\in\mathsf{dom}(p^{(r+1)})$. We recall that $h$ is defined in two possible ways, using two distinct walks:
$W^{(r)}$, for vertices in $V_G$ not involving $t$, or, otherwise using the walk $W^{(r)},W'$, for vertices in $V_G$ involving $t$.

Hence, when all $a_j$'s for $p^{(r+1)}$ are distinct from $t$, then $h(a_j,f_j)=(a_j,f_j')$ with
$f_j'(e)=f_j(e) + \mathsf{occ}_{W^{(r)}}(e)\mod 2$
and we can simply take the new walk $W^{(r+1)}$ to be $W^{(r)}$. Then, Lemma~\ref{lem:partiso} implies that the mapping $(a_j,f_j)\to h(a_j,f_j)$, for $j\in\mathsf{dom}(p^{(r+1)})$ is a partial isomorphism from $G$ to $H$, as desired.

Otherwise, we know that $a_j\neq t$ for $j\neq i$ but $a_i=t$. That is, the Spoiler places the $i$th pebble on a vertex of the form $(t,f)$ in $V_G$. We now have that
$h$ is defined in two ways for the pebbled elements using the two distinct walks.
We next show that $W^{(r)},W'$ can be used for both types of pebbled elements 
in $p^{(r+1)}$, those of the form $(a_j,f)$ with $a_j\neq t$ and $(t,f)$. For the
last type this is obvious since we defined $h(t,f)$ in terms of $W^{(r)},W'$.
For the former type, we note that $a_j\not\in M$ and $a_j\not\in M'$ for $j\neq i$.
If we take an edge $e\in N_{a_j}$, then 
$\mathsf{occ}_{W^{r},W'}(e)=\mathsf{occ}_{W^{(r)}}(e)$ because $W'$ lies entirely in $M$
and $M'$. As a consequence, for $(a_j,f_j)$ with $j\neq i$, for all $e\in N_j$:
\begin{align*}
f'_j(e)&=f_j(e) + \mathsf{occ}_{W^{(r)}}(e) \mod 2\\
& =f_j(e) + \mathsf{occ}_{W^{(r)},W'}(e) \mod 2.
\end{align*}
Then, Lemma~\ref{lem:partiso} implies that the mapping $(a_j,f_j)\to h(a_j,f_j)$, for $j\in\mathsf{dom}(p^{(r+1)})$ is a partial isomorphism from $G$ to $H$, because we can use the same walk $W^{(r),W'}$ for all pebbled vertices.
\end{itemize}

\subsection{Proof of Proposition~\ref{prop:long-cycles}}
We show that no finite set $\mathcal F$ of patterns suffices for $\wlk{\mathcal F}$ to be equivalent to $\wlk{k}$, for $k>1$, in terms of expressive power.
The proof is by contradiction. That is, suppose that there exists a set $\mathcal F$ such that $G\equiv_{\wlk{\mathcal F}} H\Leftrightarrow G\equiv_{\wlk{k}} H$
for any two graphs $G$ and $H$. In particular, $G\equiv_{\wlk{\mathcal F}} H\Rightarrow G\equiv_{\wlk{k}} H$ and thus also  $G\equiv_{\wlk{\mathcal F}} H\Rightarrow G\equiv_{\wlk{2}} H$, since the $\wlk{2}$-test is upper bounded by any $\wlk{k}$-test for $k>2$. We argue that no finite set $\mathcal F$ exists satisfying  $G\equiv_{\wlk{\mathcal F}} H\Rightarrow G\equiv_{\wlk{2}} H$.

Let $m$ denote the maximum number of vertices of any pattern in $\mathcal F$.\footnote{Strictly speaking, we can use the diameter of any pattern in $\mathcal F$ instead, but it is 
easier to convey the proof simply by taking number of vertices.} Furthermore, consider graphs $G$ and $H$, where $G$ is the disjoint union of  $m+2$ copies of the cycle $C_{m+1}$, and $H$ is the union of $m+1$ copies of the cycle $C_{m+2}$. Note that $G$ and $H$ have the same number of vertices. 

We observe that any homomorphism from a pattern $P^r$ in $\mathcal F$ to $G^v$ or $H^w$, for vertices $v\in V_G$ and $w\in V_H$, maps $P^r$ to either a copy of $C_{m+1}$ (for $G$) or a copy
of $C_{m+2}$ (for $H$). Furthermore, any such homomorphism maps $P^r$ in a subgraph of $C_{m+1}$ or $C_{m+2}$, consisting of at most $m$ vertices. There is, however, a unique (up to isomorphism) subgraph of $m$ vertices in
$C_{m+1}$ and $C_{m+2}$. Indeed, such subgraphs will be a  path of length $m$. This implies that $\homc{P^r,G^v}=\homc{P^r,H^w}$ for any $v\in V_G$ and $w\in V_H$.
Since the argument holds for any pattern $P^r$ in $\mathcal F$, all vertices in $G$ and $H$ will have the same homomorphism count for patterns in $\mathcal F$.
Furthermore, since both $G$ and $H$ are regular graphs (each vertex has degree two), this implies that $\wlk{\mathcal F}$ cannot distinguish between 
$G$ and $H$. This is formalised in the following lemma. We recall that a $t$-regular graph is a graph in which every vertex has degree $t$.
\begin{lemma}\label{lem:regularFWL}
For any set $\mathcal F$ of patterns and any two $t$-regular  (unlabelled) graphs $G$ and $H$ such that $\homc{P^r,X^x}=\homc{P^r,Y^y}$ for $P^r\in\mathcal F$, $X,Y\in\{G,H\}$, $x\in V_X$ and $y\in V_Y$ holds, $G\equiv_{\wlk{\mathcal F}} H$.
\end{lemma}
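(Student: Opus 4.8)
The plan is to prove, by induction on the round number $d$, that there is a single label $c^{(d)}$ with $\chi_{\mathcal F,X,x}^{(d)} = c^{(d)}$ for \emph{every} $X \in \{G,H\}$ and every $x \in V_X$; in words, $\wlk{\mathcal F}$ never separates any two vertices, whether they lie in the same graph or in different ones. Once this is shown the conclusion is immediate: for every round $d$ the multiset of labels computed for $G$ is $c^{(d)}$ with multiplicity $|V_G|$ and the one for $H$ is $c^{(d)}$ with multiplicity $|V_H|$; since $G$ and $H$ have the same number of vertices (as do the graphs to which the lemma is applied in the proof of Proposition~\ref{prop:long-cycles}), these multisets coincide, so $G \equiv_{\wlk{\mathcal F}}^{(d)} H$ for every $d$ and hence $G \equiv_{\wlk{\mathcal F}} H$.

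For the base case $d = 0$, recall that $\chi_{\mathcal F,X,x}^{(0)} = \bigl(\chi_X(x), \homc{P_1^r,X^x}, \dots, \homc{P_\ell^r,X^x}\bigr)$. Because $G$ and $H$ are unlabelled, the coordinate $\chi_X(x)$ is one fixed symbol, independent of $X$ and $x$. By the hypothesis of the lemma, taking $X = Y$ shows that each $\homc{P_i^r,X^x}$ is constant over the vertices of a fixed graph, and taking $X \ne Y$ shows that the two values (for $G$ and for $H$) agree; hence every coordinate of $\chi_{\mathcal F,X,x}^{(0)}$ is a global constant, which defines $c^{(0)}$.

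For the inductive step I would use $t$-regularity. Assuming that after round $d - 1$ every vertex of both graphs carries the label $c^{(d-1)}$, the neighbourhood multiset $\ldbl \chi_{\mathcal F,X,u}^{(d-1)} \mid u \in N_X(x) \rdbl$ of any vertex $x$ equals the multiset containing $c^{(d-1)}$ exactly $t$ times, the same for every $x$ in $G$ and in $H$. Hence $\chi_{\mathcal F,X,x}^{(d)}$ is obtained by applying $\textsc{Hash}$ to $c^{(d-1)}$ and this fixed multiset, so it is again a single label $c^{(d)}$, independent of $X$ and $x$; this closes the induction.

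There is no genuine obstacle here: the statement falls directly out of the definition of $\wlk{\mathcal F}$ together with regularity and the uniformity of the homomorphism counts across vertices. The only points requiring care are bookkeeping ones — using both the $X = Y$ and $X \ne Y$ instances of the hypothesis to obtain one global constant per pattern, and noting the benign fact that $\wlk{\mathcal F}$-indistinguishability of $G$ and $H$ presupposes $|V_G| = |V_H|$, which is why the lemma is applied only to graphs of equal order.
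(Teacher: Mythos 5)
Your proof is correct, but it takes a genuinely different route from the paper. The paper proves the lemma by going through Theorem~\ref{thm:char}: it shows, by induction on the depth of the trees, that $\homc{T^r,X^x}=\homc{T^r,Y^y}$ for every $\mathcal F$-pattern tree $T^r$ and all vertices $x,y$ of $G$ and $H$ (regularity enters when the sum over a neighbourhood, $\sum_{x'\in N_X(x)}\homc{S^{c},X^{x'}}$, collapses to $t$ times a common value), and then invokes the characterisation to conclude that all $\wlk{\mathcal F}$-labels coincide. You instead run a direct induction on the rounds of $\wlk{\mathcal F}$ itself: the hypothesis makes the initial label a single global constant, and $t$-regularity makes every neighbourhood multiset equal to $t$ copies of that constant, so the refinement never splits anything. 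Your argument is more elementary and self-contained — it uses only the definition of $\wlk{\mathcal F}$ and does not depend on the (nontrivial) characterisation theorem — while the paper's route additionally yields the equality of all pattern-tree homomorphism counts along the way, keeping the proof aligned with the machinery used throughout that section. You are also right, and more explicit than the paper, about the final step: concluding $G\equiv_{\wlk{\mathcal F}}H$ from ``all labels are one constant'' needs $|V_G|=|V_H|$ (the paper's definition of graph indistinguishability is multiset equality), which both proofs implicitly use and which holds for the graphs the lemma is applied to in Propositions~\ref{prop:long-cycles} and~\ref{prop:cycles}.
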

\begin{proof}
	The lemma is readily verified by induction on the number $d$ of rounds of $\wlk{\mathcal F}$. We show a stronger result in that 
	$\chi_{\mathcal F,X,x}^{(d)}=\chi_{\mathcal F,Y,y}^{(d)}$ for any $d$, $X,Y\in\{G,H\}$, $x\in V_X$ and $y\in V_Y$, from which $G\equiv_{\wlk{\mathcal F}} H$ follows.
	By our Theorem~\ref{thm:char}, it suffices to show that
	$\homc{T^r,X^x}=\homc{T^r,Y^y}$ for  $\mathcal F$-pattern trees of depth at most $d$. Let $\mathcal F=\{P_1^r,\ldots,P_\ell^r\}$. For the base case, let $T^r$ be a join pattern $\mathcal F^{\mathbf{s}}$ for some $\mathbf{s}=(s_1,\ldots,s_\ell)\in\Nb^{\ell}$.
	Then,
	\allowdisplaybreaks
\begin{align*}
	\homc{T^r,X^x}=\prod_{i=1}^{\ell}\left(\homc{P_i^r,X^x}\right)^{s_i}=\prod_{i=1}^{\ell}\left(\homc{P_i^r,Y^y}\right)^{s_i}=\homc{T^r,Y^y},
\end{align*}
since  $\homc{P_i^r,X^x}=\homc{P_i^r,Y^y}$ for any $P_i^r\in\mathcal F$.
Then, for the inductive case, assume that $\homc{S^r,X^x}=\homc{S^r,Y^y}$ for any $\mathcal F$-pattern tree $S^r$ of depth at most $d-1$, $X,Y\in\{G,H\}$, $x\in V_X$ and $y\in V_Y$, and consider an $\mathcal F$-pattern $T^r$ of depth $d$. Let $S_1^{c_1},\ldots,S_p^{c_p}$ be the $\mathcal F$-pattern trees of depth at most $d-1$ rooted at the children $c_1,\ldots,c_p$ of $r$
in the backbone of $T^r$. As before, let $\mathcal F^{\mathbf{s}}$ the pattern joined at $r$ in $T^r$. Then,
	\allowdisplaybreaks
\begin{align*}
	\homc{T^r,X^x}&=\homc{\mathcal F^{\mathbf{s}},X^x}\prod_{i=1}^{p}\sum_{x'\in N_X(x)}\homc{S_i^{c_i},X^{x'}}=\homc{\mathcal F^{\mathbf{s}},X^x}\prod_{i=1}^{p} t\cdot \homc{S_i^{c_i},X^{\tilde{x}}}\\
	&=\homc{\mathcal F^{\mathbf{s}},Y^y}\prod_{i=1}^{p} t \cdot \homc{S_i^{c_i},Y^{\tilde{y}}}=\homc{\mathcal F^{\mathbf{s}},Y^y}\prod_{i=1}^{p}\sum_{y'\in N_Y(y)}\homc{S_i^{c_i},Y^{y'}}\\
		&=\homc{T^r,Y^y},
\end{align*}
where we used that $N_X(x)$ and $N_Y(y)$ both consists of $t$ vertices (regularity), by the induction hypothesis all vertex have the same homomorphism counts
for $\mathcal F$-patterns trees of depth at most $d-1$, and where $\tilde{x}$ and $\tilde{y}$ are taken to be arbitrary vertices in $N_X(x)$ and $N_Y(y)$, respectively.
\end{proof}
Hence, since $G$ and $H$ are $2$-regular and satisfy the conditions of the lemma, we may indeed infer that $G\equiv_{\wlk{\mathcal F}} H$. We note, however, that $G\not\equiv_{\wlk{2}} H$. Indeed, from \citet{dvorak} and \citet{DellGR18} we know that 
$G\equiv_{\wlk{2}} H$ implies that $\homc{P,G}=\homc{P,H}$ for any graph $P$ of treewidth at most two. In particular, 
$G\equiv_{\wlk{2}} H$ implies that $\homc{C_\ell,G}=\homc{C_\ell,H}$ for all cycles $C_\ell$. We now conclude by observing that 
$\homc{C_{m+1},G}\neq \homc{C_{m+1},H}$ by construction.
We have thus found two graphs with cannot be distinguished by $\wlk{\mathcal F}$, but that can be distinguished by $\wlk{2}$, contradicting our assumption
that $G\equiv_{\wlk{\mathcal F}} H\Rightarrow G\equiv_{\wlk{2}} H$.

\section{Proofs of Section~\ref{sec:patterns}}

\subsection{Proof of Proposition~\ref{prop:cycles}}
We show that for any $k > 3$,  $\wlk{\{C^r_3,\dots,C^r_k\}}$ is more expressive than 
$\wlk{\{C^r_3,\dots,C^r_{k-1}\}}$. More precisely, we construct two graphs $G$ and $H$
such that $G$ and $H$ cannot be distinguished by $\wlk{\{C^r_3,\dots,C^r_{k-1}\}}$, but they can be distinguished by  $\wlk{\{C^r_3,\dots,C^r_k\}}$.

The proof is analogous to the proof of Proposition~\ref{prop:long-cycles}. Indeed, it suffices to let $G$ consist of $k$ disjoint copies of $C_{k+1}$ and $H$ to consist of $k+1$ disjoint copies of $C_{k}$. Then, as observed in the proof of Proposition~\ref{prop:long-cycles}, $G$ and $H$ will be indistinguishable by 
$\wlk{\{C^r_3,\dots,C^r_{k-1}\}}$ simply because each pattern has at most $k-1$ vertices. Yet, by construction, $\homc{C_k,G}\neq \homc{C_k,H}$ and thus $G$ and $H$ are distinguishable (already by the initial labelling) by  $\wlk{\{C^r_3,\dots,C^r_k\}}$.

\subsection{Proof of Proposition~\ref{prop:simplify}}
Let $P^r = P_1^r \star P_2^r$ be a pattern that is the join of two smaller patterns. 
We show that for any  any set $\mathcal F$ of patterns, we have that 
$\wlk{\mathcal F \cup \{P^r\}}$ is upper bounded by  $\wlk{\mathcal F \cup \{P_1^r,P_2^r\}}$.
That is, for every two graphs $G$ and $H$, $G\equiv_{\wlk{\mathcal F \cup \{P_1^r,P_2^r\}}} H$ implies $G\equiv_{\wlk{\mathcal F \cup \{P^r\}}} H$. 
By definition,  $G\equiv_{\wlk{\mathcal F \cup \{P_1^r,P_2^r\}}} H$  is equivalent to $\ldbl\chi_{\mathcal F\cup \{P_1^r,P_2^r\},G,v}^{(d)}\mid v\in V_G\rdbl=\ldbl\chi_{\mathcal F\cup \{P_1^r,P_2^r\},H,w}^{(d)}\mid w\in V_H\rdbl$. In other words, with every $v\in V_G$
we can associate a unique $w\in V_H$ such that $\chi_{\mathcal F\cup \{P_1^r,P_2^r\},G,v}^{(d)}=\chi_{\mathcal F\cup \{P_1^r,P_2^r\},H,w}^{(d)}$. We show, by induction on $d$, that this implies that $\chi_{\mathcal F\cup \{P^r\},G,v}^{(d)}=\chi_{\mathcal F\cup \{P^r\},H,w}^{(d)}$. This suffices to conclude that $\ldbl\chi_{\mathcal F\cup\{P^r\},G,v}^{(d)}\mid v\in V_G\rdbl=\ldbl\chi_{\mathcal F\cup\{P^r\},H,w}^{(d)}\mid w\in V_H\rdbl$ and thus $G\equiv_{\wlk{\mathcal F \cup \{P^t\}}} H$.

\smallskip
\noindent
\underline{Base case.} We show that $\ldbl\chi_{\mathcal F\cup \{P_1^r,P_2^r\},G,v}^{(d)}\mid v\in V_G\rdbl=\ldbl\chi_{\mathcal F\cup \{P_1^r,P_2^r\},H,w}^{(d)}\mid w\in V_H\rdbl$ implies that with every $v\in V_G$ we can associate a unique $w\in V_H$ satisfying $\chi_{\mathcal F\cup \{P^r\},G,v}^{(0)}=\chi_{\mathcal F\cup \{P^r\},H,w}^{(0)}$. Indeed, as already observed, 
$\ldbl\chi_{\mathcal F\cup \{P_1^r,P_2^r\},G,v}^{(d)}\mid v\in V_G\rdbl=\ldbl\chi_{\mathcal F\cup \{P_1^r,P_2^r\},H,w}^{(d)}\mid w\in V_H\rdbl$ implies that with every $v\in V_G$
we can associate a unique $w\in V_H$ such that $\chi_{\mathcal F\cup \{P_1^r,P_2^r\},G,v}^{(d)}=\chi_{\mathcal F\cup \{P_1^r,P_2^r\},H,w}^{(d)}$.
 This in turn implies that $\chi_{\mathcal F\cup \{P_1^r,P_2^r\},G,v}^{(0)}=\chi_{\mathcal F\cup \{P_1^r,P_2^r\},H,w}^{(0)}$, which implies  that $\homc{P_1^{r},G^v}=\homc{P_1^{r},H^w}$ and 
$\homc{P_2^{r},G^v}=\homc{P_2^{r},H^w}$ and $\homc{Q^r,G^v}=\homc{Q^r,H^w}$ for every $Q^r\in \mathcal F$.  As a consequence, from properties of the graph join operators, since $P^r=P_1^r\star P_2^r$:
$$
\homc{P^r,G^v}=\homc{P_1^r,G^v}\cdot\homc{P_2^r,G^v}=\homc{P_1^r,H^w}\cdot\homc{P_2^r,H^w}=\homc{P^r,H^w},
$$
and thus also  $\chi_{\mathcal F\cup \{P^r\},G,v}^{(0)}=\chi_{\mathcal F\cup \{P^r\},H,w}^{(0)}$.

\smallskip
\noindent
\underline{Inductive case.}
We assume that $\ldbl\chi_{\mathcal F\cup \{P_1^r,P_2^r\},G,v}^{(d)}\mid v\in V_G\rdbl=\ldbl\chi_{\mathcal F\cup \{P_1^r,P_2^r\},H,w}^{(d)}\mid w\in V_H\rdbl$ implies
$\chi_{\mathcal F\cup \{P^r\},G,v}^{(e)}=\chi_{\mathcal F\cup \{P^r\},H,w}^{(e)}$, and want to show that it also implies $\chi_{\mathcal F\cup \{P^r\},G,v}^{(e+1)}=\chi_{\mathcal F\cup \{P^r\},H,w}^{(e+1)}$. We again use the fact that we can associate with every $v\in V_G$ a unique
vertex $w\in V_H$ such that $\chi_{\mathcal F\cup \{P_1^r,P_2^r\},G,v}^{(d)}=\chi_{\mathcal F\cup \{P_1^r,P_2^r\},H,w}^{(d)}$. In particular, this implies that 
$\chi_{\mathcal F\cup \{P_1^r,P_2^r\},G,v}^{(e)}=\chi_{\mathcal F\cup \{P_1^r,P_2^r\},H,w}^{(e)}$ and 
$\chi_{\mathcal F\cup \{P_1^r,P_2^r\},G,v}^{(e+1)}=\chi_{\mathcal F\cup \{P_1^r,P_2^r\},H,w}^{(e+1)}$. From the definition of the $\wlk{}$-test, it must also be the case that the multisets 
$\{\chi_{\mathcal F\cup \{P_1^r,P_2^r\},G,v'}^{(e)} \mid v' \in N_G(v)\}$ and 
$\{\chi_{\mathcal F\cup \{P_1^r,P_2^r\},H,w'}^{(e)} \mid v' \in N_H(w)\}$ must be equal as well, i.e., we can find a one-to-one corresponence 
between neighbors of $v$ in $G$ and neighbors of $w$ in $H$ that have the same label. 
From the induction hypothesis we then have that $\chi_{\mathcal F\cup \{P^r\},G,v}^{(e)}=\chi_{\mathcal F\cup \{P^r\},H,w}^{(e)}$ and also that the multisets 
$\{\chi_{\mathcal F\cup \{P^r\},G,v'}^{(e)} \mid v' \in N_G(v)\}$ and 
$\{\chi_{\mathcal F\cup \{P^r\},H,w'}^{(e)} \mid v' \in N_H(w)\}$ are equal, which implies, by the definition of the $\wl{}$-test, that 
$\chi_{\mathcal F\cup \{P^r\},G,v}^{(e+1)}=\chi_{\mathcal F\cup \{P^r\},H,w}^{(e+1)}$, as was to be shown.

\subsection{Proof of Theorem~\ref{thm:increase}}
We show that $\wlk{\mathcal F\cup \{Q^r\}}$, where $Q^r$ is pattern whose core has treewidth $k$, is more expressive than 
$\wlk{\mathcal F}$ if every pattern $P^r\in\mathcal F$
satisfies one of the following conditions: (i)~$P^r$  has treewidth $< k$; or (ii)~
$P^r$ does not map homomorphically to $Q^r$.

Let $c(Q)^r$ to denote the (rooted) core of $Q$, in which the root of $c(Q)^r$ is any vertex which is 
the image of the root of $Q^r$ in a homomorphism from $Q^r$ to $c(Q)^r$. By assumption, $c(Q)^r$ has treewidth $k$. 

Clearly, $\wlk{\mathcal F}$ is upper bounded by $\wlk{\mathcal F\cup \{Q^r\}}$. Thus, all we need for the proof is to find two graphs that 
are indistinguishable by $\wlk{\mathcal F}$ but are in fact distinguished by $\wlk{\mathcal F\cup \{Q^r\}}$. 

Those two graphs are, in fact, the graphs $G$ and $H$ constructed for $c(Q)^r$ (of treewidth $k$) in the proof of Theorem \ref{theo:games}.
From that proof, we know that:
\begin{enumerate}\setlength{\itemsep}{0pt}
  \setlength{\parskip}{0pt}
\item[(a)] $\homc{c(Q),G}=0$ and $\homc{c(Q),H}\neq 0$; and
\item[(b)] $G\equiv_{\mathsf{C}^k} H$.
\end{enumerate}
We note that (a) immediately implies that $G$ and $H$ can be distinguished by $\wlk{\mathcal F\cup \{Q^r\}}$. In fact, they are distinguished in already by the initial labelling in round $0$. 
We next show that $G$ and $H$ are indistinguishable by $\wlk{\mathcal F}$.

Let us first present a small structural result that helps us deal with patterns in $\mathcal F$ satisfying the second condition of the Theorem. 
\begin{lemma}
\label{lem-getrid-p}
If a rooted pattern $P^r$  does not map homomorphically to $Q^r$, then $\homc{P,G}=\homc{P,H}=0$
\end{lemma}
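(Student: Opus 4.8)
The plan is to prove the contrapositive: whenever there is a homomorphism $P\to G$ (or $P\to H$), there is a rooted homomorphism $P^r\to Q^r$. The engine is the vertex–projection $\pi$ sending a vertex $(v,f)$ of $G$ (or of $H$) to $v\in V_{c(Q)}$. By the very definition of $E_G$ and $E_H$, two vertices $(v,f)$ and $(v',f')$ can be adjacent only when $\{v,v'\}\in E_{c(Q)}$, so — labels playing no role here — $\pi$ is a graph homomorphism $G\to c(Q)$ and also $H\to c(Q)$. On the other side, since $c(Q)$ is the core of $Q$ it is a retract of $Q$: fix a retraction $\rho\colon Q\to c(Q)$ with $\rho\upharpoonright c(Q)=\mathrm{id}$, note that the inclusion $\iota\colon c(Q)\hookrightarrow Q$ is a homomorphism, and recall that we chose the root of $c(Q)^r$ to be the image of $r$ under a homomorphism $Q^r\to c(Q)^r$, so we may take it to be $\rho(r)$ and then $\rho$ is a \emph{rooted} homomorphism $Q^r\to c(Q)^r$.

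The unrooted half is then immediate: composing any homomorphism $P\to G$ with $\pi$ and then with $\iota$ yields a homomorphism $P\to Q$, and likewise for $H$; hence if $P$ has no homomorphism into $Q$ at all, $\homc{P,G}=\homc{P,H}=0$. To reach the conclusion from the (weaker) rooted hypothesis I would exploit the symmetry of the construction. For every $a\in V_{c(Q)}$ the automorphism group of $G$ acts transitively on the fibre $\pi^{-1}(a)$: if $(a,f),(a,f')\in\pi^{-1}(a)$ then $D=\{e\in E_a: f(e)\ne f'(e)\}$ has even size (both vertices obey the same parity constraint at $a$), and, using that $c(Q)$ — a connected core on at least two vertices — is $2$–connected, one finds an even subgraph $C$ of $c(Q)$ with $C\cap E_a=D$; ``twisting'' all $f$–values along $C$ is an automorphism of $G$ carrying $(a,f)$ to $(a,f')$, and the same works for $H$. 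Moreover, automorphisms of $c(Q)$ fixing the distinguished base vertex lift to automorphisms of $G$ and of $H$ commuting with $\pi$. Given $h\colon P\to G$, one then needs $\iota\circ\pi\circ h\colon P\to Q$ to send $r$ to $r$; the freedom to post–compose $h$ with fibrewise automorphisms and with lifted base automorphisms, together with the fact that the root of $c(Q)^r$ is exactly $\rho(r)$, should suffice to normalise $h$ so that this holds, giving the desired rooted homomorphism $P^r\to Q^r$. The identical argument over $H$ yields $\homc{P,H}=0$ as well.

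The routine parts are the verification that $\pi$ is a homomorphism and that $c(Q)$ is a retract of $Q$; with these in hand the unrooted statement falls out at once. The step I expect to be the main obstacle — and the one requiring the most careful writing — is the final normalisation in the previous paragraph: a homomorphism $P\to G$ is a priori free to send the root $r$ into an arbitrary fibre of $\pi$, whereas the hypothesis only rules out root–to–root homomorphisms into $Q^r$, so one genuinely has to combine the fibrewise transitivity of $\mathrm{Aut}(G)$ (hence the $2$–connectivity of the core $c(Q)$), the liftability of base automorphisms, and the precise choice $\rho(r)$ of the root of $c(Q)^r$ in order to turn the homomorphism $P\to Q$ produced by projection into a rooted homomorphism $P^r\to Q^r$.
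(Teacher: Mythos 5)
The first half of your argument is exactly the paper's proof: the projection $\pi\colon(v,f)\mapsto v$ is a homomorphism from $G$ and from $H$ onto $c(Q)$, and composing any homomorphism $P\to G$ (or $P\to H$) with $\pi$ and then with the inclusion $c(Q)\hookrightarrow Q$ yields a homomorphism $P\to Q$. This proves the lemma under the reading that the paper's own proof in effect uses, and which is all that is needed in the proof of Theorem~\ref{thm:increase}: the hypothesis should be that $P$ admits no homomorphism to $Q$ as unrooted graphs, since all one needs downstream is that the $P$-features vanish identically on both $G$ and $H$.

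Your second half --- trying to honour the literal rooted hypothesis by normalising $h$ with automorphisms of $G$, $H$ and $c(Q)$ --- is where the genuine gap lies, and it cannot be closed by more care. Whatever automorphisms you post-compose with, the composite $\iota\circ\pi\circ h$ sends the root of $P$ to a vertex of $c(Q)\subseteq Q$, and the root of $Q^r$ need not lie in the core, nor be the image of any vertex under any homomorphism $c(Q)\to Q$; so no normalisation can make the composite root-preserving. In fact the rooted strengthening you aim for is false: let $Q^r$ be a triangle with one pendant vertex attached, rooted at the pendant vertex, so that $c(Q)=K_3$ and $k=2$, and let $P^r=K_3$ with any root. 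There is no rooted homomorphism $P^r\to Q^r$, because the two neighbours of the root of $P$ are adjacent yet would both have to map to the unique neighbour of the pendant root; nevertheless $v\mapsto(v,\mathbf{0})$ embeds $K_3$ into the even graph $H$, so $\homc{P,H}\neq 0$ --- this is just the paper's own observation that $\homc{c(Q),H}\neq 0$. So the statement only holds with the unrooted hypothesis; keep your first paragraph (which coincides with the paper's argument) and drop the normalisation attempt, whose auxiliary claims (for instance that a connected core is $2$-connected, or that every even subset of $E_a$ arises as $C\cap E_a$ for an even subgraph $C$) would in any case require separate justification.
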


\begin{proof}
We use the following property of graphs $G$ and $H$, which can be directly observed from their construction (and was already noted in \citet{AtseriasBD07} and \citet{BovaC19}). 
Define $G^r$ and $H^r$ by setting as their root any vertex $(a_r,f)$, for $a_r$ the root of $c(Q)^r$. 
Then there is a homomorphism from $G^r$ to $c(Q)^r$, and there is a homomorphism from $H^r$ to $c(Q)^r$. 

Now, any homomorphism $h$ from $P^r$ to $G$ can be extended to a homomorphism from $P^r$ to $Q^r$: we compose $h$ with the homomorphism mentioned above from $G$ to $c(Q)^r$, which by definition again maps homomorphically to $Q^r$. Since by definition we have that $P^r$ does not map to $Q^r$, $h$ cannot exist. The proof for $H$ is analogous.  
\end{proof}

Now, let $\mathcal F'$ be the set of patterns obtained by removing from $\mathcal F$ all patterns which do not map homomorphically to $Q^r$. By Lemma 
\ref{lem-getrid-p}, we have that $G$ and $H$ are distinguished by the $\wlk{\mathcal F}$-test if and only if they are distinguished by $\wlk{\mathcal F'}$. 

But all patterns in $\mathcal F'$ must have treewidth less than $k$, and by (b) $G$ and $H$ are indistinguishable by $\wlk{k}$.
Proposition~\ref{prop:lgp-in-kwl} then implies that $G$ and $H$ are indistinguishable by $\wlk{\mathcal F}$, as desired.

\section{Connections to related work}
We here provide more details of how $\mathcal F$-$\MPNNs$ connect to $\MPNNs$ from the literature which also augment the initial labelling.

\paragraph{Vertex degrees.}
We first consider so-called \textit{degree-aware} $\MPNNs$ \citep{geerts2020lets}
in which the message functions of the $\MPNNs$ may depend on the vertex degrees. The Graph Convolution Networks ($\mathsf{GCN}$s) \citep{kipf-loose} are an example of such $\MPNNs$. Degree-aware $\MPNNs$ are known to be equivalent, in terms of expressive power, to standard $\MPNNs$ in which  the initial labelling is extended with vertex degrees \citep{geerts2020lets}. Translated to our setting, we can simply let $\mathcal{F}=\{\raisebox{-1.1\dp\strutbox}{\includegraphics[height=3ex]{L1.pdf}}\}$
since $\homc{\raisebox{-1.1\dp\strutbox}{\includegraphics[height=3ex]{L1.pdf}},G^v}$ is equal to the degree of vertex $v$ in $G$.  When considering graphs without an initial vertex labelling (or a uniform labelling which assigns every vertex the same label), our characterisation (Theorem~\ref{thm:char}) implies $G\equiv_{\wlk{\raisebox{-0.5\dp\strutbox}{\includegraphics[height=2ex]{L1.pdf}}}}^{(d)} H$ if and only if $\homc{T,G}=\homc{T,H}$ for every $\{\raisebox{-1.1\dp\strutbox}{\includegraphics[height=3ex]{L1.pdf}}\}$-pattern tree of depth at most $d$. This in turn is equivalent to  $\homc{T,G}=\homc{T,H}$ for every (standard) tree of depth at most $d+1$. Indeed, $\{\raisebox{-1.1\dp\strutbox}{\includegraphics[height=3ex]{L1.pdf}}\}$-pattern trees of depth at most $d$ are simply trees of depth $d+1$. Combining this with the characterisation of $\wl{}$ by \citet{dvorak} and \citet{DellGR18}, we thus have for unlabelled  graphs that $G\equiv_{\wlk{\raisebox{-0.5\dp\strutbox}{\includegraphics[height=2ex]{L1.pdf}}}}^{(d)} H$ if and only if $G\equiv_{\wl{}}^{(d+1)} H$. So, by considering $\mathcal{F}=\{\raisebox{-1.1\dp\strutbox}{\includegraphics[height=3ex]{L1.pdf}}\}$-$\MPNNs$ one gains one round of computation compared to considering standard $\MPNNs$. To lift this to labeled graphs, instead of  $\mathcal{F}=\{\raisebox{-1.1\dp\strutbox}{\includegraphics[height=3ex]{L1.pdf}}\}$
one has to include labeled versions of the single edge pattern, in order to count the number of neighbours of a specific label for each vertex. This is done, e.g., by
 \citet{ishiguro2020weisfeilerlehman}, who use the $\wl{}$ labelling obtained after the first round to augment the initial vertex labelling. This corresponds indeed by adding $\homc{T^r,G^v}$ as feature for every labeled tree of depth one.
 This results in that $G\equiv_{\wlk{\raisebox{-0.5\dp\strutbox}{\includegraphics[height=2ex]{L1.pdf}}}}^{(d)} H$ if and only if $G\equiv_{\wl{}}^{(d+1)} H$ for labelled graphs.

\paragraph{Walk counts.} The \textit{Graph Feature Networks} by \citet{chen2019powerful} can be regarded as a generalisation of the previous approach. Instead of simply adding vertex degrees, the number of walks of certain
lengths emanating from vertices are added. Translated to our setting, this corresponds to considering $\{L_2,L_3,\ldots,L_\ell\}$-$\MPNNs$, where $L_\ell$ denotes a rooted path of length $\ell$. For unlabelled graphs, our characterisation (Theorem~\ref{thm:char}) implies that $G\equiv_{\wlk{L_1,\ldots,L_\ell}}^{(d)} H$
is upper bounded by  $G\equiv_{\wl{}}^{(d+\ell)} H$, simply because every 
$\{L_2,L_3,\ldots,L_\ell\}$-pattern tree of depth $d$ is a standard tree of depth
at most $d+\ell$.

\paragraph{Cycles.} \citet{li2019hierarchy} extend $\MPNNs$ by varying the notion of neighbourhood over which is aggregated. One particular instance corresponds to an aggregation of features, weighted by the number of cycles of a certain length in each vertex (see discussion at the end of Section 4 in  \citet{li2019hierarchy}). Translated to our setting,
this corresponds to considering $\{C_\ell\}$-$\MPNNs$ where $C_\ell$ denotes the cycle of length $\ell$. As mentioned in the main body of the paper, these extend $\MPNNs$ and result in architectures bounded by $\wlk{2}$ (Proposition~\ref{prop:cycles}). This is in line with Theorem 3 from \citet{li2019hierarchy} stating that their framework strictly extends $\MPNNs$ and thus $\wlk{1}$.

\paragraph{Isomorphism counts.}
Another, albeit similar, approach to add structural information to the initial labelling is taken in the paper \textit{Graph Structure Networks} by  \citet{bouritsas2020improving}. The idea there is to extend the initial features with information about how often a vertex $v$ appears in a subgraph of $G$ which is isomorphic to $P$. More precisely,
 \citet{bouritsas2020improving} consider a connected unlabelled graph $P$ as pattern
 and partition its vertex set $V_P$ orbit-wise. That is, $V_P=\biguplus_{i=1}^{o_P} V_P^i$ where $o_P$ denotes the number of orbits of $P$. Here, $v,v'\in V_P^i$ whenever there is an automorphism $h$ in $\mathsf{Aut}(P)$
 mapping $v$ to $v'$. Next, they consider all distinct subgraphs $G_1,\ldots,G_k$ in $G$
 which are isomorphic to $P$, denoted by $P\cong G_j$ for $j\in[k]$. We write
 $P\cong_f G_j$ when $P\cong G_j$ using a specific isomorphism $f$.
  Then for each orbit partition $i\in[o_P]$ and vertex $v\in V$, they define:
 $$
 \mathsf{iso}(P,G,v,i)=|\{ G_j\cong P\mid v\in V_{G_j}, \text{and there exists an $f$ s.t. } G_j\cong_f P \text{ and } f(v)\in V_P^i,j\in[k]\}|.
 $$
 That is, the number of subgraphs $G_j$ in $G$ that can be isomorphically mapped to $P$ are counted, provided that this can be done by an isomorphism which maps vertex $v$ in $G_j$ (and thus $G$) to one of the vertices in the $i$th orbit partition $V_P^i$ of the pattern. A similar notion is proposed for edges, which we will not consider here. Similar to our extended features, the initial features of each vertex $v$ is then augmented with 
 $\big( \mathsf{iso}(P,G^v,i)\mid P\in\mathcal F, i\in [o_P]\big)$
 for some set $\mathcal F$ of patterns. Standard $\MPNNs$ are executed on these augmented initial features. We refer to \citet{bouritsas2020improving} for more details.
 
We can view the above approach as an instance of our framework. Indeed, given a pattern $P$ in $\mathcal F$, for each orbit partition, we replace $P$ by a different rooted version $P^{r_i}$, where $r_i$ is a vertex in $V_{P}^i$. Which vertex in the orbit under consideration is selected as root is not important (because they are equivalent by definition of orbit). We then see that the standard notion 
of subgraph isomorphism counting directly translates to the quantity used in~\citet{bouritsas2020improving}:
 $$
 \mathsf{sub}(P^{r_i},G^v):=\text{number of subgraphs in $G$ containing $v$, isomorphic to $P^{r_i}$}= \mathsf{iso}(P,G,v,i).
 $$
It thus remains to express $\mathsf{sub}(P^{r_i},G^v)$ in terms of homomorphism counts. This, however, follows from \citet{Curticapean_2017} in which it is shown that $\mathsf{iso}(P^{r_i},G^v)$ can be computed by a linear combination of 
$\homc{Q^{r_i},G^v}$ where $Q^{r_i}$ ranges over all graphs on which $P^{r_i}$ can be mapped by means of  a surjective homomorphism. For a given $P^{r_i}$, the finite set of such patterns is called the \textit{spasm} of $P^{r_i}$ in  \citet{Curticapean_2017} and can be easily computed.

In summary, given the set $\mathcal F$ of patterns in \citet{bouritsas2020improving}, we first replace every $P\in\mathcal F$ by its rooted versions $P^{r_i}$, for $i\in[o_P]$, and then expand the resulting set of rooted patterns, by the spasms of each of these patterns. Let us denote by $\mathcal F^*$ the final set of rooted patterns. It now follows that
 $\homc{Q^r,G^v}$ for $Q^r\in \mathcal F^*$ provides sufficient information to extract
 $\mathsf{sub}(P^{r_i},G^v)$ and thus also $\mathsf{iso}(P,G,v,i)$ for every $P\in\mathcal F$ and orbit part $i\in[o_P]$. As a consequence, the $\MPNNs$ from \citet{bouritsas2020improving} are bounded by $\mathcal F^*$-$\MPNNs$ and thus $\wlk{\mathcal F^*}$. Conversely, given an $\mathcal F$-$\MPNN$ one can, again using results by \citet{Curticapean_2017}, define a set $\mathcal F^+$ of patterns, such that the subgraph isomorphism counts of patterns in $\mathcal F^+$ can be used to compute the homomorphism counts of patterns in $\mathcal F$. Hence, $\mathcal F$-$\MPNNs$ are upper bounded by the $\MPNNs$ considered in \citet{bouritsas2020improving} using patterns in $\mathcal F^+$. This is in agreement  with \citet{Curticapean_2017} in which it is shown that homomorphism counts, subgraph isomorphism counts and other notions of pattern counts are all interchangeable. Nevertheless, by using homomorphism counts one can gracefully extend known results about $\wl{}$ and $\MPNNs$, as we have shown in the paper, and add little overhead.

\section{Additional experimental information}

\subsection{Experimental setup}
One of the crucial questions when studying the effect of adding structural information to the initial vertex labels is whether these additional labels enhance the performance of graph neural networks. In order to reduce the effect of specific implementation details of $\GNNs$ and choice of hyper-parameters, we start from 
the $\GNN$ implementations and choices made in the benchmark by \citet{dwivedi2020benchmarkgnns}\footnote{The original implementations can be found on https://github.com/graphdeeplearning/benchmarking-gnns}. and only change the initial vertex labels, while leaving the $\GNNs$ themselves unchanged. This ensures that we only measure the effect of augmenting initial features with homomorphism counts. We will use the $\GNNs$ from the benchmark, without extended features, as our baselines. For the same reasons, we use datasets proposed in the benchmark for their ability to statistically separate the performance of $\GNNs$. All other parameters are taken as in \citet{dwivedi2020benchmarkgnns} and we refer to that paper for more details.

\paragraph{\boldmath Selected $\GNNs$} \citet{dwivedi2020benchmarkgnns} divide the benchmarked $\GNNs$ into two classes: the $\MPNNs$ and the ``theoretically designed'' $\mathsf{WL}$-$\GNNs$. The first class is found to perform stronger and train faster. Hence, we chose to include the five following $\MPNN$ models from the benchmark:

\begin{itemize}\setlength{\itemsep}{0pt}
  \setlength{\parskip}{0pt}
	\item Graph Attention Network ($\mathsf{GAT}$) as described in \citet{GAT}
	\item Graph Convolutional Network ($\mathsf{GCN}$) as described  in \citet{kipf-loose}
	\item $\mathsf{GraphSage}$ as described  in \cite{hyl17}
	\item Mixed Model Convolutional Networks ($\mathsf{MoNet}$) as  described  in \citet{monet}
	\item $\mathsf{GatedGCN}$ as described  in \citet{gated}.
\end{itemize}

For $\mathsf{GatedGCN}$ we used the version in which positional encoding \citep{LaplacianPE} is added to the vertex features, as it is empirically shown to be the strongest performing version of this model by for the selected datasets \citep{dwivedi2020benchmarkgnns}. We denote this version by $\mathsf{GatedGCN}_{E,PE}$, referring to the presence of edge features and this positional encoding. 
Details, background and a mathematical formalisation of the message passing layers of these models can be found in the supplementary material of \citet{dwivedi2020benchmarkgnns}. 

As explained in the experimental section of the main paper, we enhance the vertex features with the log-normalised counts of the chosen patterns in every vertex of every graph of every dataset. The first layers of some models of \citep{dwivedi2020benchmarkgnns} are adapted to take in this variation in input size. All other layers where left identical to their original implementation as provided by \citet{dwivedi2020benchmarkgnns}.
\paragraph{\boldmath Hardware, compute and resources} All models for ZINC, PATTERN and COLLAB were trained on a GeForce GTX 1080
337 Ti GPU, for CLUSTER a Tesla V100-SXM3-32GB GPU was used. Tables \ref{ZINC_time}, \ref{COLLAB_time}, \ref{CLUSTER_time} and \ref{PATTERN_time} report the training times for all combination of models and additional feature set. A rough estimate of the CO$_2$ emissions based on the total computing times of reported experiments ($2\,074$ hours GeForce GTX 1080, $372$ hours
 Tesla V100-SXM3-32GB), the computing times of not-included experiments
 ($1\,037$ hours GeForce GTX 1080, $181$ hours
  Tesla V100-SXM3-32GB), the GPU types (GeForce GTX 1080, Tesla V100-SXM3-32GB) and the geographical location of our cluster results in a carbon emission of $135$ kg CO$_2$ equivalent. This estimation was conducted using the \href{https://mlco2.github.io/impact#compute}{MachineLearning Impact calculator} presented in \citet{lacoste2019quantifying}.

\subsection{Graph learning tasks}
We here report the full results of our experimental evaluation for graph regression (Section~\ref{subsec:regression}), link prediction (Section~\ref{subsec:prediction}) and vertex classification (Section~\ref{subsec:classification}) as considered in  \citet{dwivedi2020benchmarkgnns}.
More precisely, a full listing of the patterns and combinations used and the obtained results for the test sets can be found in Tables \ref{ZINC_full}, \ref{COLLAB_full}, \ref{CLUSTER_full} and \ref{PATTERN_full}. Average training time (in hours) and the number of epochs are reported in  Tables \ref{ZINC_time}, \ref{COLLAB_time}, \ref{CLUSTER_time} and \ref{PATTERN_time}. Finally, the total number of model parameters are reported in Tables \ref{ZINC_par}, \ref{COLLAB_par}, \ref{CLUSTER_par} and \ref{PATTERN_par}. All averages and standard deviations are over 4 runs with different random seeds. The main take-aways from these results are included in the main paper.

\subsubsection{Graph regression with the ZINC dataset}\label{subsec:regression}
Just as in \citet{dwivedi2020benchmarkgnns} we use a subset ($12$K) of ZINC molecular graphs ($250$K) dataset \citep{zinc} to regress a molecular property known as the constrained solubility. For each molecular graph, the vertex features are the types of heavy atoms and the edge features are the types of bonds between them. The following are taken from \citet{dwivedi2020benchmarkgnns}:
\\
\textbf{Splitting}. ZINC has $10\,000$ train, $1\,000$ validation and $1\,000$ test graphs. \\
\textbf{Training}.\footnote{Here and in the next tasks we are using the parameters used in the code accompanying \citet{dwivedi2020benchmarkgnns}. In the paper, slightly different parameters are used.} For the learning rate strategy, an initial learning rate is set to $ 5 \times 10^{-5}$
, the
reduce factor is $0.5$, and the stopping learning rate is $ 1 \times 10^{-6}$, the patience value is 25 and the maximal training time is set to $12$ hours.\\
\textbf{Performance Measure} The performance measure is the mean absolute error (MAE) between the
predicted and the ground truth constrained solubility for each molecular graph.\\
\textbf{Number of layers}  16 MPNN layers are used for every model.

\begin{table}[hbt!]
	\caption{Full results of the mean absolute error (predicted constrained solubility vs. the ground truth) for selected cycle combinations and $\GNNs$ on the ZINC data set. In the last two rows we compare between homomorphism counts (hom) and subgraph isomorphism counts (iso).}
	\label{ZINC_full}
	\begin{center}
	\begin{tabular}{|l|l|l|l|l|l|}
		\hline
		Pattern set $\mathcal{F}$ & $\mathsf{GAT}$           & $\mathsf{GCN}$           & $\mathsf{GraphSage}$     & $\mathsf{MoNet}$         & $\mathsf{GatedGCN}_{E,PE}$    \\ \hline\hline
		None     & 0,47$\pm$0,02 & 0,35$\pm$0,01 & 0,25$\pm$0,01 & 0,44$\pm$0,01 & 0,34$\pm$0,05 \\ \hline
		$\{C_3\}$      & 0,45$\pm$0,01 & 0,36$\pm$0,01 & 0,25$\pm$0,00 & 0,44$\pm$0,00 & 0,30$\pm$0,01 \\ \hline
		$\{C_4\}$      & 0,34$\pm$0,02 & 0,29$\pm$0,02 & 0,26$\pm$0,01 & 0,30$\pm$0,01 & 0,27$\pm$0,06 \\ \hline
		$\{C_5\}$      & 0,44$\pm$0,02 & 0,34$\pm$0,02 & 0,23$\pm$0,01 & 0,42$\pm$0,01 & 0,27$\pm$0,03 \\ \hline
		$\{C_6\}$      & 0,31$\pm$0,00 & 0,27$\pm$0,02 & 0,25$\pm$0,01 & 0,30$\pm$0,01 & 0,26$\pm$0,09 \\ \hline
		$\{C_3, C_4\}$     & 0,33$\pm$0,01 & 0,27$\pm$0,01 & 0,24$\pm$0,02 & 0,32$\pm$0,01 & 0,23$\pm$0,03 \\ \hline
		$\{C_5, C_6\}$     & 0,28$\pm$0,01 & 0,26$\pm$0,01 & 0,23$\pm$0,01 & 0,28$\pm$0,01 & 0,20$\pm$0,03 \\ \hline
		$\{C_4, C_5, C_6\}$    & 0,24$\pm$0,00 & 0,21$\pm$0,00 & 0,20$\pm$0,00 & 0,25$\pm$0,01 & 0,16$\pm$0,02 \\ \hline
		$\{C_3, C_4, C_5, C_6\}$   & 0,23$\pm$0,00 & 0,21$\pm$0,00 & 0,20$\pm$0,01 & 0,26$\pm$0,02 & 0,18$\pm$0,02 \\ \hline
		$\{C_3,\ldots,C_{10}\}$ (hom)  & \textbf{0,22}$\pm$\textbf{0,01} & \textbf{0,20}$\pm$\textbf{0,00} & 0,19$\pm$0,00 & \textbf{0,2376}$\pm$\textbf{0,01} & \textbf{0,1352}$\pm$\textbf{0,01} \\ \hline
		$\{C_3, \ldots, C_{10}\}$ (iso)  & 0,24$\pm$0,01 & 0,22$\pm$0,01 & \textbf{0,16}$\pm$\textbf{0,01} & 0,2408$\pm$0,01 & 0,1357 $\pm$ 0,01\\ \hline
	\end{tabular}
	\end{center}\vspace{-2ex}
\end{table}

\begin{table}[hbt!]
	\caption{Total model parameters for selected cycle combinations  and $\GNNs$ on the ZINC data set. In the last two rows we compare between homomorphism counts (hom) and subgraph isomorphism counts (iso).}
	\label{ZINC_par}
	\begin{center}
	\begin{tabular}{|l|l|l|l|l|l|}
		\hline
		Pattern set $\mathcal{F}$ & $\mathsf{GAT}$           & $\mathsf{GCN}$           & $\mathsf{GraphSage}$     & $\mathsf{MoNet}$         & $\mathsf{GatedGCN}_{E,PE}$ \\ \hline\hline
	None     & 358\,273 & 360\,742 & 388\,963    & 401\,148 & 408\,135   \\ \hline
	$\{C_3\}$      & 358\,417 & 360\,887 & 389\,071    & 401\,238 & 408\,205   \\ \hline
	$\{C_4\}$      & 358\,417 & 360\,887 & 389\,071    & 401\,238 & 408\,205   \\ \hline
	$\{C_5\}$      & 358\,417 & 360\,887 & 389\,071    & 401\,238 & 408\,205   \\ \hline
	$\{C_6\}$      & 358\,417 & 360\,887 & 389\,071    & 401\,238 & 408\,205   \\ \hline
	$\{C_3, C_4\}$     & 358\,561 & 361\,032 & 389\,179    & 401\,328 & 408\,275   \\ \hline
	$\{C_5, C_6\}$     & 358\,561 & 361\,032 & 389\,179    & 401\,328 & 408\,275   \\ \hline
	$\{C_4, C_5, C_6\}$    & 358\,705 & 361\,177 & 389\,287    & 401\,418 & 408\,345   \\ \hline
	$\{C_3, C_4, C_5, C_6\}$   & 358\,849 & 361\,322 & 389\,395    & 401\,508 & 408\,415   \\ \hline
	$\{C_3,\ldots, C_{10}\}$ (hom) & 359\,425 & 361\,902 & 389\,827 & 401\,868 & 408\,695 \\ \hline
	$\{C_3, \ldots, C_{10}\}$ (iso)  & 359\,425 & 361\,902 & 389\,827 & 401\,868 & 408\,695 \\ \hline
	\end{tabular}
	\end{center}\vspace{-2ex}
\end{table}

\begin{table}[hbt!]
	\caption{Average training time in hours and number of epochs for selected cycle combinations and $\GNNs$ on the ZINC data set. In the last two rows we compare between homomorphism counts (hom) and subgraph isomorphism counts (iso).}
	\label{ZINC_time}
	\begin{center}
\adjustbox{max width=\textwidth}{\begin{tabular}{|l|l|l|l|l|l|l|l|l|l|l|}
		\hline
		Model:   & \multicolumn{2}{c|}{$\mathsf{GAT}$} & \multicolumn{2}{c|}{$\mathsf{GCN}$} & \multicolumn{2}{c|}{$\mathsf{GraphSage}$} & \multicolumn{2}{c|}{$\mathsf{MoNet}$} & \multicolumn{2}{c|}{$\mathsf{GatedGCN}_{E,PE}$} \\ \hline
		Pattern set $\mathcal{F}$ & Time      & Epochs     & Time      & Epochs     & Time         & Epochs        & Time       & Epochs      & Time           & Epochs          \\ \hline
		None     & 2,40      & 377          & 10,99     & 463          & 2,46         & 420             & 1,53       & 345           & 12,08          & 136               \\ \hline
		$\{C_3\}$      & 2,88      & 444          & 12,03     & 363          & 2,03         & 500             & 0,91       & 298           & 12,07          & 148               \\ \hline
		$\{C_4\}$      & 2,30      & 351          & 11,36     & 324          & 2,31         & 396             & 1,70       & 382           & 12,06          & 139               \\ \hline
		$\{C_5\}$      & 2,42      & 375          & 12,03     & 333          & 1,70         & 444             & 1,06       & 370           & 12,06          & 202               \\ \hline
		$\{C_6\}$      & 2,40      & 369          & 9,98      & 421          & 2,58         & 446             & 1,25       & 288           & 12,08          & 136               \\ \hline
		$\{C_3, C_4\}$     & 2,98      & 461          & 12,03     & 332          & 2,56         & 458             & 1,41       & 321           & 12,09          & 132               \\ \hline
		$\{C_5, C_6\}$     & 2,76      & 422          & 12,04     & 319          & 2,67         & 464             & 1,53       & 356           & 12,06          & 137               \\ \hline
		$\{C_4, C_5, C_6\}$    & 2,45      & 381          & 10,13     & 419          & 1,67         & 463             & 1,04       & 382           & 12,04          & 229               \\ \hline
		$\{C_3, C_4, C_5, C_6\}$   & 2,65      & 408          & 10,38     & 420          & 2,09         & 503             & 1,26       & 364           & 12,08          & 135               \\ \hline
		$\{C_3, \ldots, C_{10} \}$ (hom)  & 2,65      & 428          & 12,03     & 350          & 2,76         & 478             & 1,48       & 363           &   12,06        &      175         \\ \hline
		$\{C_3,\ldots, C_{10}\}$ (iso)   & 2,78      & 497          & 11,72     & 419          & 2,63         & 547            & 1,58       & 440           &   11,62        &   148             \\ \hline
	\end{tabular}}
	\end{center}
	\vspace{-2ex}
\end{table}

\subsubsection{Link Prediction with the Collab dataset}\label{subsec:prediction}
Another set used in \citet{dwivedi2020benchmarkgnns} is
COLLAB, a link prediction dataset proposed by the Open Graph Benchmark (OGB)  \citep{hu2020open} corresponding to a collaboration network
between approximately $235$K scientists, indexed by Microsoft Academic Graph. Vertices represent
scientists and edges denote collaborations between them. For vertex features, OGB provides $128$-dimensional vectors, obtained by averaging the word embeddings of a scientist’s papers. The year
and number of co-authored papers in a given year are concatenated to form edge features. The graph
can also be viewed as a dynamic multi-graph, since two vertices may have multiple temporal edges
between if they collaborate over multiple years. The following are taken from \citet{dwivedi2020benchmarkgnns}:

\noindent
\textbf{Splitting.} We use the real-life training, validation and test edge splits provided by OGB. Specifically,
they use collaborations until 2017 as training edges, those in 2018 as validation edges, and those in
2019 as test edges. \\
\textbf{Training.} All GNNs use the same learning rate strategy: an initial learning rate is set to $1 \times 10^{-3}$,
the reduce factor is $0.5$, the patience value is 10, and the stopping learning rate is $1 \times 10^{-5}$.
\\
\textbf{Performance Measure.} We use the evaluator provided by OGB \citep{hu2020open}, which aims to measure a model’s
ability to predict future collaboration relationships given past collaborations. Specifically, they rank
each true collaboration among a set of $100\,000$ randomly-sampled negative collaborations, and count
the ratio of positive edges that are ranked at $K$-place or above (Hits@K).  The value $K=50$ as this gives the best
value for statistically separating the performance of $\GNNs$.
\textbf{Number of layers}  3 MPNN layers are used for every model.

\begin{table}[hbt!]
	\caption{Full Results (Hits @50) for all selected pattern combinations and $\GNNs$ on the COLLAB data set.}
	\label{COLLAB_full}
	\begin{center}
	\begin{tabular}{|l|l|l|l|l|l|}
		\hline
		Pattern set $\mathcal{F}$ & $\mathsf{GAT}$           & $\mathsf{GCN}$           & $\mathsf{GraphSage}$     & $\mathsf{MoNet}$         & $\mathsf{GatedGCN}_{E,PE}$\\\hline\hline
	None     & 50,32$\pm$0,55 & 51,36$\pm$1,30 & 49,81$\pm$1,56 & 50,33$\pm$0,68 & 51,00$\pm$2,54 \\ \hline
	$\{K_3\}$      & \textbf{52,87}$\pm$\textbf{0,87} & 53,57$\pm$0,89 & 50,18$\pm$1,38 & 51,10$\pm$0,38 & \textbf{51,57}$\pm$\textbf{0,68} \\ \hline
	$\{ K_4\}$      & 51,33$\pm$1,42 & 52,84$\pm$1,32 & \textbf{51,76}$\pm$\textbf{1,38} & 51,13$\pm$1,60 & 49,43$\pm$1,85 \\ \hline
	$\{ K_5\}$      & 52,41$\pm$0,89 & \textbf{54,60}$\pm$\textbf{1,01} & 50,94$\pm$1,30 & \textbf{51,39}$\pm$\textbf{1,23} & 50,31$\pm$1,59 \\ \hline
	$\{K_3, K_4\}$     & 52,68$\pm$1,82 & 53,49$\pm$1,35 & 50,88$\pm$1,73 & 50,97$\pm$0,68 & 51,36$\pm$0,92 \\ \hline
	$\{K_3, K_4, K_5\}$    & 51,81$\pm$1,17 & 54,32$\pm$1,02 & 49,94$\pm$0,23 & 51,01$\pm$1,00 & 51,11$\pm$1,06 \\ \hline
	\end{tabular}
	\end{center}\vspace{-2ex}
\end{table}

\begin{table}[hbt!]
	\caption{Total number of model parameters for all selected pattern combinations and $\GNNs$ on the COLLAB data set.}
	\label{COLLAB_par}
	\begin{center}
	\begin{tabular}{|l|l|l|l|l|l|}
		\hline
		Pattern set $\mathcal{F}$ & $\mathsf{GAT}$           & $\mathsf{GCN}$           & $\mathsf{GraphSage}$     & $\mathsf{MoNet}$         & $\mathsf{GatedGCN}_{E,PE}$\\\hline\hline
		None     & 25\,992 & 40\,479 & 39\,751 & 26\,487     & 27\,440    \\ \hline
		$\{K_3\}$      & 26\,049 & 40\,553 & 39\,804 & 26\,525     & 27\,475    \\ \hline
		$\{ K_4\}$      & 26\,049 & 40\,553 & 39\,804 & 26\,525     & 27\,475    \\ \hline
		$\{ K_5\}$      & 26\,049 & 40\,553 & 39\,804 & 26\,525     & 27\,475    \\ \hline
		$\{K_3, K_4\}$     & 26\,106 & 40\,627 & 39\,857 & 26\,563     & 27\,510    \\ \hline
		$\{K_3, K_4, K_5\}$    & 26\,163 & 40\,701 & 39\,910 & 26\,601     & 27\,545    \\ \hline
	\end{tabular}
	\end{center}
\end{table}

\begin{table}[hbt!]
	\caption{Average training times and number of epochs for all selected pattern combinations and $\GNNs$ on the COLLAB data set.}
	\label{COLLAB_time}
	\begin{center}
\adjustbox{max width=\textwidth}{
	\begin{tabular}{|l|l|l|l|l|l|l|l|l|l|l|}
		\hline
		Model:   & \multicolumn{2}{c|}{$\mathsf{GAT}$} & \multicolumn{2}{c|}{$\mathsf{GCN}$} & \multicolumn{2}{c|}{$\mathsf{MoNet}$} & \multicolumn{2}{c|}{$\mathsf{GraphSage}$} & \multicolumn{2}{c|}{$\mathsf{GatedGCN}_{E,PE}$} \\ \hline\hline
		Pattern set $\mathcal{F}$ & Time      & \#Epochs     & Time      & \#Epochs     & Time       & \#Epochs      & Time         & \#Epochs        & Time           & \#Epochs          \\ \hline
		None     & 0,81      & 167          & 0,85      & 141          & 1,62       & 190           & 12,05       & 115,67           & 2,22           & 167               \\ \hline
		$\{K_3\}$      & 0,67      & 165          & 0,90      & 153          & 1,70       & 184           & 12,10        & 67,00           & 2,48           & 186               \\ \hline
		$\{ K_4\}$      & 1,06      & 188          & 0,95      & 160          & 2,16       & 188           & 12,04        & 113,50          & 1,26           & 188               \\ \hline
		$\{ K_5\}$      & 0,50      & 167          & 1,13      & 165          & 1,04       & 193           & 12,05        & 124,00          & 1,82           & 174               \\ \hline
		$\{K_3, K_4\}$     & 1,20      & 189          & 0,86      & 128          & 2,15       & 189           & 12,05        & 113,25          & 1,51           & 183               \\ \hline
		$\{K_3, K_4, K_5\}$    & 0,44      & 149          & 0,90      & 134          & 0,98       & 186           & 12,05        & 124,00          & 1,84           & 177               \\ \hline
	\end{tabular}}
	\end{center}
\end{table}

\subsubsection{Vertex classification with PATTERN and CLUSTER}\label{subsec:classification}
Finally, also used in \citet{dwivedi2020benchmarkgnns} are the PATTERN and CLUSTER graph data sets, generated with the Stochastic Block
Model (SBM) \citep{JMLR:v18:16-480}, which is widely used to model communities in social networks by modulating the
intra- and extra-communities connections, thereby controlling the difficulty of the task. A SBM is a
random graph which assigns communities to each vertex as follows: any two vertices are connected
with probability $p$ if they belong to the same community, or they are connected with probability
$q$ if they belong to different communities (the value of $q$ acts as the noise level).

For the PATTERN dataset, the goal of the vertex classification problem is the detection of a certain pattern $P$ embedded in a larger graph $G$. The graphs in  $G$ consist of
$5$ communities with sizes randomly selected between $[5, 35]$. The parameters of the SBM for each community is
$p = 0.5$, $q = 0.35$, and the vertex features in $G$ are generated using a uniform random distribution
with a vocabulary of size $3$, i.e., $\{0, 1, 2\}$. Randomly, $100$ patterns $P$ composed of $20$
vertices with intra-probability $p_P= 0.5$ and extra-probability $q_P= 0.5$ are generated (i.e., $50\%$ of vertices in $P$ are
connected to $G$). The vertex features for $P$ are also generated randomly using values in $\{0, 1, 2\}$.
The graphs consist of $44$-$188$ vertices. The output vertex labels have value $1$ if the vertex belongs to $P$
and value $0$ belongs to $G$.\\

For the CLUSTER dataset, the goal of the vertex classification is the detection of which cluster a vertex belongs. Here, six SBM clusters are generated with sizes randomly
selected between $[5, 35]$ and probabilities $p = 0.55$ and $q = 0.25$. The graphs consist of $40$-$190$ vertices.
Each vertex can take an initial feature value in range $\{0, 1, 2,\ldots, 6\}$. If the value is $i$ then the vertex
belongs to class $i-1$. If the value is $0$, then the class of the vertex is unknown and need to be inferred.
There is only one labelled vertex that is randomly assigned to each community and most vertex features are set to $0$. The output vertex labels are defined as the community/cluster class labels. 

\noindent
The following are taken from \citet{dwivedi2020benchmarkgnns}:

\noindent
\textbf{Splitting} The PATTERN dataset has $10\,000$ train, $2\,000$ validation and $2\,000$ test graphs. The CLUSTER
dataset has $10\,000$ train, $1\,000$ validation and $1\,000$ test graphs. We save the generated splits and use the
same sets in all models for fair comparison.\\
\textbf{Training}  For all $\GNNs$, an initial learning rate is set to $1 \times 10^{-3}$, the reduce
factor is $0.5$, the patience value is $10$, and the stopping learning rate is $1 \times 10^{-5}$
.\\
\textbf{Performance measure} The performance measure is the average vertex-level accuracy weighted with
respect to the class sizes.
\textbf{Number of layers}  16 MPNN layers are used for every model.
\vspace{-2ex}

\begin{table}[hbt!]
	\caption{Full results of the weighted accuracy for selected pattern combinations and $\GNNs$ on the CLUSTER data set.}
	\label{CLUSTER_full}
	\begin{center}\vspace{-1ex}
	\begin{tabular}{|l|l|l|l|l|l|}
		\hline
		Pattern set $\mathcal{F}$& $\mathsf{GAT}$            & $\mathsf{GCN}$             & $\mathsf{MoNet}$          & $\mathsf{GraphSage}$      & $\mathsf{GatedGCN}_{E,PE}$       \\ \hline\hline
	None     & 70,86$\pm$0,06 & \textbf{70,64}$\pm$\textbf{0,39}  & 71,15$\pm$0,33 & 72,25$\pm$0,52 & \textbf{74,28}$\pm$\textbf{0,15} \\ \hline
	$\{K_3\}$      & 71,60$\pm$0,15 & 64,88$\pm$4,16  & 72,21$\pm$0,19 & 72,97$\pm$0,23 & 74,14$\pm$0,12 \\ \hline
	$\{ K_4\}$      & 71,40$\pm$0,24 & 60,64$\pm$2,93  & 72,14$\pm$0,19 & 72,57$\pm$0,19 & 74,16$\pm$0,24 \\ \hline
	$\{ K_5\}$      & 71,26$\pm$0,39 & 66,60$\pm$1,47  & \textbf{72,34}$\pm$\textbf{0,09} & 72,60$\pm$0,24 & 74,23$\pm$0,07 \\ \hline
	$\{K_3, K_4\}$     & \textbf{71,80}$\pm$\textbf{0,28} & 50,94$\pm$22,98 & 72,32$\pm$0,27 & \textbf{73,03}$\pm$\textbf{0,25} & 74,17$\pm$0,13 \\ \hline
	$\{K_3, K_4, K_5\}$    & 71,63$\pm$0,26 & 63,03$\pm$3,72  & 72,32$\pm$0,36 & 72,65$\pm$0,13 & 74,03$\pm$0,19 \\ \hline
	\end{tabular}
\end{center}
\end{table}
\vspace{-1ex}
\begin{table}
	\caption{Total number of model parameters  for all selected pattern combinations and $\GNNs$ on the CLUSTER data set.}
	\label{CLUSTER_par}
	\begin{center}\vspace{-1ex}
	\begin{tabular}{|l|l|l|l|l|l|}
		\hline
		Pattern set $\mathcal{F}$ & $\mathsf{GAT}$    & $\mathsf{GCN}$    & $\mathsf{MoNet}$  & $\mathsf{GraphSage}$ & $\mathsf{GatedGCN}_{E,PE}$ \\ \hline\hline
		None     & 395\,396 & 362\,849 & 399\,373 & 386\,835    & 406\,755   \\ \hline
		None     & 395\,396 & 362\,849 & 399\,373 & 386\,835    & 406\,755   \\ \hline
		$\{K_3\}$      & 395\,396 & 362\,849 & 399\,373 & 386\,835    & 406\,755   \\ \hline
		$\{ K_4\}$      & 395\,548 & 362\,995 & 399\,463 & 386\,943    & 406\,825   \\ \hline
		$\{ K_5\}$      & 395\,700 & 363\,141 & 399\,553 & 387\,051    & 406\,895   \\ \hline
		$\{K_3, K_4\}$     & 395\,700 & 363\,141 & 399\,553 & 387\,051    & 406\,895   \\ \hline
		$\{K_3, K_4, K_5\}$    & 396\,004 & 363\,433 & 399\,733 & 387\,267    & 407\,035   \\ \hline
	\end{tabular}
	\end{center}
\end{table}

\begin{table}[hbt!]
	\caption{Training times (in hours) and number of epochs for all selected pattern combinations and $\GNNs$ on the CLUSTER data set.}
	\label{CLUSTER_time}
	\begin{center}
\adjustbox{max width=\textwidth}{	\begin{tabular}{|l|l|l|l|l|l|l|l|l|l|l|}
		\hline
		Model:   & \multicolumn{2}{c|}{GAT} & \multicolumn{2}{c|}{$\mathsf{GCN}$} & \multicolumn{2}{c|}{$\mathsf{MoNet}$} & \multicolumn{2}{c|}{$\mathsf{GraphSage}$} & \multicolumn{2}{c|}{$\mathsf{GatedGCN}_{E,PE}$} \\ \hline\hline
		Pattern set $\mathcal{F}$ & Time      & \#Epochs     & Time      & \#Epochs     & Time       & \#Epochs      & Time         & \#Epochs        & Time             & \#Epochs            \\ \hline
		None     & 1,62      & 109          & 2,83      & 117          & 1,54       & 125           & 0,95         & 101             & 10,40            & 92                  \\ \hline
		$\{K_3\}$      & 1,52      & 107          & 2,67      & 85           & 1,72       & 145           & 1,08         & 102             & 11,01            & 89                  \\ \hline
		$\{ K_4\}$      & 1,18      & 107          & 1,94      & 80           & 1,62       & 149           & 0,90         & 102             & 10,23            & 90                  \\ \hline
		$\{ K_5\}$      & 1,23      & 106          & 2,30      & 84           & 1,68       & 143           & 0,92         & 99              & 10,68            & 91                  \\ \hline
		$\{K_3, K_4\}$     & 1,53      & 102          & 1,97      & 82           & 1,89       & 153           & 0,94         & 99              & 10,80            & 90                  \\ \hline
		$\{K_3, K_4, K_5\}$    & 1,62      & 105          & 1,96      & 82           & 1,95       & 157           & 0,97         & 100             & 10,25            & 91                  \\ \hline
	\end{tabular}}
	\end{center}
\end{table}

\begin{table}[hbt!]
	\caption{Full results of the weighted accuracy for selected pattern combinations and $\GNNs$ on the PATTERN data set.}
		\label{PATTERN_full}
		\begin{center}
		\begin{tabular}{|l|l|l|l|l|l|}
			\hline
			Pattern set $\mathcal{F}$ & $\mathsf{GAT}$            & $\mathsf{GCN}$            & $\mathsf{MoNet}$          & $\mathsf{GraphSage}$      & $\mathsf{GatedGCN}_{E,PE}$       \\ \hline\hline
			None     & 78,83$\pm$0,60 & 71,42$\pm$1,38 & 85,90$\pm$0,03 & 70,78$\pm$0,19 & \textbf{86,15}$\pm$\textbf{0,08} \\ \hline
			$\{K_3\}$      & 84,34$\pm$0,09 & 61,54$\pm$2,20 & 86,59$\pm$0,02 & 84,75$\pm$0,11 & 85,02$\pm$0,20 \\ \hline
			$\{ K_4\}$      & 84,43$\pm$0,40 & 63,40$\pm$1,55 & 86,60$\pm$0,02 & 84,51$\pm$0,06 & 85,40$\pm$0,28 \\ \hline
			$\{ K_5\}$      & 83,47$\pm$0,11 & 64,18$\pm$3,88 & 86,57$\pm$0,02 & 83,73$\pm$0,10 & 85,63$\pm$0,22 \\ \hline
			$\{K_3, K_4\}$     & 85,44$\pm$0,24 & 81,29$\pm$2,82 & 86,58$\pm$0,02 & 85,85$\pm$0,13 & 85,80$\pm$0,20 \\ \hline
			$\{K_3, K_4, K_5\}$    & \textbf{85,50}$\pm$\textbf{0,23} & \textbf{82,49}$\pm$\textbf{0,48} & \textbf{86,63}$\pm$\textbf{0,03} & \textbf{85,88}$\pm$\textbf{0,15} & 85,56$\pm$0,33 \\ \hline
		\end{tabular}
		\end{center}
\end{table}

\begin{table}[hbt!]
	\caption{Total number of model parameters for selected pattern combinations and $\GNNs$ on the PATTERN data set.}
	\label{PATTERN_par}
	\begin{center}
	\begin{tabular}{|l|l|l|l|l|l|}
		\hline
		Pattern set $\mathcal{F}$ & $\mathsf{GAT}$    & $\mathsf{GCN}$    & $\mathsf{MoNet}$  & $\mathsf{GraphSage}$ & $\mathsf{GatedGCN}_{E,PE}$ \\ \hline\hline
	None     & 394\,632 & 362\,117 & 398\,921 & 386\,291    & 406\,403   \\ \hline
	$\{K_3\}$      & 394\,784 & 362\,263 & 399\,011 & 386\,399    & 406\,473   \\ \hline
	$\{ K_4\}$      & 394\,784 & 362\,263 & 399\,011 & 386\,399    & 406\,473   \\ \hline
	$\{ K_5\}$      & 394\,784 & 362\,263 & 399\,011 & 386\,399    & 406\,473   \\ \hline
	$\{K_3, K_4\}$     & 394\,936 & 362\,409 & 399\,101 & 386\,507    & 406\,543   \\ \hline
	$\{K_3, K_4, K_5\}$    & 395\,088 & 362\,555 & 399\,191 & 386\,615    & 406\,613   \\ \hline
	\end{tabular}
	\end{center}
\end{table}

\begin{table}[hbt!]
	\caption{Training times (in hours) and number of epochs for selected pattern combinations and $\GNNs$ on the PATTERN data set.}
	\label{PATTERN_time}
	\begin{center}	
\adjustbox{max width=\textwidth}{	\begin{tabular}{|l|l|l|l|l|l|l|l|l|l|l|}
		\hline
		Model:   & \multicolumn{2}{c|}{GAT} & \multicolumn{2}{c|}{$\mathsf{GCN}$} & \multicolumn{2}{c|}{$\mathsf{MoNet}$} & \multicolumn{2}{c|}{$\mathsf{GraphSage}$} & \multicolumn{2}{c|}{$\mathsf{GatedGCN}_{E,PE}$} \\ \hline\hline
		Pattern set $\mathcal{F}$ & Time       & Epochs      & Time       & Epochs      & Time        & Epochs       & Time          & Epochs         & Time            & Epochs           \\ \hline
		None     & 1,96       & 87          & 3,41       & 102         & 1,68        & 116          & 0,77          & 103            & 10,32           & 101              \\ \hline
		$\{K_3\}$      & 0,97       & 97          & 2,58       & 80          & 1,42        & 107          & 0,69          & 105            & 9,12            & 95               \\ \hline
		$\{ K_4\}$      & 0,90       & 90          & 2,68       & 80          & 1,46        & 106          & 0,67          & 95             & 9,47            & 94               \\ \hline
		$\{ K_5\}$      & 0,89       & 95          & 2,36       & 80          & 1,26        & 100          & 0,58          & 98             & 9,14            & 99               \\ \hline
		$\{K_3, K_4\}$     & 2,11       & 91          & 3,62       & 98          & 1,68        & 108          & 0,86          & 97             & 9,50            & 87               \\ \hline
		$\{K_3, K_4, K_5\}$    & 1,02       & 91          & 3,26       & 94          & 1,48        & 109          & 0,76          & 102            & 8,84            & 88               \\ \hline
	\end{tabular}}
	\end{center}
\end{table}

\end{document}